%%%%%%%%%%%%%%%%%%%%%%% file template.tex %%%%%%%%%%%%%%%%%%%%%%%%%
%
% This is a general template file for the LaTeX package SVJour3
% for Springer journals.          Springer Heidelberg 2010/09/16
%
% Copy it to a new file with a new name and use it as the basis
% for your article. Delete % signs as needed.
%
% This template includes a few options for different layouts and
% content for various journals. Please consult a previous issue of
% your journal as needed.
%
%%%%%%%%%%%%%%%%%%%%%%%%%%%%%%%%%%%%%%%%%%%%%%%%%%%%%%%%%%%%%%%%%%%
%
% First comes an example EPS file -- just ignore it and
% proceed on the \documentclass line
% your LaTeX will extract the file if required
% [arxiv_v2: filecontents  stripped, 193 chars]
\RequirePackage{fix-cm}
\documentclass[smallextended]{svjour3}       % onecolumn (second format)
\smartqed  % flush right qed marks, e.g. at end of proof
\usepackage{graphicx}
\usepackage{algorithm}
\usepackage[noend]{algorithmic}

\usepackage{float}

\usepackage{lipsum}
\usepackage{amsfonts}
\usepackage{amsmath} 
\usepackage{amsthm}
\usepackage{mathtools}
\usepackage[rgb,dvipsnames]{xcolor}
\usepackage{graphicx}
\graphicspath{{../figures/}}
\usepackage{float}
\usepackage{natbib}
\usepackage{url}
\usepackage{stmaryrd}
\usepackage{mathtools}
\DeclarePairedDelimiter{\ceil}{\lceil}{\rceil}

\newcommand{\frec}{\phi} 
\newcommand{\fout}{\psi}

 \usepackage{relsize}

\usepackage{etoolbox}
\usepackage{xparse}

%% Hacks

\newcommand{\R}{\Rbb}
\newcommand{\e}{\ten}
\newcommand{\bmat}[1]{\left[\begin{matrix}
#1
\end{matrix}\right]}

%% Macros to make visible comments while writing the paper

%\newcommand{\comment}[1]{}

%% Probability notations

%\newcommand{\Esp}{\mathbb{E}}

%% Vectors, matrices and tensors
\renewcommand{\vec}[1]{\ensuremath{\mathbf{#1}}}
\newcommand{\vecs}[1]{\ensuremath{\mathbf{\boldsymbol{#1}}}}
\newcommand{\mat}[1]{\ensuremath{\mathbf{#1}}}
\newcommand{\mats}[1]{\ensuremath{\mathbf{\boldsymbol{#1}}}}
\newcommand{\ten}[1]{\mat{\ensuremath{\boldsymbol{\mathcal{#1}}}}}

%% Tensor operations
% tensor times matrix
\newcommand{\ttm}[1]{\times_{#1}}
% tensor times vector
\newcommand{\ttv}[1]{\bullet_{#1}}
% matricisation
%\newcommand{\tenmat}[2]{\mat{#1}_{(#2)}}
\newcommand{\tenmat}[2]{\ten{#1}_{(#2)}}
\newcommand{\tenmatpar}[2]{(#1)_{(#2)}}
\newcommand{\tenmatgen}[2]{{(#1)}_{\langle\!\langle #2\rangle\!\rangle}}
% Generalized Trace
%\newcommand{\gentrace}[2]{\Tr[#1]_{#2}}

\newcommand{\TT}[1]{\llbracket #1 \rrbracket}

% sample number

\usepackage{pgffor}

% % all mathbb letters, e.g. \Rbb -> \mathbb{R}

\foreach \x in {A,...,Z}{%
\expandafter\xdef\csname \x bb\endcsname{\noexpand\ensuremath{\noexpand\mathbb{\x}}}
}

% all mathcal letters, e.g. \Acal -> \mathcal{A}
\foreach \x in {A,...,Z}{%
\expandafter\xdef\csname \x cal\endcsname{\noexpand\ensuremath{\noexpand\mathcal{\x}}}
}

% all tensor letters, e.g. \At -> \ten{A}
\foreach \x in {A,...,Z}{%
\expandafter\xdef\csname \x t\endcsname{\noexpand\ensuremath{\noexpand\ten{\x}}}
}

% all matrices letters, e.g. \Ab -> \mat{A}
\foreach \x in {A,...,Z}{%
\expandafter\xdef\csname \x b\endcsname{\noexpand\ensuremath{\noexpand\mat{\x}}}
}

%all matrices letters, e.g. \vb -> \vec{v}
% defining \sb messes with the \url command...
\foreach \x in {a,...,r}{%
\expandafter\xdef\csname \x b\endcsname{\noexpand\ensuremath{\noexpand\vec{\x}}}
}
\foreach \x in {t,...,z}{%
\expandafter\xdef\csname \x b\endcsname{\noexpand\ensuremath{\noexpand\vec{\x}}}
}

% all texttt digits, e.g. \t1 -> \texttt{1}
%\foreach \x in {0,...,9}{%
%t\expandafter\xdef\csname \x \endcsname{\noexpand\ensuremath{\noexpand\texttt{\x}}}
%}

% common matrices and tensors
\newcommand{\A}{\mat{A}}
\newcommand{\B}{\mat{B}}

\newcommand{\Hten}{\ten{H}}
\newcommand{\T}{\ten{T}}

\newcommand{\G}{\ten{G}}
\newcommand{\Y}{\ten{Y}}
\newcommand{\U}{\mat{U}}
\renewcommand{\H}{\mat{H}}

\newcommand{\X}{\mat{X}}
\newcommand{\I}{\mat{I}}

\newcommand{\M}{\mat{M}}

\newcommand{\V}{\mat{V}}
\renewcommand{\P}{\mat{P}}
\renewcommand{\S}{\mat{S}}
\renewcommand{\v}{\vec{v}}

\newcommand{\x}{\vec{x}}
% adding a tilde for input matrix and output tensor

\newcommand{\y}{\vec{y}}

\newcommand{\Pref}{P}
\newcommand{\Suff}{S}

%% Math environments
\newtheorem*{theorem*}{Theorem}
\newtheorem*{corollary*}{Corollary}%
\newtheorem*{proposition*}{Proposition}%
\ifcsdef{theorem}{}{
\newtheorem{theorem}{Theorem}%
\newtheorem{proposition}{Proposition}%
\newtheorem{definition}{Definition}%
}
\newtheorem*{pbm*}{Problem}%
\newtheorem*{algo*}{Algorithm}%

%% Products
\newcommand{\kron}{\otimes}

%% Miscellaneous math stuff
\DeclareMathOperator*{\Tr}{Tr} %trace

\DeclareMathOperator*{\argmin}{arg\,min}
\newcommand{\vectorize}[1]{\mathrm{vec}(#1)}

\DeclareMathOperator*{\rank}{rank}
\newcommand{\norm}[1]{\|#1\|}

\newcommand{\bigo}[1]{\mathcal{O}\left(#1\right)}

\newcommand{\pinv}{^\dagger}
\newcommand{\inv}{^{-1}}
\newcommand{\invtop}{^{-\top}}

%%% WA / WTA
\newcommand{\nstates}{n}

% String automata
\newcommand{\szerosymbol}{\alpha}
\newcommand{\szero}{\vecs{\szerosymbol}}

\newcommand{\sinfsymbol}{\omega}
\newcommand{\sinf}{\vecs{\sinfsymbol}}

\DeclareDocumentCommand{\wa}{  O{A} O{\szero} O{\sinf} }%
{(#2,\{\mat{#1}^\sigma\}_{\sigma\in\Sigma},#3)}
\DeclareDocumentCommand{\waR}{  O{A} O{\Rbb^\nstates} O{\szero} O{\sinf} }%
{(#2,#3,\{\mat{#1}^\sigma\}_{\sigma\in\Sigma},#4)}

\newcommand{\vvsinfsymbol}{\Omega}
\newcommand{\vvsinf}{\mats{\vvsinfsymbol}}
\DeclareDocumentCommand{\vvwa}{  O{A} O{\szero} O{\vvsinf} }%
{(#2,\{\mat{#1}^\sigma\}_{\sigma\in\Sigma},#3)}

%\newcommand{\cev}[1]{\reflectbox{\ensuremath{\vecarrow{\reflectbox{\ensuremath{#1}}}}}}

% Tree automata
\newcommand{\tzerosymbol}{\alpha}
\newcommand{\tzero}{\vecs{\tzerosymbol}}

\newcommand{\tinfsymbol}{\omega}
\newcommand{\tinf}{\vecs{\tinfsymbol}}

  % shorcuts from the aistats paper

\DeclareDocumentCommand{\wta}{ O{T} O{\Rbb^\nstates} O{\tzero} O{\tinf} O{\Fcal}}%
{(#2,#3,\{\ten{#1}^g\}_{g\in #5_{\geq 1}},\{#4^\sigma\}_{\sigma\in #5_0})}

\DeclareDocumentCommand{\trees}{g}{\IfNoValueTF{#1}{\mathfrak{T}}{\mathfrak{T}_{#1}}}
\DeclareDocumentCommand{\contexts}{g}{\IfNoValueTF{#1}{\mathfrak{C}}{\mathfrak{C}_{#1}}}
%\DeclareDocumentCommand{\contexts}{O{ }}{\mathfrak{C}_{#1}}

%%% GWM

\newcommand{\gwmprod}{\diamond}

\DeclareDocumentCommand{\gwm}{  O{M} O{\Fbb^\nstates}}{(#2, \{\ten{#1}^x\}_{x\in\Sigma})}
\DeclareDocumentCommand{\gwmcirc}{  O{M} O{\Rbb^\nstates}}{(#2, \{\mat{#1}^\sigma\}_{\sigma\in\Sigma})}
\DeclareDocumentCommand{\dgwm}{ O{M} O{\Fbb^\nstates}}{(#2, \{\ten{#1}^x\}_{x\in\Sigma},\gwmprod)}

%%% Tensor-Valued Regression

%% Tweaks

%% Mathematical sets with names
% all mathbb letters, e.g. \Rbb -> \mathbb{R}
\usepackage{pgffor}
\foreach \x in {A,...,Z}{%
\expandafter\xdef\csname \x bb\endcsname{\noexpand\ensuremath{\noexpand\mathbb{\x}}}
}

% all mathcal letters, e.g. \Acal -> \mathcal{A}
\foreach \x in {A,...,Z}{%
\expandafter\xdef\csname \x cal\endcsname{\noexpand\ensuremath{\noexpand\mathcal{\x}}}
}

% all tensor variables, e.g. \Aten -> \ten{A}
\usepackage{pgffor}
\foreach \x in {A,...,Z}{%
\expandafter\xdef\csname \x ten\endcsname{\noexpand\ensuremath{\noexpand\ten{\x}}}
}

% all matrix variables, e.g. \Amat -> \mat{A}
\foreach \x in {A,...,Z}{%
\expandafter\xdef\csname \x mat\endcsname{\noexpand\ensuremath{\noexpand\mat{\x}}}
}

% all vector variables, e.g. \avec -> \vec{a}
\foreach \x in {A,...,Z}{%
\expandafter\xdef\csname \x vec\endcsname{\noexpand\ensuremath{\noexpand\mat{\x}}}
}

\newcommand{\h}{\vec{h}}
\newcommand{\ie}{i.e.\ }
\newcommand{\eg}{e.g.\ }
\renewcommand{\e}{\vec{e}}

\usepackage[symbol]{footmisc}

\usepackage{caption}
\usepackage{subcaption}
\usepackage[utf8]{inputenc}
\usepackage{tikz,amsmath,siunitx}
\usetikzlibrary{positioning}
\usepackage{graphicx}
\usetikzlibrary{calc}

\usepackage[normalem]{ulem}
%\usetikzlibrary{arrows,automata}

\usetikzlibrary{decorations.pathreplacing}
\usetikzlibrary{cd,arrows,shapes,shapes.misc,intersections}

%
% \usepackage{mathptmx}      % use Times fonts if available on your TeX system
%
% insert here the call for the packages your document requires
%\usepackage{latexsym}
% etc.
%
% please place your own definitions here and don't use \def but
% \newcommand{}{}
%
% Insert the name of "your journal" with
% \journalname{myjournal}
%

\begin{document}

\title{Connecting Weighted Automata, Tensor Networks and Recurrent Neural Networks through Spectral Learning%\thanks{Grants or other notes
%about the article that should go on the front page should be
%placed here. General acknowledgments should be placed at the end of the article.}
}
\subtitle{}%Do you have a subtitle?\\ If so, write it here}

\titlerunning{Weighted Automata, Recurrent Neural Networks and Tensor Networks}        % if too long for running head

\author{Tianyu Li \and Doina Precup \and Guillaume Rabusseau %etc.
}
\authorrunning{Li, Precup and Rabusseau} % if too long for running head

% \institute{F. Author \at
%               first address \\
%               Tel.: +123-45-678910\\
%               Fax: +123-45-678910\\
%               \email{fauthor@example.com}           %  \\
% %             \emph{Present address:} of F. Author  %  if needed
%           \and
%           S. Author \at
%               second address
% }

\date{Received: date / Accepted: date}
% The correct dates will be entered by the editor

\maketitle

\newcommand{\guillaume}[1]{{\color{blue} #1 }}
\newcommand{\tianyu}[1]{{\color{orange} #1 }}

% \guillaume{
% \begin{itemize}
%   \item Additional xp on running time spectral vs TT spectral
%   \item adding a new subsubsection in the synthetic xp for running times (figure 6 + new xp)
   
%   \item Re-read carefully both reviews to check for additional things to change.
%   \item discussion on lambda symbols (move the previous paragraph to somewhere else)
   
%   \item if we show prop rank condition then remove footnote on lifting the condition
%   \item 
%   \item rephrase their contributions to make clear that consistency is a consequence of the connection, not a property achieved here
%   \item more detailed discussion about related work (?)
%   \item formal proof of thm 2
%   \item - Minor change. Few typos are present in section 3 :
%   p16, line 36 : the second one (instead of the third one) ,
%   p14, line 45 : a recognizable function 
%   \item Some typos: 
% - Line 25: "represents" should be "represent".
% - Line 45: "functions" should be "function"
% \item add [1] Merrill, W., Weiss, G., Goldberg, Y., Schwartz, R., Smith, N.A.,  Yahav, E. (2020). A Formal Hierarchy of RNN Architectures. ACL.
% \item we ran linear 2-RNN with backprop but it was worse than LSTM so we did not include it.
% \item Cite Araidna's continuous paper and MNatthias' basis selection paper.
% \end{itemize}
% }

% }

\begin{abstract}
In this paper, we present connections between three models used in different research fields: weighted finite automata~(WFA) from formal languages and linguistics, recurrent neural networks used in machine learning, and tensor networks which encompasses a set of optimization techniques for high-order tensors used in quantum physics and numerical analysis. We first present an intrinsic relation between WFA and the tensor train decomposition, a particular form of tensor network. This relation allows us to exhibit a novel low rank structure of the Hankel matrix of a function computed by a WFA and to design an efficient spectral learning algorithm leveraging this structure to scale the algorithm up to very large Hankel matrices.
We then unravel a fundamental connection between WFA and second-order
recurrent neural networks~(2-RNN): in the case of sequences of discrete symbols, WFA and 2-RNN with linear activation
functions are expressively equivalent.  
% \revv{\sout{Furthermore, we introduce the first provable learning algorithm for linear 2-RNN defined over sequences of continuous input vectors.} 
Leveraging this equivalence result combined with the classical spectral learning algorithm for weighted automata, we introduce the first provable learning algorithm for linear 2-RNN defined over sequences of continuous input vectors.
This algorithm relies on estimating low rank sub-blocks of the  Hankel tensor, from which the parameters of a linear 2-RNN can be provably recovered. 
The performances of the proposed learning algorithm are assessed in a simulation study on both synthetic and real-world data.
%\keywords{\ Weighted Automata \and Spectral Learning \and Recurrent Neural Networks \and Tensor Networks \and Tensor Train Decomposition}
% \PACS{PACS code1 \and PACS code2 \and more}
% \subclass{MSC code1 \and MSC code2 \and more}
\end{abstract}

\section{Introduction}

Many tasks in natural language processing, computational biology, reinforcement learning, and time series analysis rely on learning
with sequential data, \ie estimating functions defined over sequences of observations from training data.  
Weighted  finite automata~(WFA) and recurrent neural networks~(RNN) are two powerful and flexible classes of models which can efficiently represent such functions.
On the one hand, WFA are tractable, they encompass a wide range of machine learning models~(they can for example compute any probability distribution defined by a hidden Markov 
model~(HMM)~\citep{denis2008rational} and can model the transition and observation behavior of
partially observable Markov decision processes~\citep{thon2015links}) and they offer appealing theoretical guarantees. In particular, 
the so-called 
\emph{spectral
methods} for learning HMMs~\citep{hsu2009spectral}, WFA~\citep{bailly2009grammatical,balle2014spectral} and related models~\citep{glaude2016pac,boots2011closing}, 
provide an alternative to Expectation-Maximization based algorithms that is both computationally efficient and 
consistent. 
%Spectral learning algorithms have led to competitive results in the fields of \eg natural language
%processing~\citep{cohen2014experiments,balle2013thesis} and robotics~\citep{boots2011closing}. 
On the other hand, RNN are 
remarkably expressive models --- they can represent any computable function~\citep{siegelmann1992computational} --- and they have successfully
tackled many practical problems in speech and audio recognition~\citep{graves2013speech,mikolov2011extensions,gers2000learning}, but
their theoretical  analysis is difficult. Even though recent work provides interesting results on their
expressive power~\citep{khrulkov2018expressive,yu2017long} as well as alternative training algorithms coming with learning guarantees~\citep{sedghi2016training},
the theoretical understanding of RNN is still limited.

At the same time, tensor networks are a generalization of tensor decomposition techniques, where complex operations between tensors are represented in a simple diagrammatic notation, allowing one to intuitively represent intricate ways to decompose a high-order tensor into lower-order tensors acting as \emph{building block}. The term \emph{tensor networks} also encompasses a set of optimization techniques to efficiently tackle optimization problems in very high-dimensional spaces, where the optimization variable is represented as a tensor network and the optimization process is carried out with respect to the \emph{building blocks} of the tensor network. As an illustration, such optimization techniques make it possible to efficiently approximate the leading eigen-vectors of matrices of size $2^N\times 2^N$ where $N$ can be as large as 50~\citep{holtz2012alternating}. Tensor networks have emerged in the quantum physics community to model many-body systems~\citep{Orus:arXiv1306.2164,biamonte2017tensor} and have also been used in numerical analysis as a mean to solve high-dimensional differential equations~\citep{oseledets2011tensor,lubich2013dynamical} and to design efficient algorithms for big data analytics~\citep{cichocki2016tensor}. Tensor networks have recently been used in the context of machine learning to compress neural networks~\citep{novikov2015tensorizing,novikov2014putting,ma2019tensorized,yang2017tensor}, to design new approaches and optimization techniques borrowed from the quantum physics literature for supervised and unsupervised learning tasks~\citep{stoudenmire2016supervised,han2018unsupervised,miller2020tensor}, as new theoretical tools to understand the expressiveness of neural networks~\citep{cohen2016expressive,khrulkov2018expressive} and for image completion problems~\citep{yang2017tensor,wang2017efficient} among others.

In this work, we bridge a gap between these three classes of models: weighted automata, tensor networks and recurrent neural networks. We first exhibit an intrinsic relation between the computation of a weighted automata and the tensor train decomposition, a particular form of tensor network~(also known as matrix product states in the quantum physics community). While such a connection has been sporadically noticed previously, we demonstrate how this relation implies a low tensor train structure of the so-called Hankel matrix of a function computed by a WFA. The Hankel matrix of a function is at the core of the spectral learning algorithm for WFA. This algorithm relies on the fact that the~(matrix) rank of the Hankel matrix is directly related to the size of a WFA computing the function it represents. We show that, beyond being low rank, the Hankel matrix of a function computed by a WFA can be seen as a block matrix where each block is a matricization of a tensor with low tensor train rank. Building upon this result, we design an efficient implementation of the spectral learning algorithm that leverages this tensor train structure. When the Hankel matrices needed for the spectral algroithm are given in the tensor train format, the time complexity of the algorithm we propose is exponentially smaller~(w.r.t. the size of the Hankel matrix) than the one of the classical spectral learning algorithm. 

We then unravel a fundamental connection between WFA and second-order RNN~(2-RNN): 
\textit{when considering input sequences of discrete symbols, 2-RNN with linear activation functions and WFA are one and the same}, \ie they are expressively
equivalent and there exists a one-to-one mapping between the two classes~(moreover, this mapping conserves model sizes).  While connections between
finite state machines~(\eg deterministic finite automata)  and recurrent neural networks have been noticed and investigated in the past~(see \eg \citep{giles1992learning,omlin1996constructing}), to the
best of our knowledge this is the first time that such a rigorous equivalence between linear 2-RNN and \emph{weighted} automata is explicitly formalized. 
More precisely, we pinpoint exactly the class of recurrent neural architectures to which weighted automata are equivalent, namely second-order RNN with
linear activation functions.
This result naturally leads to the observation that linear 2-RNN are a natural generalization of WFA~(which take sequences of \emph{discrete} observations as
inputs) to sequences of \emph{continuous vectors}, and raises the question of whether the spectral learning algorithm for WFA can be extended to linear 2-RNN. 
The third contribution of this paper is to show that the answer is in the positive: building upon the classical spectral learning algorithm for WFA~\citep{hsu2009spectral,bailly2009grammatical,balle2014spectral} and its recent extension to vector-valued functions~\citep{rabusseau2017multitask}, \emph{we propose the first provable learning algorithm for second-order RNN with linear activation functions}.
Our learning algorithm relies on estimating  sub-blocks of the so-called Hankel tensor, from which the parameters of a 2-linear RNN can be recovered
using basic linear algebra operations. One of the key technical difficulties in designing this algorithm resides in estimating
these sub-blocks from training data where the inputs are sequences of \emph{continuous} vectors. 
We leverage multilinear properties of linear 2-RNN and the tensor train structure of the Hankel matrix to perform this estimation efficiently using matrix sensing and tensor recovery techniques. In particular, we show that the Hankel matrices needed for learning can be estimated directly in the tensor train format, which allows us to use the efficient spectral learning algorithm in the tensor train format discussed previously.

We validate our theoretical findings in a simulation study on synthetic and real world data where we experimentally compare  
different recovery methods and investigate the robustness of our algorithm to noise.
%and rank mis-specification. 
We also show that refining the estimator returned
by our algorithm using stochastic gradient descent can lead to significant improvements.

%\guillaume{I am on the edge about this last sentence... The purpose of this paper is not at all to outperform LSTMs so this last sentence may put the reviewers in
%the wrong mindset.}

\paragraph{Summary of contributions.} 
We present \emph{novel connections between WFA and the tensor train decomposition}~(Section~\ref{sec:TT.struct.hankel}) allowing us to design an \emph{highly efficient implementation of the spectral learning algorithm in the tensor train format}~(Section~\ref{sec:spectral.learning.TT}).
We formalize a \emph{strict equivalence between weighted automata and second-order RNN with linear activation
functions}~(Section~\ref{sec:WFA.2RNN.equivalence}), showing that linear 2-RNN can be seen as a natural extension of (vector-valued) weighted automata for input sequences of \emph{continuous} vectors. We then
propose a \emph{consistent learning algorithm for linear 2-RNN}~(Section~\ref{sec:spec.learn.2RNN}).
The relevance of our contributions can be seen from three perspectives.
First, while learning feed-forward neural networks with linear activation functions is a trivial task (it reduces to linear or reduced-rank regression), this
is not at all the case for recurrent architectures with linear activation functions; to the best of our knowledge, our algorithm is the \emph{first consistent learning algorithm
for the class of functions computed by linear second-order recurrent networks}. Second, from the perspective of learning weighted automata, we propose a  natural extension of WFA to continuous inputs and \emph{our learning algorithm addresses the long-standing limitation of the spectral learning method to discrete inputs}. Lastly, by connecting the spectral learning algorithm for WFA to recurrent neural networks on one side, and tensor networks on the other, our work opens the door to leveraging highly efficient optimization techniques for large scale tensor problems used in the quantum physics community for designing new learning learning algorithms for both linear and non-linear sequential models, as well as offering new tools for the theoretical analysis of these models.

\paragraph{Related work.}
Combining the spectral learning algorithm for WFA with matrix completion techniques~(a problem which is closely related to matrix sensing) has
been theoretically investigated in~\citep{balle2012spectral}. An extension of probabilistic transducers to continuous inputs~(along with a spectral learning algorithm) has been proposed in~\citep{recasens2013spectral}. The model considered in this work is closely related to the continuous extension of WFA we consider here but the learning algorithm proposed in~\citep{recasens2013spectral} is designed for~(and limited to) stochastic transducers, whereas we consider arbitrary functions computed by linear 2-RNN. 
The connections between tensors  and RNN have been previously leveraged to study the expressive power of RNN in~\citep{khrulkov2018expressive}
and to achieve model compression in~\citep{yu2017long,yang2017tensor,tjandra2017compressing}. 
Exploring relationships between RNN and automata has recently received a renewed interest~\citep{peng2018rational,chen2018recurrent,li2018nonlinear,merrill2020formal}. In particular, such connections have been explored for interpretability purposes~\citep{weiss2018extracting,ayache2018explaining} and the ability of RNN to learn classes of formal languages
has been investigated in~\citep{avcu2017subregular}.  Connections between the tensor train decomposition and WFA have been previously noticed in~\citep{critch2013algebraic,critch2014algebraic,rabusseau2016thesis}. However, to the best of our knowledge, this is the first time that the tensor-train structure of the Hankel matrix of a function computed by a WFA is noticed and leveraged to design an efficient spectral learning algorithm for WFA. Other approaches have been proposed to scale the spectral learning algorithm to large datasets, notably by identifying a small basis of informative prefixes and suffixes to build the Hankel matrices~\citep{quattoni2017maximum}.
The predictive state RNN model introduced in~\citep{downey2017predictive} is closely related to 2-RNN and the authors propose
to use the spectral learning algorithm for predictive state representations to initialize a gradient based algorithm; their approach however comes without
theoretical guarantees. Lastly, a provable algorithm for RNN relying on the tensor method of moments has been proposed in~\citep{sedghi2016training} but
it is limited to first-order RNN with quadratic activation functions~(which do not encompass linear 2-RNN).

\section{Preliminaries}
In this section, we first present basic notions of tensor algebra and tensor networks before introducing  second-order recurrent neural 
network, 
weighted finite automata and the spectral learning algorithm.
We start by introducing some notations.
For any integer $k$ we use $[k]$ to denote the set of integers from $1$ to $k$. We use
$\lceil l \rceil$ to denote the smallest integer greater or equal to $l$.
For any set $\Suff$, we denote by $\Suff^*=\bigcup_{k\in\Nbb}\Suff^k$ the set of all
finite-length sequences of elements of $\Suff$~(in particular, 
$\Sigma^*$ will denote the set of strings on a finite alphabet $\Sigma$). 
We use lower case bold letters  for vectors (\eg $\vec{v} \in \Rbb^{d_1}$),
upper case bold letters for matrices (\eg $\M \in \Rbb^{d_1 \times d_2}$) and
bold calligraphic letters for higher order tensors (\eg $\T \in \Rbb^{d_1
\times d_2 \times d_3}$). We use $\e_i$ to denote the $i$th canonical basis 
vector of $\R^d$~(where the dimension $d$ will always appear clearly from context).
The $d\times d$ identity matrix will be written as $\I_d$.
The $i$th row (resp. column) of a matrix $\M$ will be denoted by
$\M_{i,:}$ (resp. $\M_{:,i}$). This notation is extended to
slices of a tensor in the straightforward way.
If $\vec{v} \in \Rbb^{d_1}$ and $\vec{v}' \in \Rbb^{d_2}$, we use $\vec{v} \kron \vec{v}' \in \Rbb^{d_1
\cdot d_2}$ to denote the Kronecker product between vectors, and its
straightforward extension to matrices and tensors.
Given a matrix $\M \in \Rbb^{d_1 \times d_2}$, we use $\vectorize{\M} \in \Rbb^{d_1
\cdot d_2}$ to denote the column vector obtained by concatenating the columns of
$\M$. The inverse of $\M$ is denoted by $\M\inv$, its Moore-Penrose pseudo-inverse
by $\M\pinv$, and the transpose of its inverse by $\M\invtop$; the Frobenius norm
is denoted by $\norm{\M}_F$ and the nuclear norm by $\norm{\M}_*$.

\subsection{Tensors and Tensor Networks}
We first recall basic definitions of tensor algebra; more details can be found
in~\citep{Kolda09}. 
A \emph{tensor} $\T\in \Rbb^{d_1\times\cdots \times d_p}$ can simply be seen
as a multidimensional array $(\T_{i_1,\cdots,i_p}\ : \ i_n\in [d_n], n\in [p])$. The
\emph{mode-$n$} fibers of $\T$ are the vectors obtained by fixing all
indices except  the $n$th one, \eg $\T_{:,i_2,\cdots,i_p}\in\Rbb^{d_1}$.
The \emph{$n$th mode matricization} of $\T$ is the matrix having the
mode-$n$ fibers of $\T$ for columns and is denoted by
$\tenmat{T}{n}\in \Rbb^{d_n\times d_1\cdots d_{n-1}d_{n+1}\cdots d_p}$.
The vectorization of a tensor is defined by $\vectorize{\T}=\vectorize{\tenmat{T}{1}}$.
%The \emph{inner product} between two tensors $\ten{S}$ and $\ten{T}$ (of the same size)
%is defined by $\langle \ten{S},\T\rangle = \langle\vectorize{\ten{S}},\vectorize{\T}\rangle$
%and the Frobenius norm is defined by $\norm{\T}^2_F = \langle \T,\T\rangle$.
In the following $\T$ always denotes a tensor of size $d_1\times\cdots \times d_p$.

%\paragraph{mode-$n$ product.} 
The \emph{mode-$n$ matrix product} of the tensor $\T$ and a matrix
$\X\in\Rbb^{m\times d_n}$ is a tensor  denoted by $\T\ttm{n}\X$. It is 
of size $d_1\times\cdots \times d_{n-1}\times m \times d_{n+1}\times
\cdots \times d_p$ and is defined by the relation 
$\Y = \T\ttm{n}\X \Leftrightarrow \tenmat{Y}{n} = \X\tenmat{T}{n}$.
The \emph{mode-$n$ vector product} of the tensor $\T$ and a vector
$\vec{v}\in\Rbb^{d_n}$ is a tensor defined by $\T\ttv{n}\vec{v} = \T\ttm{n}\vec{v}^\top
\in \Rbb^{d_1\times\cdots \times d_{n-1}\times d_{n+1}\times
\cdots \times d_p}$.
%Given tensors $\ten{S}$ and $\T$ and matrices $\X,\mat{A}$ and $\mat{B}$,
%it is easy to check that
%$\langle \ten{T}\ttm{n}\X, \ten{S} \rangle= \langle \ten{T},\ten{S}\ttm{n}\X^\top\rangle$ and 
%$(\T\ttm{n}\mat{A})\ttm{n}\mat{B} = \T\ttm{n}\mat{BA}$ (where we assumed conforming
%dimensions of the tensors and matrices).
%
%\paragraph{Multilinear rank.} 
%
It is easy to check that the $n$-mode product satisfies $(\T\ttm{n}\mat{A})\ttm{n}\mat{B} = \T\ttm{n}\mat{BA}$
where we assume compatible dimensions of the tensor $\T$ and
the matrices $\A$ and $\B$.

\emph{Tensor network diagrams} allow one to represent complex operations on tensors in a graphical and intuitive way. A tensor network is simply a graph where nodes represent tensors, and edges represent contractions between tensor modes, i.e. a summation over an index shared by two tensors. In a tensor network, the arity of a vertex~(i.e. the number of \emph{legs} of a node) corresponds to the order of the tensor: a node with one leg represents a vector, a node with two legs represents a matrix, and a node with three legs represents  a 3rd order tensor~(see Figure~\ref{fig:tensor.network.intro}). We will sometimes add indices to legs of a tensor network to refer to its components or sub-tensors.  For example, the following tensor networks represent a matrix $\A\in\R^{m\times n}$, the $i$th row of $\A$ and the component $\A_{i,j}$ respectively: 
\begin{center}
\begin{tikzpicture}
\tikzset{tensor/.style = {minimum size = 0.5cm,shape = circle,thick,draw=black,fill=blue!60!green!40!white,inner sep = 0pt}, edge/.style   = {thick,line width=.4mm},every loop/.style={}}
\def\x{0}
\def\y{0}
\node[tensor] (A) at (\x,\y) {$\Ab$};
\draw[edge] (A) -- (\x+0.75,\y); 
\draw[edge] (A) -- (\x-0.75,\y); 
\node[draw=none] () at (\x-0.5,\y+0.2) {\textcolor{gray}{$m$}};
\node[draw=none] () at (\x+0.5,\y+0.2) {\textcolor{gray}{$n$}};

\def\x{3.5}
\def\y{0}
\node[tensor] (A) at (\x,\y) {$\Ab$};
\draw[edge] (A) -- (\x+0.75,\y); 
\draw[edge] (A) -- (\x-0.75,\y); 
\node[draw=none] () at (\x-0.85,\y) {$i$};

\def\x{7}
\def\y{0}
\node[tensor] (A) at (\x,\y) {$\Ab$};
\draw[edge] (A) -- (\x+0.75,\y); 
\draw[edge] (A) -- (\x-0.75,\y); 
\node[draw=none] () at (\x-0.85,\y) {$i$};
\node[draw=none] () at (\x+0.85,\y) {$j$};
\end{tikzpicture}
\end{center}

\begin{figure}
    \centering
    \begin{tikzpicture}
\tikzset{tensor/.style = {minimum size = 0.4cm,shape = circle,thick,draw=black,fill=blue!60!green!40!white,inner sep = 1pt}, edge/.style   = {thick,line width=.4mm}}

\def\x{0}
\node[tensor] (v) at (\x,0) {$\vb$};
\draw[edge] (v) -- (\x,-0.6);
\node[draw=none] () at (\x+0.15,-0.4) {\textcolor{gray}{$d$}};

\def\x{3}
\node[tensor] (M) at (\x,0) {$\Mb$};
\draw[edge] (M) -- (\x-0.6,0);
\draw[edge] (M) -- (\x+0.6,0);
\node[draw=none] () at (\x-0.4,0.15) {\textcolor{gray}{$m$}};
\node[draw=none] () at (\x+0.4,0.15) {\textcolor{gray}{$n$}};

\def\x{6}
\node[tensor] (T) at (\x,0) {$\Tt$};
\draw[edge] (T) -- (\x-0.6,0);
\draw[edge] (T) -- (\x+0.6,0);
\draw[edge] (T) -- (\x,-0.6);
\node[draw=none] () at (\x-0.4,0.2) {\textcolor{gray}{$d_1$}};
\node[draw=none] () at (\x+0.2,-0.4) {\textcolor{gray}{$d_2$}};
\node[draw=none] () at (\x+0.4,0.2) {\textcolor{gray}{$d_3$}};
\end{tikzpicture}
    \caption{ \ Tensor network representation of a vector $\vb\in\Rbb^d$, a matrix $\Mb\in\Rbb^{m\times n}$ and a 
    tensor $\Tt\in\Rbb^{d_1\times d_2\times d_3}$. The gray labels over the edges indicate the dimensions of the corresponding modes of the tensors (such labels will only be sporadically displayed when necessary to avoid confusion).}
    \label{fig:tensor.network.intro}
\end{figure}
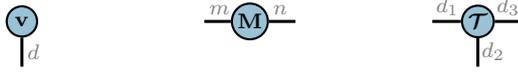

Connecting two legs in a tensor network represents a contraction over the corresponding indices. Consider the following simple tensor network with two nodes:
\begin{center}
\begin{tikzpicture}
\tikzset{tensor/.style = {minimum size = 0.5cm,shape = circle,thick,draw=black,fill=blue!60!green!40!white,inner sep = 0pt}, edge/.style   = {thick,line width=.4mm},every loop/.style={}}
\def\x{0}
\def\y{0}
\node[tensor] (A) at (\x,\y) {$\Ab$};
\node[tensor] (x) at (\x+1,\y) {$\xb$};
\draw[edge] (A) -- (x); 
\draw[edge] (A) -- (\x-0.75,\y); 
\node[draw=none] () at (\x-0.5,\y+0.2) {\textcolor{gray}{$m$}};
\node[draw=none] () at (\x+0.5,\y+0.2) {\textcolor{gray}{$n$}};
\end{tikzpicture}
\end{center}
The first node represents a matrix $\Ab\in\R^{m\times n}$ and the second one a vector $\x\in\R^{n}$. Since this tensor network has one dangling leg~(i.e. an edge which is not connected to any other node), it represents a vector. The edge between the second leg of $\Ab$ and the leg of $\x$ corresponds to a summation over the second mode of $\Ab$ and the first mode of $\x$,. Hence, the resulting tensor network represents the classical matrix-product, which can be seen by calculating the $i$th component of this tensor network:

\begin{center}
\begin{tikzpicture}
\tikzset{tensor/.style = {minimum size = 0.4cm,shape = circle,thick,draw=black,fill=blue!60!green!40!white,inner sep = 0pt}, edge/.style   = {thick,line width=.4mm},every loop/.style={}}
\def\x{0}
\def\y{0}
\node[tensor] (A) at (\x,\y) {$\Ab$};
\node[tensor] (x) at (\x+0.75,\y) {$\xb$};
\draw[edge] (A) -- (x); 
\draw[edge] (A) -- (\x-0.5,\y); 
\node[draw=none] () at (\x-0.6,\y) {$i$};
\node[draw=none] at (\x+3,\y) {$=\sum_j\Ab_{ij}\xb_j = (\Ab\xb)_{i}$};
\end{tikzpicture}
\end{center}
Other examples of tensor network representations of common operations on vectors, matrices and tensors can be found in Figure~\ref{fig:tensor.network.commonoperations}.

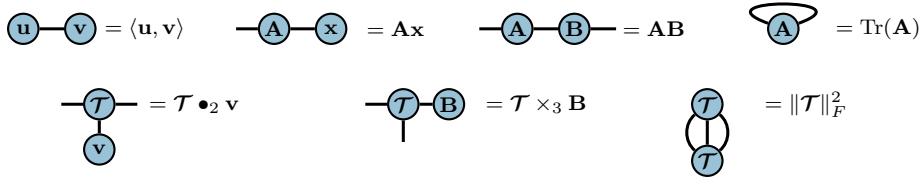
\begin{figure}
    \centering
    \begin{tikzpicture}
\tikzset{tensor/.style = {minimum size = 0.4cm,shape = circle,thick,draw=black,fill=blue!60!green!40!white,inner sep = 0pt}, edge/.style   = {thick,line width=.4mm},every loop/.style={}}

\def\x{-1}
\def\y{0}
\node[tensor] (v) at (\x+0.75,\y) {$\vb$};
\node[tensor] (u) at (\x,\y) {$\ub$};
\draw[edge] (u) -- (v); 
\node[draw=none] at (\x+1.6,\y) {$=\langle \ub,\vb\rangle$};

\def\x{2.3}
\def\y{0}
\node[tensor] (A) at (\x,\y) {$\Ab$};
\node[tensor] (x) at (\x+0.75,\y) {$\xb$};
\draw[edge] (A) -- (x); 
\draw[edge] (A) -- (\x-0.5,\y); 
\node[draw=none] at (\x+1.6,\y) {$=\Ab\xb$};

\def\x{5.5}
\def\y{0}
\node[tensor] (A2) at (\x,\y) {$\Ab$};
\node[tensor] (x2) at (\x+0.75,\y) {$\Bb$};
\draw[edge] (A2) -- (x2); 
\draw[edge] (A2) -- (\x-0.5,\y); 
\draw[edge] (x2) -- (\x+1.3,\y); 
\node[draw=none] at (\x+1.8,\y) {$=\Ab\Bb$};

\def\x{9}
\def\y{0}
\node[tensor] (A) at (\x,\y) {$\Ab$};
\path  (A)  edge [loop above,edge,in=20,out=160,distance=1cm]  (A);
\node[draw=none] at (\x+1.25,\y) {$=\Tr(\Ab)$};

\def\x{0}
\def\y{-1}
\node[tensor] (T) at (\x,\y) {$\Tt$};
\node[tensor] (v2) at (\x,\y-0.6) {$\vb$};
\draw[edge] (T) -- (\x-0.5,\y);
\draw[edge] (T) -- (\x+0.5,\y);
\draw[edge] (T) -- (v2);
\node[draw=none] at (\x+1.25,\y) {$=\Tt\ttv{2}\vb$};

\def\x{4}
\def\y{-1}
\node[tensor] (T2) at (\x,\y) {$\Tt$};
\node[tensor] (B) at (\x+0.6,\y) {$\Bb$};
\draw[edge] (T2) -- (\x-0.5,\y);
\draw[edge] (T2) -- (B);
\draw[edge] (T2) -- (\x,\y-0.5);
\node[draw=none] at (\x+1.75,\y) {$=\Tt\ttm{3}\Bb$};

\def\x{8}
\def\y{-1}
\node[tensor] (T3) at (\x,\y) {$\Tt$};
\node[tensor] (T4) at (\x,\y-0.75) {$\Tt$};
\draw[edge] (T3) -- (T4);
\path  (T3)  edge [bend left=50,edge]  (T4);
\path  (T4)  edge [bend left=50,edge]  (T3);
\node[draw=none] at (\x+1.3,\y) {$=\|\Tt\|_F^2$};

\end{tikzpicture}

    \caption{ \ Tensor network representation of common operation on vectors, matrices and tensors.}
    \label{fig:tensor.network.commonoperations}
\end{figure}

Given strictly positive integers $n_1,\cdots, n_k$ satisfying
$\sum_i n_i = p$, we use the notation $\tenmatgen{\T}{n_1,n_2,\cdots,n_k}$ to denote the $k$th order tensor 
obtained by reshaping $\T\in\R^{d_1\times \cdots\times d_p}$ into a tensor\footnote{Note that the specific ordering used to perform matricization, vectorization
and such a reshaping is not relevant as long as it is consistent across all operations.} of size 
$$(\prod_{i_1=1}^{n_1} d_{i_1}) \times (\prod_{i_2=1}^{n_2} d_{n_1 + i_2}) \times \cdots \times (\prod_{i_k=1}^{n_k} d_{n_1+\cdots+n_{k-1} + i_k}).$$
For example, for a tensor $\At$ of size $2\times 3\times 4\times 5\times 6$, the 3rd order tensor $\tenmatgen{\At}{2,1,2}$ is obtained by grouping the first two modes and the last two modes respectively, to obtain a tensor of size $6\times 4 \times 30$. This reshaping operation is related to vectorization and matricization by the following relations: $\tenmatgen{\T}{p} = \vectorize{\T}$ and $\tenmatgen{\T}{1,p-1} = \tenmat{\T}{1}$.

A rank $R$ \emph{tensor train (TT) decomposition}~\citep{oseledets2011tensor} of a tensor 
$\T\in\R^{d_1\times \cdots\times d_p}$ consists in factorizing $\T$ into the product of $p$ core tensors
$\G_1\in\R^{d_1\times R},\G_2\in\R^{R\times d_2\times R},
\cdots, \G_{p-1}\in\R^{R\times d_{p-1} \times R},
\G_p \in \R^{R\times d_p}$, and is defined\footnote{The classical definition of the TT-decomposition allows the rank $R$ to be different
for each mode, but this definition is sufficient for the purpose of this paper.} by
\begin{equation}
\label{eq:TT}
\T_{i_1,\cdots,i_p} =  
(\G_1)_{i_1,:}(\G_2)_{:,i_2,:}\cdots 
 (\G_{p-1})_{:,i_{p-1},:}(\G_p)_{:,i_p}
\end{equation}
for all indices $i_1\in[d_1],\cdots,i_p\in[d_p]$~(here $(\G_1)_{i_1,:}$ is a row vector, $(\G_2)_{:,i_2,:}$ is an $R\times R$ matrix, etc.).  We will use the notation $\T = \TT{\G_1,\cdots,\G_p}$
to denote such a decomposition. A tensor network representation of this decomposition is shown in Figure~\ref{fig:tensor.train}. The name of this decomposition comes from the fact that the tensor $\Tt$ is decomposed into a train of lower-order tensors. This decomposition is also known in the quantum physics community as \emph{Matrix Product States}~\citep{Orus:arXiv1306.2164,schollwock2011density}, where this denomination comes from the fact that each entry of $\Tt$ is given by a product of matrices, see Eq.~\eqref{eq:TT}.

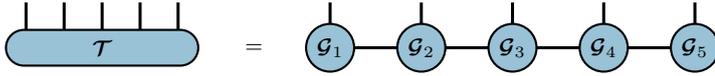
\begin{figure} 
\centering
\begin{tikzpicture}
    \tikzset{tensor/.style = {minimum size = 0.4cm,shape = circle,thick,draw=black,fill=blue!60!green!40!white,inner sep = 0pt}, edge/.style   = {thick,line width=.4mm},every loop/.style={}}
  \pgfmathsetmacro{\start}{3}
  \pgfmathsetmacro{\inc}{1.2}
    \draw[edge] (\start,0)  -- (\start+4*\inc,0)  {};
  \foreach [count=\jj] \ii in {0,...,4} {
  \pgfmathsetmacro{\x}{\start+\ii*\inc}
   \draw[edge] (\x,0.2)  -- (\x,0.6)  {};
   \node[tensor, inner sep=0pt, minimum size=18pt] (1\x) at (\x ,0) {$\Gt_{\jj}$};
  }
 
  \pgfmathsetmacro{\start}{-1}
  \pgfmathsetmacro{\inc}{0.5}
  \foreach  \ii in {0,...,4} {
  \pgfmathsetmacro{\x}{\start+\ii*\inc}
   \draw[edge] (\x,0.2)  -- (\x,0.6)  {};
  }
 \node[tensor, rounded rectangle, minimum width=80pt, inner sep=4pt] (T) at (0,0) {$\Tt$};
 \node[draw=none] (eq) at (2,0) {$ = $};
\end{tikzpicture}
\caption{\ Tensor network representation of a tensor train decomposition.}\label{fig:tensor.train}
\end{figure}

 While the problem of finding the best approximation of  TT-rank $R$
of a given tensor is NP-hard~\citep{hillar2013most}, 
a quasi-optimal SVD based compression algorithm~(TT-SVD) has been proposed 
in~\citep{oseledets2011tensor}.
It is worth mentioning that the TT decomposition is invariant under change of basis: 
for any invertible matrix $\M$ and any core tensors $\G_1,\G_2,\cdots,\G_p$, we have
\begin{equation*}\label{eq:TT.change.of.basis}
\TT{\G_1,\cdots,\G_p} 
=
\TT{\G_1\ttm{2}\M\invtop,\G_2\ttm{1}\M\ttm{3}\M\invtop,\cdots,
\G_{p-1}\ttm{1}\M\ttm{3}\M\invtop,\G_p\ttm{1}\M}
.
\end{equation*}
This relation appears clearly using tensor network diagrams, e.g. with $p=4$ we have\footnote{Note that the colors of the nodes do not bear any meaning and are simply used as visual clues to help parse the diagrams.}:
\begin{center}
    \input{tikz/TT_basis_invariance}
\end{center}

\subsection{Weighted Automata and Spectral Learning}\label{sec:wfa.and.speclearn}
\emph{Vector-valued weighted finite automata}~(vv-WFA) have
been introduced in~\citep{rabusseau2017multitask} as a natural generalization of weighted automata from scalar-valued functions
to vector-valued ones. 

\begin{definition}

A $p$-dimensional vv-WFA with $n$ states is a tuple $A=\vvwa$
where $\szero\in\R^n$ is the initial weights vector, $\vvsinf\in\R^{p\times n}$ is the matrix of final weights, 
and $\A^\sigma\in\R^{n\times n}$ is the transition matrix for each symbol $\sigma$ in a finite alphabet $\Sigma$.
A vv-WFA $A$ computes a function $f_A:\Sigma^*\to\R^p$ defined by 
$$f_A(x) =\vvsinf (\A^{x_1}\A^{x_2}\cdots\A^{x_k})^\top\szero $$
for each word $x=x_1x_2\cdots x_k\in\Sigma^*$. 
\end{definition}

We call a vv-WFA \emph{minimal} if its number of states
is minimal, that is, any vv-WFA computing the same function as at least as many states as the minimal vv-WFA. Given a function $f:\Sigma^*\to \Rbb^p$, we denote by $\rank(f)$ the number of states of a minimal vv-WFA computing $f$~(which is
set to $\infty$ if $f$ cannot be computed by a vv-WFA).

The \emph{spectral learning algorithm} is a consistent learning algorithm for weighted finite automata. It has been introduced concurrently in~\citep{hsu2009spectral} and \citep{bailly2009grammatical}~(see~\citep{balle2014spectral} for a comprehensive presentation of the algorithm). This algorithm relies on a fundamental object: \emph{the Hankel matrix}. Given a function $f:\Sigma^*\to\Rbb$, its Hankel matrix $\H\in\Rbb^{\Sigma^*\times \Sigma^*}$ is the bi-infinite matrix defined by
$$
\H_{u,v} = f(uv)\ \ \ \ \text{for all }u,v\in\Sigma^*
$$
where $uv$ denotes the concatenation of the prefix $u$ and the suffix $v$. The striking relation between the Hankel matrix and the rank of a function $f$ has been well known in the formal language community~\citep{fliess1974matrices,carlyle1971realizations} and is at the heart of the spectral learning algorithm. This relation states that the rank of the Hankel matrix of a function $f$ exactly coincides with the rank of $f$, i.e. the number of states of the smallest WFA computing $f$. In particular, the rank of the Hankel matrix of $f$ is finite if and only if $f$ can be computed by a weighted automaton. An example of a function which cannot be computed by a WFA is the indicator function of the language $a^nb^n$~(on the alphabet $\Sigma=\{a,b\}$):
\begin{equation}
f(x)
=
\begin{cases}
1& \text{ if } x=a^nb^n\text{ for some integer }n\\
0&\text{otherwise.}
\end{cases}
\end{equation}

The spectral learning algorithm was naturally extended to vector-valued WFA in~\citep{rabusseau2017multitask}, where the Hankel matrix is replaced by the \emph{Hankel tensor} $\Ht\in\Rbb^{\Sigma^*\times\Sigma^*\times p}$ of a vector-valued function $f:\Sigma^*\to\R^p$, which is defined by
$$
\Hten_{u,v,:} = f(uv)\ \ \ \ \text{ for all }u,v\in\Sigma^*.
$$
The relation between the rank of the Hankel matrix and the function $f$ naturally carries over to the vector-valued case and is given in the following theorem.
\begin{theorem}[\cite{rabusseau2017multitask}]
\label{thm:fliess-vvWFA}
Let $f:\Sigma^*\to\R^d$ and let $\Hten$ be its Hankel tensor. Then $\rank(f) = \rank(\tenmat{H}{1})$.
\end{theorem}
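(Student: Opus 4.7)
The statement generalizes Fliess's classical theorem to the vector-valued setting, and the plan is to prove the two inequalities separately using the Hankel-factorization strategy. Throughout, I view $\tenmat{H}{1}$ as a matrix with rows indexed by prefixes $u \in \Sigma^*$ and columns indexed by pairs $(v,i) \in \Sigma^* \times [d]$, so that $\tenmat{H}{1}_{u,(v,i)} = f(uv)_i$.

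For $\rank(\tenmat{H}{1}) \le \rank(f)$, suppose $f$ is computed by a vv-WFA $\vvwa$ with $n$ states. Writing $\A^u = \A^{u_1}\cdots\A^{u_k}$ for $u = u_1\cdots u_k$, the multiplicative structure of the automaton gives $f(uv)_i = \szero^\top \A^u \A^v (\vvsinf)_{i,:}^\top$, which is precisely a rank-$n$ factorization $\tenmat{H}{1} = PS$ with $P_{u,:} = \szero^\top \A^u$ and $S_{:,(v,i)} = \A^v (\vvsinf)_{i,:}^\top$. Hence $\rank(\tenmat{H}{1}) \le n$.

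For the reverse inequality, set $n := \rank(\tenmat{H}{1}) < \infty$ and fix any rank factorization $\tenmat{H}{1} = PS$ with $P$ of $n$ columns and $S$ of $n$ rows. For each $\sigma \in \Sigma$, the shifted Hankel tensor $\Hten^\sigma$ defined by $\Hten^\sigma_{u,v,:} = f(u\sigma v)$ satisfies the twin identity
$$\tenmatpar{\Hten^\sigma}{1}_{u,(v,i)} \;=\; \tenmat{H}{1}_{u,(\sigma v,i)} \;=\; \tenmat{H}{1}_{u\sigma,(v,i)},$$
placing its rows in the rowspace of $S$ and its columns in the column space of $P$. Consequently there is a unique $\A^\sigma \in \R^{n\times n}$ with $\tenmatpar{\Hten^\sigma}{1} = P \A^\sigma S$; explicitly $\A^\sigma = P\pinv \tenmatpar{\Hten^\sigma}{1} S\pinv$ works, since $P\pinv P = \I_n = S S\pinv$ for a rank-$n$ factorization. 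I then set $\szero = P_{\epsilon,:}^\top$ and $(\vvsinf)_{i,:} = (S_{:,(\epsilon,i)})^\top$, yielding a candidate vv-WFA with $n$ states.

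To verify correctness, I would induct on $|w|$ to show $P_{w,:}^\top = (\A^w)^\top \szero$; the base case $w = \epsilon$ is immediate, and the inductive step $P_{u\sigma,:} = P_{u,:} \A^\sigma$ follows by cancelling $S$ on the right of $\tenmatpar{\Hten^\sigma}{1} = P \A^\sigma S$ using that $S$ has full row rank. Evaluating then gives $f_A(w)_i = (\vvsinf)_{i,:} (\A^w)^\top \szero = P_{w,:} S_{:,(\epsilon,i)} = \tenmat{H}{1}_{w,(\epsilon,i)} = f(w)_i$. The main obstacle I anticipate is not any single algebraic step but this compatibility bookkeeping, i.e.\ showing that the matrices $\{\A^\sigma\}$ extracted locally via pseudoinverses compose correctly across arbitrary strings. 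The required compatibility is delivered precisely by the two-sided row/column-space identification between $\tenmat{H}{1}$ and each shifted $\tenmatpar{\Hten^\sigma}{1}$, which is where the rank-$n$ assumption on both $P$ and $S$ is essential.
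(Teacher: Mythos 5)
Your proof is correct, and it follows essentially the same route as the paper: the paper does not reprove this theorem (it cites \citealp{rabusseau2017multitask}) but explicitly relies on the constructive argument you give, and your extraction formulas $\A^\sigma = \P\pinv\tenmatpar{\Hten^\sigma}{1}\S\pinv$ together with the initial/final weights read off from the $\epsilon$-row and $\epsilon$-columns are exactly step 4 of the spectral learning algorithm in Section~\ref{sec:wfa.and.speclearn} (and mirror the argument used in the proof of Theorem~\ref{thm:2RNN-SL}). The one point worth stating explicitly is the degenerate case: the two inequalities as you prove them only make sense when the relevant rank is finite, so you should note that $\rank(f)=\infty$ iff $\rank(\tenmat{H}{1})=\infty$, which follows from the same two implications.
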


The vv-WFA learning algorithm leverages the fact that the proof of this theorem is constructive: one can recover a vv-WFA computing $f$
from any low rank factorization of $\tenmat{H}{1}$. In practice, a finite sub-block $\Hten_{\Pref,\Suff} \in \R^{\Pref\times \Suff\times p}$ of the Hankel tensor  is used
to recover the vv-WFA, where $\Pref,\Suff\subset\Sigma^*$ are finite sets of prefixes and suffixes forming a \emph{complete basis} for $f$, \ie such that
$\rank(\tenmatpar{\Hten_{\Pref,\Suff}}{1}) = \rank(\tenmat{H}{1})$. Indeed, one can show that Theorem~\ref{thm:fliess-vvWFA} still holds when replacing the Hankel tensor by such a sub-block  $\Hten_{\Pref,\Suff}$. The spectral learning algorithm then consists of the following steps:
\begin{enumerate}
    \item Choose a target rank $n$ and a set of prefixes and suffixes $\Pref,\Suff\subset\Sigma^*$.
    \item Estimate the following sub-block of the Hankel tensor from data:
    \begin{itemize}
        \item[$\bullet$] $\Ht_{\Pref,\Suff}\in\R^{\Pref\times\Suff\times p}$ defined by $(\Ht_{\Pref,\Suff})_{u,v,:}=f(uv)$ for all $u\in\Pref,v\in\Suff$.
        \item[$\bullet$] $\H_{\Pref}\in\R^{\Pref\times p}$ defined by $(\H_{\Pref})_{u,:}=f(u)$ for all $u\in\Pref$.
        \item[$\bullet$] $\H_{\Suff}\in\R^{\Suff\times p}$ defined by $(\H_{\Suff})_{v,:}=f(v)$ for all $v\in\Suff$.
        \item[$\bullet$] $\Ht^\sigma_{\Pref,\Suff}\in\R^{\Pref\times\Suff\times p}$ for each $\sigma\in\Sigma$ defined by $(\Ht^\sigma_{\Pref,\Suff})_{u,v,:}=f(u\sigma v)$ for all $u\in\Pref,v\in\Suff$.
    \end{itemize}
    \item Obtain a (approximate) low rank factorization of the Hankel tensor~(using e.g. truncated SVD) $$\tenmatpar{\Hten_{\Pref,\Suff}}{1}\simeq \P\tenmat{S}{1}$$ 
    where $\P\in\R^{\Pref\times n}$ and $\ten{S}\in \R^{n \times \Suff \times p}$.
    \item Compute the parameters of the learned vv-WFA using the relations
    \begin{align*}
        \szero^\top &= \vectorize{\H_\Suff}^\top (\tenmat{\St}{1})\pinv\\
        \vvsinf &=\P\inv\H_\Pref\\
        \A^\sigma &=  \P\pinv\Hten^\sigma_{(1)}(\tenmat{S}{1})\pinv  \ \text{ for each }\sigma\in\Sigma.
    \end{align*}
\end{enumerate}
This learning algorithm is \emph{consistent}: in the limit of infinite training data~(i.e. the Hankel sub-blocks are exactly estimated from data), this algorithm is guaranteed to return a WFA that computes the target function $f$ if $\Pref$ and $\Suff$ form a complete basis. That is, the algorithm is consistent if the rank of the sub-block $\tenmatpar{\Hten_{\Pref,\Suff}}{1}$ is equal to the rank of the full Hankel tensor, i.e. $\rank(\tenmatpar{\Hten_{\Pref,\Suff}}{1}) = \rank(\tenmat{H}{1})$. More details can be found
in~\citep{balle2014spectral} for WFA and in~\citep{rabusseau2017multitask} for vv-WFA. Using tensor network diagrams, steps 3) and 4) of the spectral learning algorithm can be represented as follows:
\begin{center}
    
\begin{tikzpicture}

\tikzset{tensor/.style = {minimum size = 0.4cm,shape = circle,thick,draw=black,fill=blue!60!green!40!white,inner sep = 0pt}, edge/.style   = {thick,line width=.4mm},every loop/.style={}}

\def\x{0}
\def\y{0}
\node[draw=none] at (\x,\y+0.7) {$\Ht_{\Pref,\Suff}$};
\node[tensor] (G0) at (\x,\y) {};
\draw[edge] (G0) -- (\x,\y-0.6);
\draw[edge] (G0) -- (\x+0.75,\y);
\draw[edge] (G0) -- (\x-0.75,\y);
\node[draw=none] () at (\x-0.5,\y+0.2) {\textcolor{gray}{$\Pref$}};
\node[draw=none] () at (\x+0.5,\y+0.2) {\textcolor{gray}{$p$}};
\node[draw=none] () at (\x+0.2,\y-0.4) {\textcolor{gray}{$\Suff$}};

\node[draw=none] () at (\x+1.2,\y) {$\simeq$};

\def\x{2.5}
\def\y{0}
\node[draw=none] at (\x,\y+0.7) {$\P$};
\node[tensor,fill=green!60!red!40!white] (P) at (\x,\y) {};
\draw[edge] (P) -- (\x+0.75,\y);
\draw[edge] (P) -- (\x-0.75,\y);
\node[draw=none] () at (\x-0.5,\y+0.2) {\textcolor{gray}{$\Pref$}};
\node[draw=none] () at (\x+0.5,\y+0.2) {\textcolor{gray}{$n$}};

\def\x{3.5}
\def\y{0}
\node[draw=none] at (\x,\y+0.7) {$\St$};
\node[tensor,fill=blue!60!red!60!white] (S) at (\x,\y) {};
\draw[edge] (S) -- (\x+0.75,\y);
\draw[edge] (S) -- (\x-0.75,\y);
\draw[edge] (S) -- (\x,\y-0.6);
\node[draw=none] () at (\x+0.5,\y+0.2) {\textcolor{gray}{$p$}};
\node[draw=none] () at (\x+0.2,\y-0.4) {\textcolor{gray}{$\Suff$}};

\end{tikzpicture} 

\bigskip

\begin{tikzpicture}

\tikzset{tensor/.style = {minimum size = 0.4cm,shape = circle,thick,draw=black,fill=blue!60!green!40!white,inner sep = 0pt}, edge/.style   = {thick,line width=.4mm},every loop/.style={}}

\def\x{0}
\def\y{0}
\node[draw=none] at (\x,\y+0.5) {$\szero$};
\node[tensor,fill=lightgray] (G0) at (\x,\y) {};
\draw[edge] (G0) -- (\x+0.75,\y);
\node[draw=none] () at (\x+0.5,\y+0.2) {\textcolor{gray}{$n$}};

\node[draw=none] () at (\x+1.2,\y) {$=$};

\def\x{2}
\def\y{0}

\node[draw=none] at (\x,\y+0.6) {$\H_{\Suff}$};
\node[tensor,fill=blue!60!green!40!white] (G0) at (\x,\y) {};
%\draw[edge] (G0) -- (\x,\y-0.6);
\draw[edge] (G0) -- (\x+0.75,\y);
\node[draw=none] () at (\x+0.8,\y+0.2) {\textcolor{gray}{$p$}};
%\node[draw=none] () at (\x+0.2,\y-0.4) {\textcolor{gray}{$\Suff$}};

\def\x{3.5}
\def\y{0}
\node[draw=none] at (\x,\y+0.6) {$(\tenmat{\St}{1})\pinv$};
\node[tensor,fill=blue!60!red!60!white] (S) at (\x,\y) {$\dagger$};
\draw[edge] (S) -- (\x+0.75,\y);
\draw[edge] (S) -- (\x-0.75,\y);
\path  (S)  edge [bend left,edge,in=90,out=90,distance=0.5cm]  (G0);
\node[draw=none] () at (\x+0.5,\y+0.2) {\textcolor{gray}{$n$}};
\node[draw=none] () at (\x-0.75,\y-0.8) {\textcolor{gray}{$\Suff$}};

\def\x{6}
\def\y{0}
\node[draw=none] at (\x,\y+0.5) {$\vvsinf$};
\node[tensor,fill=lightgray] (G1) at (\x,\y) {};
\draw[edge] (G1) -- (\x+0.75,\y);
\draw[edge] (G1) -- (\x-0.75,\y);
\node[draw=none] () at (\x+0.5,\y+0.2) {\textcolor{gray}{$p$}};
\node[draw=none] () at (\x-0.5,\y+0.2) {\textcolor{gray}{$n$}};

\node[draw=none] () at (\x+1.2,\y) {$=$};

\def\x{8.5}
\def\y{0}
\node[draw=none] at (\x,\y+0.6) {$\P\pinv$};
\node[tensor,fill=green!60!red!40!white] (P) at (\x,\y) {$\dagger$};
\draw[edge] (P) -- (\x+0.75,\y);
\draw[edge] (P) -- (\x-0.75,\y);
\node[draw=none] () at (\x-0.5,\y+0.2) {\textcolor{gray}{$n$}};
\node[draw=none] () at (\x+0.8,\y+0.2) {\textcolor{gray}{$\Pref$}};

\def\x{10}
\def\y{0}

\node[draw=none] at (\x,\y+0.6) {$\H_{\Pref}$};
\node[tensor,fill=blue!60!green!40!white] (G0) at (\x,\y) {};
%\draw[edge] (G0) -- (\x,\y-0.6);
\draw[edge] (G0) -- (\x+0.75,\y);
\draw[edge] (G0) -- (\x-0.75,\y);
\node[draw=none] () at (\x+0.6,\y+0.2) {\textcolor{gray}{$p$}};
%\node[draw=none] () at (\x+0.2,\y-0.4) {\textcolor{gray}{$\Suff$}};

\end{tikzpicture} 

\vspace{-0.1cm}

\begin{tikzpicture}

\tikzset{tensor/.style = {minimum size = 0.4cm,shape = circle,thick,draw=black,fill=blue!60!green!40!white,inner sep = 0pt}, edge/.style   = {thick,line width=.4mm},every loop/.style={}}

\def\x{0}
\def\y{0}
\node[draw=none] at (\x,\y+0.5) {$\A^\sigma$};
\node[tensor,fill=lightgray] (G1) at (\x,\y) {};
\draw[edge] (G1) -- (\x+0.75,\y);
\draw[edge] (G1) -- (\x-0.75,\y);
\node[draw=none] () at (\x+0.5,\y+0.2) {\textcolor{gray}{$n$}};
\node[draw=none] () at (\x-0.5,\y+0.2) {\textcolor{gray}{$n$}};

\node[draw=none] () at (\x+1.2,\y) {$=$};

\def\x{2.5}
\def\y{0}
\node[draw=none] at (\x,\y+0.6) {$\P\pinv$};
\node[tensor,fill=green!60!red!40!white] (P) at (\x,\y) {$\dagger$};
\draw[edge] (P) -- (\x+0.75,\y);
\draw[edge] (P) -- (\x-0.75,\y);
\node[draw=none] () at (\x-0.5,\y+0.2) {\textcolor{gray}{$n$}};
\node[draw=none] () at (\x+0.8,\y+0.2) {\textcolor{gray}{$\Pref$}};

\def\x{4}
\def\y{0}

\node[draw=none] at (\x,\y+0.6) {$\Ht^\sigma_{\Pref,\Suff}$};
\node[tensor,fill=blue!60!green!40!white] (G0) at (\x,\y) {};
%\draw[edge] (G0) -- (\x,\y-0.6);
\draw[edge] (G0) -- (\x+0.75,\y);
\draw[edge] (G0) -- (\x-0.75,\y);
\node[draw=none] () at (\x+0.8,\y+0.2) {\textcolor{gray}{$p$}};
%\node[draw=none] () at (\x+0.2,\y-0.4) {\textcolor{gray}{$\Suff$}};

\def\x{5.5}
\def\y{0}
\node[draw=none] at (\x,\y+0.6) {$(\tenmat{\St}{1})\pinv$};
\node[tensor,fill=blue!60!red!60!white] (S) at (\x,\y) {$\dagger$};
\draw[edge] (S) -- (\x+0.75,\y);
\draw[edge] (S) -- (\x-0.75,\y);
\path  (S)  edge [bend left,edge,in=90,out=90,distance=0.5cm]  (G0);
\node[draw=none] () at (\x+0.5,\y+0.2) {\textcolor{gray}{$n$}};
\node[draw=none] () at (\x-0.75,\y-0.8) {\textcolor{gray}{$\Suff$}};

\end{tikzpicture} 

\end{center}

\subsection{Recurrent Neural Networks}

\emph{Recurrent neural networks}~(RNN) are a class of neural networks designed to handle sequential data. A RNN takes as input a sequence (of arbitrary length) of elements from an input space $\Xcal$ and outputs an element in the output space $\Ycal$. Thus a RNN computes a function from $\Xcal^*$, the set of all finite-length sequences of elements of $\Xcal$, to $\Ycal$. In most applications, $\Xcal$ is a vector space, typically $\R^d$. When the input of the problem are sequences of symbols from a finite alphabet $\Sigma$, so-called \emph{one-hot} encoding are often used to embed $\Sigma$ into $\R^{|\Sigma|}$ by representing each symbol in $\Sigma$ by one of the canonical basis vector. 

There are several ways to describe recurrent neural networks. We opt here for a relatively abstract one, which will allow us to seamlessly draw connections with the vv-WFA model presented in the previous section. We first introduce the general notion of a recurrent model, which encompasses many of the models used in machine learning to handle sequential data~(most RNN architectures, hidden Markov models, WFA, etc.).

\begin{definition}
Let $\Xcal$ and $\Ycal$ be the input and output space, respectively. A \emph{recurrent model} with $n$ states is given by a tuple $R=(\frec,\fout,\h_0)$ where $\frec:\Xcal\times\R^n \to \R^n$ is the \emph{recurrent function}, $\fout:\R^n\to\Ycal$ is the \emph{output function} and $\h_0\in\R^n$ is the \emph{initial state}. A recurrent model $R$ computes a function $f_R:\Xcal^*\to\Ycal$ defined by the (recurrent) relation:
$$
f_R(x_1x_2\cdots x_k) = \fout(\h_k)\ \ \ \text{where }  \h_t = \frec(x_t,\h_{t-1})\ \text{for } 1\leq t\leq k
$$
for all $k\geq 0$ and $x_1,x_2,\dots,x_k\in\Xcal$.
\end{definition}

One can easily check that this definition encompasses vv-WFA. Indeed, a WFA $A=\vvwa$ is a recurrent model with $n$ states where $\Xcal=\Sigma$, $\Ycal=\R^p$, $\h_0=\szero$ and the recurrent and output functions are given by
$$
\frec(\sigma,\h) = (\A^\sigma)^\top\h\ \ \ \text{and}\ \ \ \fout(\h)= \vvsinf\h
$$
for all $x\in\Sigma,\ \h\in\R^n$.

Many architectures of recurrent neural networks have been proposed and used in practice. In this paper, we focus on vanilla RNN, also known as Elman network~\citep{elman1990finding}, and second-order RNN~(2-RNN)~\citep{giles1990higher,pollack1991induction,lee1986machine}\footnote{Second-order recurrent architectures have been successfully used more recently, see \eg \citep{sutskever2011generating} and \citep{wu2016multiplicative}.}, which can be seen as a multilinear extension of vanilla RNN. We now give the formal definitions of these two models.

\begin{definition}
A \emph{first-order RNN}~(or vanilla RNN) with $n$ states~(or, equivalently, $n$ hidden neurons) is a recurrent model $R=(\frec,\fout,\h_0)$ with input space $\Xcal=\R^d$ and output space $\Ycal=\R^p$. It computes a function $f_R:(\R^d)^*\to \R^p$ defined by $f_R(\x_1,\dots, \x_k)=\fout(\h_k)$, where the recurrent and output functions are defined by
$$\h_t=\frec(\x_t,\h_{t-1}) = z_{rec}(\U\x_t + \V\h_{t-1})\ \ \ \text{ and }\ \ \ \y_t=\fout(\h_t)=z_{out}(\Wb\h_{t}).$$
The parameters of a first-order RNN are:
\begin{itemize}
    \item the initial state $\h_0\in\R^n$, 
    \item the weight matrices $\U\in\R^{n\times d}$,  $\V\in\R^{n\times n}$ and $\Wb\in\R^{p\times n}$,
    \item the activation functions $z_{rec}:\R^n\to\R^n$ and $z_{out}:\R^p\to\R^p$.
\end{itemize}
\end{definition}

For the sake of simplicity, we omitted the bias vectors usually included in the definition of first-order RNN. Note however that this is without loss of generality when $z_{rec}$ is either a rectified linear unit or the identity~(which will be the cases considered in this paper). Indeed, for any recurrent model with $n$ states $R=(\frec,\fout,\h_0)$ with input space $\Xcal=\R^d$ and output space $\Ycal=\R^p$ defined by
$$\h_t=\frec(\x_t,\h_{t-1}) = z_{rec}(\U\x_t + \V\h_{t-1} + \bb)\ \ \ \text{ and }\ \ \ \fout(\h_t)=z_{out}(\Wb\h_{t}+\cb)$$
one can append a $1$ to all input vectors, $\tilde{\x}_t= (\x_t\ 1)^\top$, and define a new recurrent model with $n+1$ states $\tilde{R}=(\tilde{\frec},\tilde{\fout},\tilde{\h}_0)$ with input space $\Xcal=\R^{d+1}$ and output space $\Ycal=\R^p$ defined by
$$
\tilde{\h}_t=\frec(\tilde{\x}_t,\tilde{\h}_{t-1}) = z_{rec}(\tilde{\U}\tilde{\x}_t + \tilde{\V}\tilde{\h}_{t-1}),\ \ \fout(\tilde{\h}_t)=z_{out}(\tilde{\Wb}\tilde{\h}_{t})\ \ \text{and } \tilde{\h}_0=(\h_0\ 1)^\top
$$
computing the same function.

\begin{definition}
A \emph{second-order RNN}~(2-RNN) with $n$ states is a recurrent model $R=(\frec,\fout,\h_0)$ with input space $\Xcal=\R^d$ and output space $\Ycal=\R^p$. It computes a function $f_R:(\R^d)^*\to \R^p$ defined by $f_R(\x_1,\dots, \x_k)=\fout(\h_k)$, where the recurrent and output functions are defined by
$$\h_t=\frec(\x_t,\h_{t-1}) = z_{rec}(\At\ttv{1}\h_{t-1}\ttv{2}\x_t)\ \ \ \text{ and }\ \ \ \y_t=\fout(\h_t)=z_{out}(\Wb\h_{t}).$$
The parameters of a second-order RNN are:
\begin{itemize}
    \item the initial state $\h_0\in\R^n$, 
    \item the weight tensor $\At\in\R^{n\times d\times n}$ and output matrix $\Wb\in\R^{p\times n}$,
    \item the activation functions $z_{rec}:\R^n\to\R^n$ and $z_{out}:\R^p\to\R^p$.
\end{itemize}
A linear 2-RNN $R$ with $n$ states is called \emph{minimal} if its number of states is minimal~(\ie any linear 2-RNN computing $f_R$ has
at least $n$ states).
\end{definition}

In the remaining of the paper, we will define a second-order RNN using its parameters, i.e. $R=(\h_0,\At,\Wb,z_{rec},z_{out})$. In the particular case where the activation functions are linear~(i.e. equal to the identity function), we will omit them from the definition, e.g. $R=(\h_0,\At,\Wb)$ defines a linear second-order RNN.

The recurrent activation function $z_{rec}$ of a RNN is usually a componentwise non-linear function such as a hyperbolic tangent or rectified linear unit, while the output activation function often depends on the task~(the softmax function being the most popular for classification and language modeling tasks).

One can see that the difference between first-order and second-order RNN only lies in the recurrent function. For first-order RNN, the pre-activation $\ab_t=\U\x_t + \V\h_{t-1} + \bb$ is a linear function of $\x_t$ and $\h_{t-1}$, while for second-order RNN the pre-activation $\ab_t=\At\ttv{1}\h_{t-1}\ttv{2}\x_t$ is a bilinear map applied to $\x_t$ and $\h_{t-1}$~(hence the \emph{second-order} denomination). 

It is worth mentioning that second-order RNN are often defined with additional parameters to account for first-order interactions and bias terms:
$$\h_t=\frec(\x_t,\h_{t-1}) = z_{rec}(\At\ttv{1}\h_{t-1}\ttv{2}\x_t + \U\x_t + \V\h_{t-1} + \bb).$$
The definition we use here is conceptually simpler and without loss of generality~(similarly to the omission of the bias vectors in the definition of first-order RNN). Indeed, when $z_{rec}$ is either the identity or a rectified linear unit, one can always append a 1 to all input vectors and augment the state space by one state to obtain a 2-RNN computing the same function. It follows from this discussion that 2-RNN are a strict generalization of vanilla RNN: any function that can be computed by a vanilla RNN can be computed by a 2-RNN~(provided that all input vectors are appended a constant entry equal to one).

The recurrent and output functions of a 2-RNN can be represented by the following simple tensor networks:
\begin{center}
\begin{tikzpicture}

\tikzset{tensor/.style = {minimum size = 0.7cm,shape = circle,thick,draw=black,fill=blue!60!green!40!white,inner sep = 0pt}, edge/.style   = {thick,line width=.4mm},every loop/.style={}}
\scalebox{0.9}{
\def\x{0}
\def\y{0}
\node[tensor] (G0) at (\x,\y) {$\h_t$};
\draw[edge] (G0) -- (\x+1,\y);

\node[draw=none] () at (\x+1.8,\y) {$=\ $\scalebox{1.1}{$z_{rec}$}$\left(\rule{0pt}{25pt}\right. $};

\node[draw=none] () at (\x+0.8,\y+0.2) {\textcolor{gray}{$n$}};

\def\x{3}
\def\y{0}
\node[tensor] (ht) at (\x,\y) {$\h_{t-1}$};
\node[tensor,fill=lightgray] (A) at (\x+1.5,\y) {$\At$};
\node[tensor,fill=green!60!red!40!white] (x) at (\x+1.5,\y-1.25) {$\xb_{t}$};
\draw[edge] (ht) -- (A);
\draw[edge] (x) -- (A);
\draw[edge] (A) -- (\x+2.5,\y);

\node[draw=none] () at (\x+0.8,\y+0.2) {\textcolor{gray}{$n$}};
\node[draw=none] () at (\x+2.2,\y+0.2) {\textcolor{gray}{$n$}};
\node[draw=none] () at (\x+1.8,\y-0.6) {\textcolor{gray}{$d$}};
\node[draw=none] () at (\x+2.7,\y) {$\left.\rule{0pt}{25pt}\right) $};
%\node[draw=none] () at (

\def\x{7}
\def\y{0}
\node[tensor,fill=green!60!red!40!white] (G1) at (\x,\y) {$\yb_t$};
\draw[edge] (G1) -- (\x+1,\y);

\node[draw=none] () at (\x+1.8,\y) {$=\ $\scalebox{1.1}{$z_{out}$}$\left(\rule{0pt}{25pt}\right. $};

\node[draw=none] () at (\x+0.8,\y+0.2) {\textcolor{gray}{$p$}};

\def\x{10}
\def\y{0}
\node[tensor] (htt) at (\x,\y) {$\h_{t}$};
\node[tensor,fill=lightgray] (W) at (\x+1.5,\y) {$\Wb$};
\draw[edge] (htt) -- (W);
\draw[edge] (W) -- (\x+2.5,\y);

\node[draw=none] () at (\x+0.8,\y+0.2) {\textcolor{gray}{$n$}};
\node[draw=none] () at (\x+2.2,\y+0.2) {\textcolor{gray}{$p$}};
\node[draw=none] () at (\x+2.7,\y) {$\left.\rule{0pt}{25pt}\right) $};
%\node[draw=none] () at (
}

\end{tikzpicture}
\end{center}

By introducing the notion of recurrent models, we presented a unified view of WFA and first and second-order RNN. All these sequential models are recurrent models and differ in the way their recurrent and output functions are defined. The difference between WFA and RNN can thus be summarized by the fact that the recurrent and output functions of a WFA are linear, whereas they are non-linear maps for RNN. In essence, one could say that RNN are non-linear extensions of WFA. In Section~\ref{sec:WFA.2RNN.equivalence}, we will formalize this intuition by proving the exact equivalence between the classes of functions that can be computed by WFA and second-order RNN with linear activation functions.

\section{Weighted Automata and Tensor Networks}
In this section, we present connections between weighted automata and tensor networks. In particular, we will show that the computation of a WFA on a sequence is intrinsically connected to the matrix product states model used in quantum physics and the tensor train decomposition. This connection will allow us to unravel a fundamental structure in the Hankel matrix of a function computed by a WFA: in addition to being  low rank, we will show that the Hankel matrix can be decomposed into sub-blocks which are all matricizations of tensors with low tensor train rank. We will then leverage this structure to design an efficient spectral learning algorithm for WFA relying on efficient computations of pseudo-inverse of matrices given in the tensor train format.

\subsection{Tensor Train Structure of the Hankel Matrix}
\label{sec:TT.struct.hankel}
For the sake of simplicity, we will consider scalar valued WFA in this section but all the results we present can be straightforwardly extended to vv-WFA. Let $A=\wa$ be a WFA with $n$ states. Recall that $A$ computes a function $f_A: \Sigma^* \to \R$ defined by
$$f_A(x_1x_2\cdots x_k) = \szero^\top \A^{x_1}\A^{x_2}\cdots\A^{x_k}\sinf$$
for any $k\geq 0$ and $x_1,x_2,\cdots, x_k\in\Sigma$. The computation of a WFA on a sequence can be represented by the following tensor network:

$$
f(x_1x_2\cdots x_k)=
\begin{tikzpicture}[baseline=-0.5ex]
\tikzset{tensor/.style = {minimum size = 0.4cm,shape = circle,thick,draw=black,fill=blue!60!green!40!white,inner sep = 0pt}, edge/.style   = {thick,line width=.4mm},every loop/.style={}}

% $\szero$ ['left', 'noleg', '']
\node[tensor] (G0) at (0,0) {};
\node[draw=none,right=0.2cm of G0] (G0right) {};
\draw[edge] (G0) -- (G0right);
\node[draw=none] at (0,0.5) {$\szero$};

% $\A^{x_1}$ ['noleg', 'skip=0.2', '']
\draw[edge] (0.39999999999999997,0) -- (1.5999999999999999,0);
\node[tensor] (G1) at (1.0,0) {};
\node[draw=none,left=0.2cm of G1] (G1left) {};
\draw[edge] (G1) -- (G1left);
\node[draw=none,right=0.2cm of G1] (G1right) {};
\draw[edge] (G1) -- (G1right);
\node[draw=none] at (1.0,0.5) {$\A^{x_1}$};

% $\A^{x_2}$ ['noleg', '']
\node[tensor] (G2) at (2.0,0) {};
\node[draw=none,left=0.2cm of G2] (G2left) {};
\draw[edge] (G2) -- (G2left);
\node[draw=none,right=0.2cm of G2] (G2right) {};
\draw[edge] (G2) -- (G2right);
\node[draw=none] at (2.0,0.5) {$\A^{x_2}$};

% $\cdots$ ['dots', '']
\node[draw=none] (G3) at (2.8,0) {$\cdots$};
\node[draw=none] at (2.8,0.5) {$\cdots$};

% $\A^{x_{k-1}}$ ['noleg', '']
\node[tensor] (G4) at (3.5999999999999996,0) {};
\node[draw=none,left=0.2cm of G4] (G4left) {};
\draw[edge] (G4) -- (G4left);
\node[draw=none,right=0.2cm of G4] (G4right) {};
\draw[edge] (G4) -- (G4right);
\node[draw=none] at (3.5999999999999996,0.5) {$\A^{x_{k-1}}$};

% $\A^{x_k}$ ['noleg', 'skip=0.2', '']
\draw[edge] (3.999999999999999,0) -- (5.2,0);
\node[tensor] (G5) at (4.6,0) {};
\node[draw=none,left=0.2cm of G5] (G5left) {};
\draw[edge] (G5) -- (G5left);
\node[draw=none,right=0.2cm of G5] (G5right) {};
\draw[edge] (G5) -- (G5right);
\node[draw=none] at (4.6,0.5) {$\A^{x_k}$};

% $\sinf$ ['right', 'noleg', '']
\node[tensor] (G6) at (5.6,0) {};
\node[draw=none,left=0.2cm of G6] (G6left) {};
\draw[edge] (G6) -- (G6left);
\node[draw=none] at (5.6,0.5) {$\sinf$};

\end{tikzpicture}
$$

By stacking the transition matrices $\{\A^\sigma\}_{\sigma\in\Sigma}$ into a third order tensor $\At\in\R^{n\times\Sigma\times n}$ defined by 
$$\At_{:,\sigma,:}=\A^{\sigma}\ \ \ \text{ for all }\sigma\in\Sigma,$$
this computation can be rewritten into 

$$f(x_1x_2\cdots x_k)=
\begin{tikzpicture}[baseline=-0.5ex]
\tikzset{tensor/.style = {minimum size = 0.4cm,shape = circle,thick,draw=black,fill=blue!60!green!40!white,inner sep = 0pt}, edge/.style   = {thick,line width=.4mm},every loop/.style={}}
%%% $\szero$|left|noleg| $\At$ $\At$ $\cdots$|dots| $\At$ $\At$ $\sinf$|right|noleg|

\tikzset{tensor/.style = {minimum size = 0.4cm,shape = circle,thick,draw=black,fill=blue!60!green!40!white,inner sep = 0pt}}
% $\szero$ ['left', 'noleg', '']
\node[tensor,fill=blue!60!green!40!white] (G0) at (0,0) {};
\node[draw=none,right=0.2cm of G0] (G0right) {};
\draw[edge] (G0) -- (G0right);
\node[draw=none] at (0,0.5) {$\szero$};

% $\At$ []
\node[tensor,fill=blue!60!green!40!white] (G1) at (0.8,0) {};
\draw[edge] (G1) -- (0.8,-0.5);
\node[draw=none,left=0.2cm of G1] (G1left) {};
\draw[edge] (G1) -- (G1left);
\node[draw=none,right=0.2cm of G1] (G1right) {};
\draw[edge] (G1) -- (G1right);
\node[draw=none] at (0.8,0.5) {$\At$};
\node[draw=none] at (0.8,-0.7) {$x_1$};

% $\At$ []
\node[tensor,fill=blue!60!green!40!white] (G2) at (1.6,0) {};
\draw[edge] (G2) -- (1.6,-0.5);
\node[draw=none,left=0.2cm of G2] (G2left) {};
\draw[edge] (G2) -- (G2left);
\node[draw=none,right=0.2cm of G2] (G2right) {};
\draw[edge] (G2) -- (G2right);
\node[draw=none] at (1.6,0.5) {$\At$};
\node[draw=none] at (1.6,-0.7) {$x_2$};

% $\cdots$ ['dots', '']
\node[draw=none] (G3) at (2.4000000000000004,0) {$\cdots$};
\node[draw=none] at (2.4000000000000004,0.5) {$\cdots$};

% $\At$ []
\node[tensor,fill=blue!60!green!40!white] (G4) at (3.2,0) {};
\draw[edge] (G4) -- (3.2,-0.5);
\node[draw=none,left=0.2cm of G4] (G4left) {};
\draw[edge] (G4) -- (G4left);
\node[draw=none,right=0.2cm of G4] (G4right) {};
\draw[edge] (G4) -- (G4right);
\node[draw=none] at (3.2,0.5) {$\At$};
\node[draw=none] at (3.2,-0.7) {$x_{k-1}$};

% $\At$ []
\node[tensor,fill=blue!60!green!40!white] (G5) at (4.0,0) {};
\draw[edge] (G5) -- (4.0,-0.5);
\node[draw=none,left=0.2cm of G5] (G5left) {};
\draw[edge] (G5) -- (G5left);
\node[draw=none,right=0.2cm of G5] (G5right) {};
\draw[edge] (G5) -- (G5right);
\node[draw=none] at (4.0,0.5) {$\At$};
\node[draw=none] at (4.0,-0.7) {$x_k$};

% $\sinf$ ['right', 'noleg', '']
\node[tensor,fill=blue!60!green!40!white] (G6) at (4.8,0) {};
\node[draw=none,left=0.2cm of G6] (G6left) {};
\draw[edge] (G6) -- (G6left);
\node[draw=none] at (4.8,0.5) {$\sinf$};
\end{tikzpicture}
$$

This graphically shows the tight connection between WFA and the tensor train decomposition. More formally, for any integer $l$, let us define the $l$th order Hankel tensor $\Ht^{(l)}\in\R^{\Sigma \times \Sigma \times\cdots \Sigma}$ by  
\begin{equation}\label{eq:def.hankel.tensor}
    \Ht^{(l)}_{\sigma_1,\sigma_2,\cdots,\sigma_l} = f(\sigma_1\sigma_2\cdots\sigma_l)\text{ for all }\sigma_1,\cdots\sigma_l\in\Sigma.
\end{equation} 
Then, one can easily check that each such Hankel tensor admits the following rank $n$ tensor train decomposition:
\begin{align*}
\Ht^{(l)} 
&= 
\begin{tikzpicture}[baseline=-0.5ex]
%%% $\szero$|left|noleg| $\At$ $\At$ $\cdots$|dots| $\At$ $\At$ $\sinf$|right|noleg|

\tikzset{tensor/.style = {minimum size = 0.4cm,shape = circle,thick,draw=black,fill=blue!60!green!40!white,inner sep = 0pt}, edge/.style   = {thick,line width=.4mm},every loop/.style={}}% $\szero$ ['left', 'noleg', '']
\node[tensor,fill=blue!60!green!40!white] (G0) at (0,0) {};
\node[draw=none,right=0.2cm of G0] (G0right) {};
\draw[edge] (G0) -- (G0right);
\node[draw=none] at (0,0.5) {$\szero$};

% $\At$ []
\node[tensor,fill=blue!60!green!40!white] (G1) at (0.8,0) {};
\draw[edge] (G1) -- (0.8,-0.5);
\node[draw=none,left=0.2cm of G1] (G1left) {};
\draw[edge] (G1) -- (G1left);
\node[draw=none,right=0.2cm of G1] (G1right) {};
\draw[edge] (G1) -- (G1right);
\node[draw=none] at (0.8,0.5) {$\At$};

% $\At$ []
\node[tensor,fill=blue!60!green!40!white] (G2) at (1.6,0) {};
\draw[edge] (G2) -- (1.6,-0.5);
\node[draw=none,left=0.2cm of G2] (G2left) {};
\draw[edge] (G2) -- (G2left);
\node[draw=none,right=0.2cm of G2] (G2right) {};
\draw[edge] (G2) -- (G2right);
\node[draw=none] at (1.6,0.5) {$\At$};

% $\cdots$ ['dots', '']
\node[draw=none] (G3) at (2.4000000000000004,0) {$\cdots$};
\node[draw=none] at (2.4000000000000004,0.5) {$\cdots$};

% $\At$ []
\node[tensor,fill=blue!60!green!40!white] (G4) at (3.2,0) {};
\draw[edge] (G4) -- (3.2,-0.5);
\node[draw=none,left=0.2cm of G4] (G4left) {};
\draw[edge] (G4) -- (G4left);
\node[draw=none,right=0.2cm of G4] (G4right) {};
\draw[edge] (G4) -- (G4right);
\node[draw=none] at (3.2,0.5) {$\At$};

% $\At$ []
\node[tensor,fill=blue!60!green!40!white] (G5) at (4.0,0) {};
\draw[edge] (G5) -- (4.0,-0.5);
\node[draw=none,left=0.2cm of G5] (G5left) {};
\draw[edge] (G5) -- (G5left);
\node[draw=none,right=0.2cm of G5] (G5right) {};
\draw[edge] (G5) -- (G5right);
\node[draw=none] at (4.0,0.5) {$\At$};

% $\sinf$ ['right', 'noleg', '']
\node[tensor,fill=blue!60!green!40!white] (G6) at (4.8,0) {};
\node[draw=none,left=0.2cm of G6] (G6left) {};
\draw[edge] (G6) -- (G6left);
\node[draw=none] at (4.8,0.5) {$\sinf$};

\draw[decoration={{brace}},decorate,thick] (0.6,0.7) -- node[above=3pt] {{\tiny $l$ times}} (4.3,0.7);

\end{tikzpicture}\\
&=
\begin{tikzpicture}[baseline=-0.5ex]
%%% $\At\ttv{1}\szero$|left| $\At$|skip=0.5| $\cdots$|dots| $\At$|skip=0.5| $\At\ttv{3}\sinf$|right|

\tikzset{tensor/.style = {minimum size = 0.4cm,shape = circle,thick,draw=black,fill=blue!60!green!40!white,inner sep = 0pt}, edge/.style   = {thick,line width=.4mm},every loop/.style={}}% $\At\ttv{1}\szero$ ['left', '']
\node[tensor,fill=blue!60!green!40!white] (G0) at (0,0) {};
\draw[edge] (G0) -- (0,-0.5);
\node[draw=none,right=0.2cm of G0] (G0right) {};
\draw[edge] (G0) -- (G0right);
\node[draw=none] at (0,0.5) {$\At\ttv{1}\szero$};

% $\At$ ['skip=0.5', '']
\draw[edge] (0.4,0) -- (2.2,0);
\node[tensor,fill=blue!60!green!40!white] (G1) at (1.3,0) {};
\draw[edge] (G1) -- (1.3,-0.5);
\node[draw=none,left=0.2cm of G1] (G1left) {};
\draw[edge] (G1) -- (G1left);
\node[draw=none,right=0.2cm of G1] (G1right) {};
\draw[edge] (G1) -- (G1right);
\node[draw=none] at (1.3,0.5) {$\At$};

% $\cdots$ ['dots', '']
\node[draw=none] (G2) at (2.6,0) {$\cdots$};
\node[draw=none] at (2.6,0.5) {$\cdots$};

% $\At$ ['skip=0.5', '']
\draw[edge] (3.0000000000000004,0) -- (4.800000000000001,0);
\node[tensor,fill=blue!60!green!40!white] (G3) at (3.9000000000000004,0) {};
\draw[edge] (G3) -- (3.9000000000000004,-0.5);
\node[draw=none,left=0.2cm of G3] (G3left) {};
\draw[edge] (G3) -- (G3left);
\node[draw=none,right=0.2cm of G3] (G3right) {};
\draw[edge] (G3) -- (G3right);
\node[draw=none] at (3.9000000000000004,0.5) {$\At$};

% $\At\ttv{3}\sinf$ ['right', '']
\node[tensor,fill=blue!60!green!40!white] (G4) at (5.2,0) {};
\draw[edge] (G4) -- (5.2,-0.5);
\node[draw=none,left=0.2cm of G4] (G4left) {};
\draw[edge] (G4) -- (G4left);
\node[draw=none] at (5.2,0.5) {$\At\ttv{3}\sinf$};

\end{tikzpicture} \\
&=
\TT{\At\ttv{1}\szero,\At,\cdots,\At,\At\ttv{3}\sinf}
\end{align*}

It follows that the Hankel matrix of a recognizable function can be decomposed into sub-blocks which are all matricization of Hankel tensors with low tensor train rank. To the best of our knowledge, this is a novel result that has not been noticed in the past. We conclude this section by formalizing this result in the following theorem.

\begin{theorem}
Let $f:\Sigma^*\to\R$ be a function computed by a WFA with $n$ states and let $\H\in\R^{\Sigma^*\times\Sigma^*}$ be its Hankel matrix defined by $\H_{u,v}=f(uv)$ for all $u,v\in\Sigma^*$. Furthermore, for any integer $l$, let $\Ht^{(l)}\in\R^{\Sigma \times \Sigma \times\cdots \times\Sigma}$ be the $l$th order tensor defined by  $\Ht^{(l)}_{\sigma_1,\sigma_2,\cdots,\sigma_l} = f(\sigma_1\sigma_2\cdots\sigma_l)$.

Then, the Hankel matrix $\H$ can be decomposed into sub-blocks, each sub-block being the matricization of a tensor of tensor train rank at most $n$. More precisely, each of these sub-blocks is equal to $\tenmatgen{\Ht^{(l)}}{k,l-k}$ for some values of $l$ and $k$, and 
 each Hankel tensor $\Ht^{(l)}$ has tensor train rank at most $n$.
\end{theorem}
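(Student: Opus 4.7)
The plan is to identify the natural block decomposition of the Hankel matrix indexed by prefix and suffix lengths, show each block is literally a matricization of a higher-order Hankel tensor, and then read off the tensor train rank bound from the factorization derived just above the theorem statement.

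First I would partition $\Sigma^* = \bigsqcup_{k\geq 0}\Sigma^k$, which induces a block decomposition of $\H$ into sub-blocks $\H_{\Sigma^k,\Sigma^m}$ indexed by $(k,m)\in\Nbb^2$. By definition of the Hankel matrix, the $(\sigma_1\cdots\sigma_k,\,\tau_1\cdots\tau_m)$ entry of this sub-block equals $f(\sigma_1\cdots\sigma_k\tau_1\cdots\tau_m)$, which by equation~\eqref{eq:def.hankel.tensor} is precisely $\Ht^{(k+m)}_{\sigma_1,\ldots,\sigma_k,\tau_1,\ldots,\tau_m}$. Grouping the first $k$ modes into row indices and the last $m$ modes into column indices is exactly the reshaping operation, which gives the identification
\[
\H_{\Sigma^k,\Sigma^m} = \tenmatgen{\Ht^{(k+m)}}{k,m}.
\]
This establishes the first half of the claim with $l=k+m$.

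Second, I would invoke the tensor network derivation carried out immediately before the theorem, in which stacking the transition matrices into the third-order tensor $\At\in\R^{n\times \Sigma\times n}$ yields
\[
\Ht^{(l)} = \TT{\At\ttv{1}\szero,\,\At,\ldots,\At,\,\At\ttv{3}\sinf}.
\]
The two boundary cores have shapes $\Sigma\times n$ and $n\times\Sigma$ respectively, and each of the $l-2$ interior cores is in $\R^{n\times\Sigma\times n}$, so every bond dimension is bounded by $n$. By definition this says that $\Ht^{(l)}$ admits a tensor train decomposition of rank at most $n$, which is the second half of the claim.

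I do not anticipate any real obstacle: both parts amount to matching the combinatorial indexing of the Hankel matrix against the tensor network identity already established in the previous paragraphs. The only subtlety worth being explicit about is that the ordering convention used for the reshaping $\tenmatgen{\cdot}{k,l-k}$ must be consistent with the lexicographic ordering on $\Sigma^k\times\Sigma^m$ used to index the sub-block, so that the identification $\H_{\Sigma^k,\Sigma^m} = \tenmatgen{\Ht^{(k+m)}}{k,m}$ is literal rather than up to a permutation of rows and columns; as noted earlier in the paper, this choice is immaterial as long as it is applied consistently throughout.
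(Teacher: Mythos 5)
Your proposal is correct and follows essentially the same route as the paper's proof: partition $\H$ into blocks $\H^{(m,k)}$ by prefix/suffix length, identify each block with $\tenmatgen{\Ht^{(m+k)}}{m,k}$, and read off the TT-rank bound from the decomposition $\Ht^{(l)} = \TT{\At\ttv{1}\szero,\At,\ldots,\At,\At\ttv{3}\sinf}$ (which the paper re-derives entry-wise inside the proof, whereas you cite the preceding tensor-network derivation — a purely presentational difference). Your explicit remark about the consistency of the reshaping/ordering convention matches the paper's own footnote on the matter.
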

\begin{proof}
For each $m,k\in\Nbb$, let $\H^{(m,k)}\in\Rbb^{\Sigma^m\times \Sigma^k}$ denote the sub-block of the Hankel matrix with prefixes $\Sigma^m$ and suffixes $\Sigma^k$. It is easy to check that the Hankel matrix $\H\in\R^{\Sigma^*\times\Sigma^*}$ can be partitioned into the sub-blocks $\H^{(m,k)}$ for $m,k\in\Nbb$:
$$ \H
=
\bmat{
\H^{(0,0)} & \H^{(0,1)} & \H^{(0,2)} & \H^{(0,3)} & \cdots \\
\H^{(1,0)} & \H^{(1,1)} & \H^{(1,2)} & \H^{(1,3)} & \cdots \\
\H^{(2,0)} & \H^{(2,1)} & \H^{(2,2)} & \H^{(2,3)} & \cdots \\
\vdots & \vdots & \vdots & \vdots & \ddots 
}\ .
$$
Now, by definition of the tensors $\Hten^{(l)}$, we have $\H^{(m,k)} = \tenmatgen{\Ht^{(m+k)}}{m,k}$. Moreover, let $A=\wa$ be a WFA with $n$ states computing $f$ and let $\At\in\R^{n\times \Sigma \times n}$ be the 3rd order tensor defined by $\At_{:,\sigma,:}=\A^{\sigma}$ for each $\sigma\in\Sigma$. For any $m,k\in\Nbb$ and any $\sigma_{1},\cdots,\sigma_{m+k}\in\Sigma$, we have  
\begin{align*}
\H^{(m,k)}_{\sigma_{1},\cdots,\sigma_{m+k}}  
&= (\tenmatgen{\Ht^{(m+k)}}{m,k})_{\sigma_{1}\cdots\sigma_m,\sigma_{m+1}\cdots\sigma_{m+k}} \\
&= f(\sigma_1,\sigma_2,\cdots,\sigma_{m+k}) \\
&= \szero^\top \A^{\sigma_1}\A^{\sigma_2}\cdots\A^{\sigma_{m+k}}\sinf \\
&= \szero^\top \At_{:,\sigma_1,:}\At_{:,\sigma_2,:}\cdots\At_{:,\sigma_{m+k},:}\sinf\\
&= \TT{\At\ttv{1}\szero,\At,\cdots,\At,\At\ttv{3}\sinf}_{{\sigma_{1},\cdots,\sigma_{m+k}}}.
\end{align*}
It follows that 
$$
\H^{(m,k)} = \tenmatgen{\Ht^{(m+k)}}{m,k} = \tenmatgen{{\TT{\At\ttv{1}\szero,\overbrace{\At,\cdots,\At,}^{m+k-2\text{ times}}\At\ttv{3}\sinf}}}{m,k}
$$
and thus that each sub-block $\H^{(m,k)}$ is a matricization of a tensor of tensor train rank at most $n$.
\end{proof}

\subsection{Spectral Learning in the Tensor Train Format}
\label{sec:spectral.learning.TT}

We now present how the tensor train structure of the Hankel matrix can be leveraged to significantly improve the computational complexity of steps 3 and 4 of the spectral learning algorithm described in Section~\ref{sec:wfa.and.speclearn}. 
These two steps consist in first computing a low rank approximation of the Hankel sub-block
$$\H_{\Pref,\Suff}\simeq \P\S$$
before estimating the parameters of the WFA using simple pseudo-inverse and matrix product computations
$$\szero^\top = \hb_\Suff^\top \S\pinv,\  \sinf =\P\inv\hb_\Pref\ \text{and}\ 
   \A^\sigma =  \P\pinv\H^\sigma_{\Pref,\Suff}\S\pinv  \ \text{ for each }\sigma\in\Sigma$$
where the Hankel sub-blocks are defined by
$$(\hb_\Pref)_u=f(u),\ \ (\hb_\Suff)_v=f(v),\ \ (\H_{\Pref,\Suff})_{u,v}=f(uv)\ \text{ and }\ (\H^\sigma_{\Pref,\Suff})_{u,v}=f(u\sigma v)$$
for all $\sigma\in\Sigma,u\in\Pref,v\in\Suff$.
Note that we again focus on scalar-valued WFA for the sake of clarity~(i.e. $A=\wa$) but the results we present can be straightforwardly extended to vector-valued WFA. Using tensor networks, these two steps are described as follows:
\begin{center}
    \begin{center}
    
\begin{tikzpicture}

\tikzset{tensor/.style = {minimum size = 0.4cm,shape = circle,thick,draw=black,fill=blue!60!green!40!white,inner sep = 0pt}, edge/.style   = {thick,line width=.4mm},every loop/.style={}}

\def\x{0}
\def\y{0}
\node[draw=none] at (\x,\y+0.7) {$\H_{\Pref,\Suff}$};
\node[tensor] (G0) at (\x,\y) {};
\draw[edge] (G0) -- (\x+0.75,\y);
\draw[edge] (G0) -- (\x-0.75,\y);
\node[draw=none] () at (\x-0.5,\y+0.2) {\textcolor{gray}{$\Pref$}};
\node[draw=none] () at (\x+0.5,\y+0.2) {\textcolor{gray}{$\Suff$}};

\node[draw=none] () at (\x+1.2,\y) {$\simeq$};

\def\x{2.5}
\def\y{0}
\node[draw=none] at (\x,\y+0.7) {$\P$};
\node[tensor,fill=green!60!red!40!white] (P) at (\x,\y) {};
\draw[edge] (P) -- (\x+0.75,\y);
\draw[edge] (P) -- (\x-0.75,\y);
\node[draw=none] () at (\x-0.5,\y+0.2) {\textcolor{gray}{$\Pref$}};
\node[draw=none] () at (\x+0.5,\y+0.2) {\textcolor{gray}{$n$}};

\def\x{3.5}
\def\y{0}
\node[draw=none] at (\x,\y+0.7) {$\S$};
\node[tensor,fill=blue!60!red!60!white] (S) at (\x,\y) {};
\draw[edge] (S) -- (\x+0.75,\y);
\draw[edge] (S) -- (\x-0.75,\y);
\node[draw=none] () at (\x+0.5,\y+0.2) {\textcolor{gray}{$\Suff$}};

\end{tikzpicture} 

% \bigskip

% \begin{tikzpicture}

% \tikzset{tensor/.style = {minimum size = 0.4cm,shape = circle,thick,draw=black,fill=blue!60!green!40!white,inner sep = 0pt}, edge/.style   = {thick,line width=.4mm},every loop/.style={}}

% \def\x{0}
% \def\y{0}
% \node[draw=none] at (\x,\y+0.7) {$\Ht_{\Pref,\Suff}$};
% \node[tensor] (G0) at (\x,\y) {};
% \draw[edge] (G0) -- (\x,\y-0.6);
% \draw[edge] (G0) -- (\x+0.75,\y);
% \draw[edge] (G0) -- (\x-0.75,\y);
% \node[draw=none] () at (\x-0.5,\y+0.2) {\textcolor{gray}{$\Pref$}};
% \node[draw=none] () at (\x+0.5,\y+0.2) {\textcolor{gray}{$p$}};
% \node[draw=none] () at (\x+0.2,\y-0.4) {\textcolor{gray}{$\Suff$}};

% \node[draw=none] () at (\x+1.2,\y) {$\simeq$};

% \def\x{2.5}
% \def\y{0}
% \node[draw=none] at (\x,\y+0.7) {$\P$};
% \node[tensor,fill=green!60!red!40!white] (P) at (\x,\y) {};
% \draw[edge] (P) -- (\x+0.75,\y);
% \draw[edge] (P) -- (\x-0.75,\y);
% \node[draw=none] () at (\x-0.5,\y+0.2) {\textcolor{gray}{$\Pref$}};
% \node[draw=none] () at (\x+0.5,\y+0.2) {\textcolor{gray}{$n$}};

% \def\x{3.5}
% \def\y{0}
% \node[draw=none] at (\x,\y+0.7) {$\St$};
% \node[tensor,fill=blue!60!red!60!white] (S) at (\x,\y) {};
% \draw[edge] (S) -- (\x+0.75,\y);
% \draw[edge] (S) -- (\x-0.75,\y);
% \draw[edge] (S) -- (\x,\y-0.6);
% \node[draw=none] () at (\x+0.5,\y+0.2) {\textcolor{gray}{$p$}};
% \node[draw=none] () at (\x+0.2,\y-0.4) {\textcolor{gray}{$\Suff$}};

% \end{tikzpicture} 

 \medskip

\begin{tikzpicture}

\tikzset{tensor/.style = {minimum size = 0.4cm,shape = circle,thick,draw=black,fill=blue!60!green!40!white,inner sep = 0pt}, edge/.style   = {thick,line width=.4mm},every loop/.style={}}

\def\x{0}
\def\y{0}
\node[draw=none] at (\x,\y+0.5) {$\szero$};
\node[tensor,fill=lightgray] (G0) at (\x,\y) {};
\draw[edge] (G0) -- (\x+0.75,\y);
\node[draw=none] () at (\x+0.5,\y+0.2) {\textcolor{gray}{$n$}};

\node[draw=none] () at (\x+1.2,\y) {$=$};

\def\x{2}
\def\y{0}

\node[draw=none] at (\x,\y+0.6) {$\hb_{\Suff}$};
\node[tensor,fill=blue!60!green!40!white] (G0) at (\x,\y) {};
%\draw[edge] (G0) -- (\x,\y-0.6);
\draw[edge] (G0) -- (\x+0.75,\y);
\node[draw=none] () at (\x+0.8,\y+0.2) {\textcolor{gray}{$S$}};
%\node[draw=none] () at (\x+0.2,\y-0.4) {\textcolor{gray}{$\Suff$}};

\def\x{3.5}
\def\y{0}
\node[draw=none] at (\x,\y+0.6) {$\S\pinv$};
\node[tensor,fill=blue!60!red!60!white] (S) at (\x,\y) {$\dagger$};
\draw[edge] (S) -- (\x+0.75,\y);
\draw[edge] (S) -- (\x-0.75,\y);
\node[draw=none] () at (\x+0.5,\y+0.2) {\textcolor{gray}{$n$}};

\def\x{7}
\def\y{0}
\node[draw=none] at (\x,\y+0.5) {$\sinf$};
\node[tensor,fill=lightgray] (G1) at (\x,\y) {};
\draw[edge] (G1) -- (\x-0.75,\y);
\node[draw=none] () at (\x-0.5,\y+0.2) {\textcolor{gray}{$n$}};

\node[draw=none] () at (\x+0.7,\y) {$=$};

\def\x{8.8}
\def\y{0}
\node[draw=none] at (\x,\y+0.6) {$\P\pinv$};
\node[tensor,fill=green!60!red!40!white] (P) at (\x,\y) {$\dagger$};
\draw[edge] (P) -- (\x+0.75,\y);
\draw[edge] (P) -- (\x-0.75,\y);
\node[draw=none] () at (\x-0.5,\y+0.2) {\textcolor{gray}{$n$}};
\node[draw=none] () at (\x+0.8,\y+0.2) {\textcolor{gray}{$\Pref$}};

\def\x{10.3}
\def\y{0}

\node[draw=none] at (\x,\y+0.6) {$\hb_{\Pref}$};
\node[tensor,fill=blue!60!green!40!white] (G0) at (\x,\y) {};
%\draw[edge] (G0) -- (\x,\y-0.6);
\draw[edge] (G0) -- (\x-0.75,\y);
%\node[draw=none] () at (\x+0.2,\y-0.4) {\textcolor{gray}{$\Suff$}};

\end{tikzpicture} 

\medskip

\begin{tikzpicture}

\tikzset{tensor/.style = {minimum size = 0.4cm,shape = circle,thick,draw=black,fill=blue!60!green!40!white,inner sep = 0pt}, edge/.style   = {thick,line width=.4mm},every loop/.style={}}

\def\x{0}
\def\y{0}
\node[draw=none] at (\x,\y+0.5) {$\A^\sigma$};
\node[tensor,fill=lightgray] (G1) at (\x,\y) {};
\draw[edge] (G1) -- (\x+0.75,\y);
\draw[edge] (G1) -- (\x-0.75,\y);
\node[draw=none] () at (\x+0.5,\y+0.2) {\textcolor{gray}{$n$}};
\node[draw=none] () at (\x-0.5,\y+0.2) {\textcolor{gray}{$n$}};

\node[draw=none] () at (\x+1.2,\y) {$=$};

\def\x{2.5}
\def\y{0}
\node[draw=none] at (\x,\y+0.6) {$\P\pinv$};
\node[tensor,fill=green!60!red!40!white] (P) at (\x,\y) {$\dagger$};
\draw[edge] (P) -- (\x+0.75,\y);
\draw[edge] (P) -- (\x-0.75,\y);
\node[draw=none] () at (\x-0.5,\y+0.2) {\textcolor{gray}{$n$}};
\node[draw=none] () at (\x+0.8,\y+0.2) {\textcolor{gray}{$\Pref$}};

\def\x{4}
\def\y{0}

\node[draw=none] at (\x,\y+0.6) {$\H^\sigma_{\Pref,\Suff}$};
\node[tensor,fill=blue!60!green!40!white] (G0) at (\x,\y) {};
%\draw[edge] (G0) -- (\x,\y-0.6);
\draw[edge] (G0) -- (\x+0.75,\y);
\draw[edge] (G0) -- (\x-0.75,\y);
\node[draw=none] () at (\x+0.8,\y+0.2) {\textcolor{gray}{$\Suff$}};
%\node[draw=none] () at (\x+0.2,\y-0.4) {\textcolor{gray}{$\Suff$}};

\def\x{5.5}
\def\y{0}
\node[draw=none] at (\x,\y+0.6) {$\S\pinv$};
\node[tensor,fill=blue!60!red!60!white] (S) at (\x,\y) {$\dagger$};
\draw[edge] (S) -- (\x+0.75,\y);
\draw[edge] (S) -- (\x-0.75,\y);
\node[draw=none] () at (\x+0.5,\y+0.2) {\textcolor{gray}{$n$}};

\end{tikzpicture} 

\end{center}
\end{center}

We now focus on the case where the basis of prefixes and suffixes are both equal to the set of all sequences of length $l$  for some integer $l$, i.e. $\Pref=\Suff=\Sigma^l$. 
A first important observation is that, in this case, the Hankel sub-block $\H_{\Pref,\Suff}$ is a matricization of the $2l$-th order Hankel tensor $\Ht^{(2l)}\in\R^{\Sigma\times\cdots\times\Sigma}$ defined in Eq.~\eqref{eq:def.hankel.tensor}:
$$\H_{\Pref,\Suff} = \tenmatgen{\Ht^{(2l)}}{l,l}.$$
Indeed, for $u=u_1u_2\cdots u_l\in\Pref=\Sigma^l$ and  $v=v_1v_2\cdots v_l\in\Suff=\Sigma^l$ we have
$$(\H_{\Pref,\Suff})_{u,v}=f(uv)=f(u_1u_2\cdots u_lv_1v_2\cdots v_l)=\Ht^{(2l)}_{u_1,u_2,\cdots, u_l,v_1,v_2,\cdots, v_l}.$$
Using the same argument, one can easily show that $\hb_\Pref=\hb_\Suff=\vectorize{\Ht^{(l)}}$.
Lastly, a similar observation can be done for the Hankel sub-blocks $\H_{\Pref,\Suff}^\sigma$ for each $\sigma\in\Sigma$: all of these sub-blocks are slices of the Hankel tensor $\Ht^{(2l+1)}$. Indeed, for any $u=u_1u_2\cdots u_l\in\Pref=\Sigma^l$ and  $v=v_1v_2\cdots v_l\in\Suff=\Sigma^l$ we have
$$(\H^\sigma_{\Pref,\Suff})_{u,v}=f(u\sigma v)=f(u_1\cdots u_l\sigma v_1\cdots v_l)=\Ht^{(2l+1)}_{u_1,\cdots, u_l,\sigma,v_1,\cdots, v_l}$$
from which it follows that
$$\H_{\Pref,\Suff}^\sigma = \left(\tenmatgen{\Ht^{(2l+1)}}{l,1,l}\right)_{:,\sigma,:}\text{ for all }\sigma\in\Sigma.$$
%This last observation allows us to rewrite the computation of the matrices $\A^\sigma$ in a more concise way: by letting $\At\in\R^{n\times \Sigma\times}$ be the tensor obtained by staking the 

Thus, in the case where $\Pref=\Suff=\Sigma^l$, all the sub-blocks of the Hankel matrix one needs to estimate for the spectral learning algorithm are matricization of Hankel tensors of tensor train rank at most $n$~(where $n$ is the number of states of the target WFA). Let us assume for now that we have access to the true Hankel tensors $\Ht^{(l)}$, $\Ht^{(2l)}$ and $\Ht^{(2l+1)}$ given in the tensor train format~(how to estimate these Hankel tensors in the tensor train format from data will be discussed in Section~\ref{sec:leveraging.TT.Hankel}):
\begin{align*}
    \Ht^{(l)} &= \TT{\Gt^{(l)}_1,\cdots,\Gt^{(l)}_l}\\
    \Ht^{(2l)} &= \TT{\Gt^{(2l)}_1,\cdots,\Gt^{(2l)}_{2l}}\\
    \Ht^{(2l+1)} &= \TT{\Gt^{(2l+1)}_1,\cdots,\Gt^{(2l+1)}_{2l+1}}
\end{align*}
where all tensor train decompositions are of rank $n$.
We now show how the tensor train structure of the Hankel tensors can be leveraged to significantly improve the computational complexity of the spectral learning algorithm. Recall first that in the standard case, this complexity is in $\bigo{n|\Pref||\Suff| + n^2|\Pref||\Sigma|}$~(where the first term corresponds to the truncated SVD of the Hankel matrix, and the second one to computing the transition matrices $\A^\sigma)$, which  is equal to $\bigo{n|\Sigma|^{2l} + n^2|\Sigma|^{l+1}}$ when $\Pref=\Suff=\Sigma^l$. In contrast, we will show that if the Hankel tensors are given in the tensor train format, the complexity of the spectral learning algorithm can be reduced to $\bigo{n^3l|\Sigma|}$.

First observe that the tensor train decomposition of the Hankel tensor $\Ht^{(2l)}$ already gives us the  rank $n$ factorization of the Hankel matrix $\H_{\Pref,\Suff}=\tenmatgen{\Ht^{(2l)}}{l,l}$, which can easily be seen from the following tensor network:
$$\Ht^{(2l)}=\begin{tikzpicture}[baseline=-0.5ex]
%%% $\Gt^{(2l)}_1$|left| $\Gt^{(2l)}_2$ $\cdots$|dots|  $\Gt^{(2l)}_{l}$ $\Gt^{(2l)}_{l+1}$|skipleft=0.5| $\cdots$|dots| $\Gt^{(2l)}_{2l-1}$ $\Gt^{(2l)}_{2l}$|right|

\tikzset{tensor/.style = {minimum size = 0.4cm,shape = circle,thick,draw=black,fill=blue!60!green!40!white,inner sep = 0pt}, edge/.style   = {thick,line width=.4mm},every loop/.style={}}
% $\Gt^{(2l)}_1$ ['left', '']
\node[tensor,fill=blue!60!green!40!white] (G0) at (0,0) {};
\draw[edge] (G0) -- (0,-0.5);
\node[draw=none,right=0.2cm of G0] (G0right) {};
\draw[edge] (G0) -- (G0right);
\node[draw=none] at (0,0.7) {$\Gt^{(2l)}_1$};

% $\Gt^{(2l)}_2$ []
\node[tensor,fill=blue!60!green!40!white] (G1) at (0.8,0) {};
\draw[edge] (G1) -- (0.8,-0.5);
\node[draw=none,left=0.2cm of G1] (G1left) {};
\draw[edge] (G1) -- (G1left);
\node[draw=none,right=0.2cm of G1] (G1right) {};
\draw[edge] (G1) -- (G1right);
\node[draw=none] at (0.8,0.7) {$\Gt^{(2l)}_2$};

% $\cdots$ ['dots', '']
\node[draw=none] (G2) at (1.6,0) {$\cdots$};
\node[draw=none] at (1.6,0.7) {$\cdots$};

% $\Gt^{(2l)}_{l}$ []
\node[tensor,fill=blue!60!green!40!white] (G4) at (2.4000000000000004,0) {};
\draw[edge] (G4) -- (2.4000000000000004,-0.5);
\node[draw=none,left=0.2cm of G4] (G4left) {};
\draw[edge] (G4) -- (G4left);
\node[draw=none,right=0.2cm of G4] (G4right) {};
\draw[edge] (G4) -- (G4right);
\node[draw=none] at (2.4000000000000004,0.7) {$\Gt^{(2l)}_{l}$};

% $\Gt^{(2l)}_{l+1}$ ['skipleft=0.5', '']
\draw[edge] (2.8000000000000003,0) -- (3.7,0);
\node[tensor,fill=blue!60!green!40!white] (G5) at (3.7,0) {};
\draw[edge] (G5) -- (3.7,-0.5);
\node[draw=none,left=0.2cm of G5] (G5left) {};
\draw[edge] (G5) -- (G5left);
\node[draw=none,right=0.2cm of G5] (G5right) {};
\draw[edge] (G5) -- (G5right);
\node[draw=none] at (3.7,0.7) {$\Gt^{(2l)}_{l+1}$};

% $\cdots$ ['dots', '']
\node[draw=none] (G6) at (4.5,0) {$\cdots$};
\node[draw=none] at (4.5,0.7) {$\cdots$};

% $\Gt^{(2l)}_{2l-1}$ []
\node[tensor,fill=blue!60!green!40!white] (G7) at (5.3,0) {};
\draw[edge] (G7) -- (5.3,-0.5);
\node[draw=none,left=0.2cm of G7] (G7left) {};
\draw[edge] (G7) -- (G7left);
\node[draw=none,right=0.2cm of G7] (G7right) {};
\draw[edge] (G7) -- (G7right);
\node[draw=none] at (5.3,0.7) {$\Gt^{(2l)}_{2l-1}$};

% $\Gt^{(2l)}_{2l}$ ['right', '']
\node[tensor,fill=blue!60!green!40!white] (G8) at (6.1,0) {};
\draw[edge] (G8) -- (6.1,-0.5);
\node[draw=none,left=0.2cm of G8] (G8left) {};
\draw[edge] (G8) -- (G8left);
\node[draw=none] at (6.1,0.7) {$\Gt^{(2l)}_{2l}$};

\draw[decoration={brace,mirror},decorate,thick] (-0.3,-0.7) -- node[below=5pt] {\P} (2.8,-0.7);
\draw[decoration={brace,mirror},decorate,thick] (3.2,-0.7) -- node[below=5pt] {\S} (6.3,-0.7);
\node[draw=none] () at (0.4,0.2) {\textcolor{gray}{$n$}};
\node[draw=none] () at (3,0.2) {\textcolor{gray}{$n$}};
\node[draw=none] () at (5.7,0.2) {\textcolor{gray}{$n$}};
\end{tikzpicture} $$
More formally, this shows that $\H_{\Pref,\Suff}=\P\S$ with $\P=\tenmatgen{\TT{\Gt^{(2l)}_1,\cdots,\Gt^{(2l)}_{l},\I}}{l,1}$ and $\S=\tenmatgen{\TT{\I,\Gt^{(2l)}_{l+1},\cdots,\Gt^{(2l)}_{2l}}}{1,l}$. The remaining step of the spectral learning algorithm consists in computing the pseudo-inverse of $\P$ and $\S$ and performing various matrix products involving the Hankel sub-blocks $\hb_\Pref$, $\hb_\Suff$ and  $\H_{\Pref,\Suff}^\sigma$ for each $\sigma\in\Sigma$. Observe that all the elements involved in these computations are tensors of tensor train rank at most $n$~(or matricizations of such tensors). It turns out that all these operations can be performed efficiently in the tensor tensor train format: the pseudo-inverses of $\P$ and $\S$ in the tensor train format can be computed in time $\bigo{n^3l|\Sigma|}$ and all the matrix products between $\P\pinv$ and $\S\pinv$ and the Hankel tensors can also be done in time $\bigo{n^3l|\Sigma|}$. Describing these tensor train computations in details go beyond the scope of this paper but these algorithms are well known in the tensor train and matrix product states communities. We refer the reader to~\citep{oseledets2011tensor} for efficient computations of matrix products in the tensor train format, and to~\citep{gelss2017tensor} and~\citep{klus2018tensor} for the computation of pseudo-inverse in the tensor train format.

We showed that in the case where $\Pref=\Suff=\Sigma^l$, the time complexity of the last two steps of the spectral learning algorithm can be reduced from an exponential dependency on $l$ to a linear one. This is achieved by leveraging the tensor train structure of the Hankel sub-blocks. However, recall that the spectral learning algorithm is consistent~(i.e. guaranteed to return the target WFA from an infinite amount of training data) only if $\Pref$ and $\Suff$ form a complete basis, that is $\Pref$ and $\Suff$ are such that  $\rank(\tenmatpar{\Hten_{\Pref,\Suff}}{1}) = \rank(\H)$. In the case where $\Pref=\Suff=\Sigma^l$, this condition is equivalent to $\rank(\tenmatgen{\Hten^{(2l)}}{l,l}) = \rank(\H)$. But it is not the case that for any function computed by a WFA, there exists an integer $l$ such that $\Pref=\Suff=\Sigma^l$ form a complete basis. Indeed, consider for example the function $f$ on the alphabet $\{a,b\}$ defined by $f(x)=1$ if $x=aa$ and $0$ otherwise. One can easily show that there exists a minimal WFA with 3 states computing $f$. However, it is easy to check that $\rank(\tenmatgen{\Hten^{(2l)}}{l,l})$ is equal to $1$ for $l=1$ and to $0$ for any other value of $l$. This implies that not all functions can be consistently recovered from training data using the efficient spectral learning algorithm we propose.

Luckily, this caveat can be addressed using a simple workaround. For any function $f:\Sigma^*\to \Rbb$, one can define a new alphabet $\tilde{\Sigma}=\Sigma\cup\{\underline{\lambda}\}$ where $\underline{\lambda}$ is a new symbol not in $\Sigma$ which will be treated as the empty string. One can then extend $f$ to $\tilde{f}:\tilde{\Sigma}^*\to\R$ naturally by ignoring the new symbol $\underline{\lambda}$, e.g. $\tilde{f}(\underline{\lambda}ab\underline{\lambda}c)=f(abc)$. Let $\H \in\R^{\Sigma^*\times\Sigma^*}$, $\tilde{\H}\in\R^{\tilde{\Sigma}^*\times\tilde{\Sigma}^*}$, $\Ht^{(2l)}\in\R^{\Sigma\times\cdots\times\Sigma}$ and $\tilde{\Ht}^{(2l)}\in\R^{\tilde{\Sigma}\times\cdots\times\tilde{\Sigma}}$ be the Hankel matrix and tensors of $f$ and $\tilde{f}$. Then, one can show that if $f$ can be computed by a WFA with $n$ states, there always exists an integer $l$ such that $\rank(\tenmatgen{\tilde{\Hten}^{(2l)}}{l,l})=\rank(\H)=n$. Indeed, in contrast with the Hankel tensor $\Ht^{(2l)}$ which only contains the values of $f$ on sequences of length exactly $2l$, the Hankel tensors $\tilde{\Ht}^{(2l)}$ contains the values of $f$ on all sequences of length smaller than or equal to $2l$.
One potential workaround would consist in estimating the Hankel sub-blocks of $\tilde{f}$ from data generated by $f$ and perform steps $3$ and $4$ of the spectral learning to recover the parameters of a WFA computing $\tilde{f}$. The transition matrix associated with the new symbol $\underline{\lambda}$ can be discarded to obtain the parameters of a WFA estimating $f$~(note that since the spectral learning algorithm is consistent, the transition matrix associated with the new symbol $\underline{\lambda}$ estimated from data is guaranteed to converge to the identity matrix as the training data increases). 

%Moreover, if the Hankel tensor is recovered in the TT format, one can directly discard the slice corresponding to $\underline{\lambda}$ for each of the cores in the Hankel tensor and then perform spectral learning. 
In practice, to estimate a Hankel tensor of length $L$, one could append every sequence in the dataset that is of length $L$ or smaller than $L$ with $\underline{\lambda}$ until it reaches length $L$ and then perform the standard Hankel recovery routine. It is worth mentioning that we did not have to use this workaround for any of the experiments presented in Section~\ref{sec:xp}. More importantly, one can show that if the parameters of a 2-RNN are drawn randomly then the workaround discussed above is not necessary~(i.e., one can consistently recover a random 2-RNN from data using the learning algorithm we propose), which is shown in the following proposition. 

\begin{proposition}
Let $\Acal = \langle \szero, \At, \sinf \rangle $ be a 2-RNN with $n$ states whose parameters are randomly drawn from a continuous distribution~(w.r.t. the Lebesgue measure) and let $\H\in\R^{\Sigma^*\times \Sigma^*}$ be its Hankel matrix. Then, with probability one, $\rank(\tenmatgen{\Hten^{(2l)}}{l,l}) = \rank(\H)$ for any $l$ such that $|\Sigma|^l\geq n$~(where $\Hten^{(2l)}$ is as defined in Eq.~\eqref{eq:def.hankel.tensor}). 
\end{proposition}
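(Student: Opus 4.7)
The plan is to leverage the tensor train structure of $\Hten^{(2l)}$ derived in Section~\ref{sec:TT.struct.hankel} to obtain the factorization $\tenmatgen{\Hten^{(2l)}}{l,l} = \P_l \S_l$, where $\P_l \in \R^{|\Sigma|^l \times n}$ has rows $(\P_l)_{u,:} = \szero^\top \A^{u_1} \cdots \A^{u_l}$ and $\S_l \in \R^{n \times |\Sigma|^l}$ has columns $(\S_l)_{:,v} = \A^{v_1} \cdots \A^{v_l} \sinf$ (with $\A^\sigma = \At_{:,\sigma,:}$). From this factorization we immediately get $\rank(\tenmatgen{\Hten^{(2l)}}{l,l}) \leq n$, and equality holds as soon as $\rank(\P_l) = \rank(\S_l) = n$. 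Since $\Acal$ has $n$ states we always have $\rank(\H) \leq n$, and since $\tenmatgen{\Hten^{(2l)}}{l,l}$ is a sub-block of $\H$ the chain $n = \rank(\tenmatgen{\Hten^{(2l)}}{l,l}) \leq \rank(\H) \leq n$ will then deliver the desired equality.

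The core task therefore reduces to showing $\rank(\P_l) = \rank(\S_l) = n$ with probability one for every $l$ such that $|\Sigma|^l \geq n$. For this I would invoke a standard genericity argument: the entries of $\P_l$ and $\S_l$ are polynomials in the parameters $(\szero, \At, \sinf)$, so the set of parameters at which $\rank(\P_l) < n$ coincides with the common zero locus of all $n \times n$ minor determinants of $\P_l$, an algebraic subvariety of the parameter space. A proper algebraic subvariety of $\R^N$ has Lebesgue measure zero, hence probability zero under any continuous distribution, so it suffices to exhibit, for each admissible $l$, a single parameter choice at which $\rank(\P_l) = n$ (and symmetrically for $\S_l$). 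A countable union over the admissible values of $l$ then preserves the probability-one conclusion uniformly in $l$.

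The technical crux is the explicit non-emptiness construction. For $|\Sigma| \geq 2$, I would enumerate $\Sigma = \{\sigma_0, \ldots, \sigma_{s-1}\}$ with $s = |\Sigma|$ and define the $0/1$ matrices $\A^{\sigma_i}$ by $\e_j^\top \A^{\sigma_i} = \e_{(sj+i) \bmod n}^\top$, together with $\szero = \e_0$. A straightforward induction on $l$ then shows that for $u = \sigma_{c_1} \cdots \sigma_{c_l}$ one has $\szero^\top \A^{u_1} \cdots \A^{u_l} = \e_k^\top$ with $k \equiv \sum_{t=1}^l s^{l-t} c_t \pmod{n}$. As $u$ ranges over $\Sigma^l$, $k$ takes every value in $\{0, 1, \ldots, s^l - 1\} \bmod n$, which covers $\{0, \ldots, n-1\}$ precisely under the assumption $s^l \geq n$. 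Hence every standard basis row appears among the rows of $\P_l$ and $\rank(\P_l) = n$; a symmetric construction acting on $\sinf$ from the left handles $\S_l$, and the edge case $|\Sigma|=1$ forces $n=1$ and is trivial. The main subtlety I anticipate is ensuring the construction delivers rank $n$ for \emph{every} admissible $l$ (not merely the smallest one), but the base-$|\Sigma|$ expansion is tailored for exactly this: each length-$l$ string directly encodes its residue class modulo $n$, so the range of reachable basis vectors saturates $\{\e_0,\ldots,\e_{n-1}\}$ as soon as $|\Sigma|^l\geq n$.
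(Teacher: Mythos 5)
Your proposal is correct and follows essentially the same route as the paper's proof: factor $\tenmatgen{\Hten^{(2l)}}{l,l}$ into the forward and backward matrices, note that rank deficiency is the vanishing of a polynomial in the parameters, and exhibit one parameter choice where it does not vanish. The only difference is that you spell out the explicit base-$|\Sigma|$ counter automaton witnessing non-vanishing (and the countable union over $l$), which the paper leaves as "one can easily find a 2-RNN such that $\det(\mat{F}_l^\top\mat{F}_l)\neq 0$"; your construction is a valid instantiation of that step.
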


\begin{proof}
Let $\mat{F}_l\in\mathbb{R}^{\Sigma^l\times n}$ and $\mat{B}_l\in \mathbb{R}^{n\times \Sigma^l}$ be the forward and backward matrices of the random 2-RNN, that is the rows of $\mat{F}_l$ are $\szero^\top \At_{:.u_1,:}\cdots \At_{:.u_l,:}$ for $u_1\cdots u_l \in \Sigma^l$ and the columns of  $\mat{B}_l$ are $\At_{:.v_1,:}\cdots \At_{:.v_l,:}\sinf$ for $v_1\cdots v_l \in \Sigma^l$. 
Let $l$ be any integer such that $|\Sigma|^l \geq n$. 
We first show that both $\mat{F}_l$ and $\mat{B}_l$ are full rank with probability one.
Observe that $\det(\mat{F}_l^\top \mat{F}_l)$ is a polynomial of the 2-RNN parameters $\szero, \At$. 
Since a polynomial is either zero or non-zero almost everywhere~\citep{caron2005zero}, and since one can easily find a 2-RNN such that $\det(\mat{F}_l^\top \mat{F}_l)\neq 0$~(using the fact that $|\Sigma|^l \geq n$), it follows that 
$\det(\mat{F}_l^\top \mat{F}_l)$ is non-zero almost everywhere. Consequently, since the parameters $\szero$ and $\At$ are drawn from a continuous distribution,  $\det(\mat{F}_l^\top \mat{F}_l)\neq 0$ with probability one, i.e. $\mat{F}_l$ is of rank $n$ with probability one. With a similar argument, one can show that $\mat{B}_l$ is of rank $n$ with probability one.

To conclude, since $\tenmatgen{\Hten^{(2l)}}{l,l} = \mat{F}_l\mat{B}_l$, both $\mat{F}_l$ and $\mat{B}_l$ have rank $n$, and $|\Sigma|^l \geq n$, it follows that $\tenmatgen{\Hten^{(2l)}}{l,l}$ has rank $n$ with probability one.

\end{proof}

%\revv{\sout{
% The workaround thus consists in estimating the Hankel sub-blocks of $\tilde{f}$ from data generated by $f$ and perform steps $3$ and $4$ of the spectral learning to recover the parameters of a WFA computing $\tilde{f}$. The transition matrix associated with the new symbol $\underline{\lambda}$ can be discarded to obtain the parameters of a WFA estimating $f$~(note that since the spectral learning algorithm is consistent, the transition matrix associated with the new symbol $\underline{\lambda}$ estimated from data is guaranteed to converge to the identity matrix as the training data increases). 
%In practice, to estimate for example the Hankel tensor $\tilde{\Hten}^{(2l)}$, any input sequence $x$ in the original training data will be duplicated $2l-|x|$ times by appending $\underline{\lambda}$ at the end of $x$ from $1$ to $2l-|x|$ times. The entries of the Hankel tensor corresponding to these duplicated inputs should all contain the same value, which is the one observed for $x$ in the training data.}}

\section{Weighted Automata and Second-Order Recurrent Neural Networks}
\label{sec:WFA.2RNN.equivalence}

In this section, we present an equivalence result between weighted automata and second-order RNN with linear activation functions~(linear 2-RNN). This result rigorously formalizes the idea that WFA are linear RNN. 

Recall that a 2-RNN $R=(\szero,\Aten,\vvsinf)$ maps any sequence of inputs $\x_1,\cdots,\x_k\in\R^d$ to
a sequence of outputs $\y_1,\cdots,\y_k\in\R^p$ defined for any $ t=1,\cdots,k$ by
\begin{equation}\label{eq:2RNN.definition}
\y_t = z_2(\vvsinf\h_t) \text{ with }\h_t = z_1(\Aten\ttv{1}\h_{t-1}\ttv{2}\x_t)
\end{equation}
where $z_1:\R^n\to\R^n$ and $z_2:\R^p\to\R^p$ are activation functions.
We think of a 2-RNN as computing
a function $f_R:(\R^d)^*\to\R^p$ mapping each input sequence $\x_1,\cdots,\x_k$ to the corresponding final output $\y_k$.
While $z_1$ and $z_2$ are usually non-linear
component-wise functions, we consider here the case where both $z_1$ and $z_2$ are the identity, and we refer to
the resulting model as a \emph{linear 2-RNN}.

Observe that for a linear 2-RNN $R$, the function $f_R$ is multilinear in the sense that, for any integer $l$, its restriction to the domain $(\R^d)^l$ is
multilinear. Another useful observation is that linear 2-RNN are invariant under change of basis: for any invertible matrix
$\P$, the linear 2-RNN $\tilde{M}=(\P\invtop\h_0,\Aten\ttm{1}\P\ttm{3}\P\invtop,\P\vvsinf)$ is such that $f_{\tilde{M}}=f_M$. 

One can easily show that the computation of the linear 2-RNN $R=(\szero,\Aten,\vvsinf)$ boils down to the following tensor network~(see proof of Theorem~\ref{thm:2RNN-vvWFA}):
$$f_R(\x_1,\x_2,\cdots, \x_k)=
\begin{tikzpicture}[baseline=-0.5ex]
\tikzset{tensor/.style = {minimum size = 0.4cm,shape = circle,thick,draw=black,fill=blue!60!green!40!white,inner sep = 0pt}, edge/.style   = {thick,line width=.4mm},every loop/.style={}}
%%% $\szero$|left|noleg| $\At$ $\At$ $\cdots$|dots| $\At$ $\At$ $\sinf$|right|noleg|

\tikzset{tensor/.style = {minimum size = 0.4cm,shape = circle,thick,draw=black,fill=blue!60!green!40!white,inner sep = 0pt}}
% $\szero$ ['left', 'noleg', '']
\node[tensor,fill=blue!60!green!40!white] (G0) at (0,0) {};
\node[draw=none,right=0.2cm of G0] (G0right) {};
\draw[edge] (G0) -- (G0right);
\node[draw=none] at (0,0.5) {$\szero$};

% $\At$ []
\node[tensor,fill=blue!60!green!40!white] (G1) at (0.8,0) {};
\draw[edge] (G1) -- (0.8,-0.5);
\node[draw=none,left=0.2cm of G1] (G1left) {};
\draw[edge] (G1) -- (G1left);
\node[draw=none,right=0.2cm of G1] (G1right) {};
\draw[edge] (G1) -- (G1right);
\node[draw=none] at (0.8,0.5) {$\At$};
\node[tensor,fill=green!60!red!40!white,minimum size = 0.5cm] at (0.8,-0.7) {$\x_1$};

% $\At$ []
\node[tensor,fill=blue!60!green!40!white] (G2) at (1.6,0) {};
\draw[edge] (G2) -- (1.6,-0.5);
\node[draw=none,left=0.2cm of G2] (G2left) {};
\draw[edge] (G2) -- (G2left);
\node[draw=none,right=0.2cm of G2] (G2right) {};
\draw[edge] (G2) -- (G2right);
\node[draw=none] at (1.6,0.5) {$\At$};
\node[tensor,fill=green!60!red!40!white,minimum size = 0.5cm] at (1.6,-0.7) {$\x_2$};

% $\cdots$ ['dots', '']
\node[draw=none] (G3) at (2.4000000000000004,0) {$\cdots$};
\node[draw=none] at (2.4000000000000004,0.5) {$\cdots$};

% $\At$ []
\node[tensor,fill=blue!60!green!40!white] (G5) at (3.2,0) {};
\draw[edge] (G5) -- (3.2,-0.5);
\node[draw=none,left=0.2cm of G5] (G5left) {};
\draw[edge] (G5) -- (G5left);
\node[draw=none,right=0.2cm of G5] (G5right) {};
\draw[edge] (G5) -- (G5right);
\node[draw=none] at (3.2,0.5) {$\At$};
\node[tensor,fill=green!60!red!40!white,minimum size = 0.5cm] at (3.2,-0.7) {$\x_k$};

% $\sinf$ ['right', 'noleg', '']
\node[tensor,fill=blue!60!green!40!white] (G6) at (4.0,0) {};
\node[draw=none,left=0.2cm of G6] (G6left) {};
\draw[edge] (G6) -- (G6left);
\draw[edge] (G6) -- (4.5,0);
\node[draw=none] at (4.0,0.5) {$\vvsinf$};
\end{tikzpicture}
$$

This computation is very similar, not to say equivalent, to the computation of a WFA $A=\vvwa$. Indeed, as we showed in the previous section, by stacking the transition matrices $\{\A^\sigma\}_{\sigma\in\Sigma}$ into a third order tensor $\At\in\R^{n\times\Sigma\times n}$ the computation of the WFA $A$ can be written as
\begin{align*}
f(\sigma_1\sigma_2\cdots \sigma_k)
&=
\begin{tikzpicture}[baseline=-0.5ex]
\tikzset{tensor/.style = {minimum size = 0.4cm,shape = circle,thick,draw=black,fill=blue!60!green!40!white,inner sep = 0pt}, edge/.style   = {thick,line width=.4mm},every loop/.style={}}

% $\szero$ ['left', 'noleg', '']
\node[tensor] (G0) at (0,0) {};
\node[draw=none,right=0.2cm of G0] (G0right) {};
\draw[edge] (G0) -- (G0right);
\node[draw=none] at (0,0.5) {$\szero$};

% $\A^{\sigma_1}$ ['noleg', 'skip=0.2', '']
\draw[edge] (0.39999999999999997,0) -- (1.5999999999999999,0);
\node[tensor] (G1) at (1.0,0) {};
\node[draw=none,left=0.2cm of G1] (G1left) {};
\draw[edge] (G1) -- (G1left);
\node[draw=none,right=0.2cm of G1] (G1right) {};
\draw[edge] (G1) -- (G1right);
\node[draw=none] at (1.0,0.5) {$\A^{\sigma_1}$};

% $\A^{\sigma_2}$ ['noleg', '']
\node[tensor] (G2) at (2.0,0) {};
\node[draw=none,left=0.2cm of G2] (G2left) {};
\draw[edge] (G2) -- (G2left);
\node[draw=none,right=0.2cm of G2] (G2right) {};
\draw[edge] (G2) -- (G2right);
\node[draw=none] at (2.0,0.5) {$\A^{\sigma_2}$};

% $\cdots$ ['dots', '']
\node[draw=none] (G3) at (2.8,0) {$\cdots$};
\node[draw=none] at (2.8,0.5) {$\cdots$};

% $\A^{\sigma_{k-1}}$ ['noleg', '']
\node[tensor] (G4) at (3.5999999999999996,0) {};
\node[draw=none,left=0.2cm of G4] (G4left) {};
\draw[edge] (G4) -- (G4left);
\node[draw=none,right=0.2cm of G4] (G4right) {};
\draw[edge] (G4) -- (G4right);
\node[draw=none] at (3.5999999999999996,0.5) {$\A^{\sigma_{k-1}}$};

% $\A^{\sigma_k}$ ['noleg', 'skip=0.2', '']
\draw[edge] (3.999999999999999,0) -- (5.2,0);
\node[tensor] (G5) at (4.6,0) {};
\node[draw=none,left=0.2cm of G5] (G5left) {};
\draw[edge] (G5) -- (G5left);
\node[draw=none,right=0.2cm of G5] (G5right) {};
\draw[edge] (G5) -- (G5right);
\node[draw=none] at (4.6,0.5) {$\A^{\sigma_k}$};

% $\sinf$ ['right', 'noleg', '']
\node[tensor] (G6) at (5.6,0) {};
\node[draw=none,left=0.2cm of G6] (G6left) {};
\draw[edge] (G6) -- (G6left);
\draw[edge] (G6) -- (6.1,0);
\node[draw=none] at (5.6,0.5) {$\vvsinf$};

\end{tikzpicture}\\
&=
\begin{tikzpicture}[baseline=-0.5ex]
\tikzset{tensor/.style = {minimum size = 0.4cm,shape = circle,thick,draw=black,fill=blue!60!green!40!white,inner sep = 0pt}, edge/.style   = {thick,line width=.4mm},every loop/.style={}}
%%% $\szero$|left|noleg| $\At$ $\At$ $\cdots$|dots| $\At$ $\At$ $\sinf$|right|noleg|

\tikzset{tensor/.style = {minimum size = 0.4cm,shape = circle,thick,draw=black,fill=blue!60!green!40!white,inner sep = 0pt}}
% $\szero$ ['left', 'noleg', '']
\node[tensor,fill=blue!60!green!40!white] (G0) at (0,0) {};
\node[draw=none,right=0.2cm of G0] (G0right) {};
\draw[edge] (G0) -- (G0right);
\node[draw=none] at (0,0.5) {$\szero$};

% $\At$ []
\node[tensor,fill=blue!60!green!40!white] (G1) at (0.8,0) {};
\draw[edge] (G1) -- (0.8,-0.5);
\node[draw=none,left=0.2cm of G1] (G1left) {};
\draw[edge] (G1) -- (G1left);
\node[draw=none,right=0.2cm of G1] (G1right) {};
\draw[edge] (G1) -- (G1right);
\node[draw=none] at (0.8,0.5) {$\At$};
\node[draw=none] at (0.8,-0.7) {$\sigma_1$};

% $\At$ []
\node[tensor,fill=blue!60!green!40!white] (G2) at (1.6,0) {};
\draw[edge] (G2) -- (1.6,-0.5);
\node[draw=none,left=0.2cm of G2] (G2left) {};
\draw[edge] (G2) -- (G2left);
\node[draw=none,right=0.2cm of G2] (G2right) {};
\draw[edge] (G2) -- (G2right);
\node[draw=none] at (1.6,0.5) {$\At$};
\node[draw=none] at (1.6,-0.7) {$\sigma_2$};

% $\cdots$ ['dots', '']
\node[draw=none] (G3) at (2.4000000000000004,0) {$\cdots$};
\node[draw=none] at (2.4000000000000004,0.5) {$\cdots$};

% $\At$ []
\node[tensor,fill=blue!60!green!40!white] (G4) at (3.2,0) {};
\draw[edge] (G4) -- (3.2,-0.5);
\node[draw=none,left=0.2cm of G4] (G4left) {};
\draw[edge] (G4) -- (G4left);
\node[draw=none,right=0.2cm of G4] (G4right) {};
\draw[edge] (G4) -- (G4right);
\node[draw=none] at (3.2,0.5) {$\At$};
\node[draw=none] at (3.2,-0.7) {$\sigma_{k-1}$};

% $\At$ []
\node[tensor,fill=blue!60!green!40!white] (G5) at (4.0,0) {};
\draw[edge] (G5) -- (4.0,-0.5);
\node[draw=none,left=0.2cm of G5] (G5left) {};
\draw[edge] (G5) -- (G5left);
\node[draw=none,right=0.2cm of G5] (G5right) {};
\draw[edge] (G5) -- (G5right);
\node[draw=none] at (4.0,0.5) {$\At$};
\node[draw=none] at (4.0,-0.7) {$\sigma_k$};

% $\sinf$ ['right', 'noleg', '']
\node[tensor,fill=blue!60!green!40!white] (G6) at (4.8,0) {};
\node[draw=none,left=0.2cm of G6] (G6left) {};
\draw[edge] (G6) -- (G6left);
\draw[edge] (G6) -- (5.3,0);
\node[draw=none] at (4.8,0.5) {$\vvsinf$};
\end{tikzpicture}
\end{align*}
Thus, if we restrict the input vectors of a linear 2-RNN to be one-hot encoding~(i.e. vectors from the canonical basis), the two models are strictly equivalent.

These observations unravel a fundamental connection between vv-WFA and linear 2-RNN: vv-WFA and
linear 2-RNN are expressively equivalent for representing functions defined over sequences of
discrete symbols. Moreover, both models have the same capacity in the sense that there is a direct
correspondence between the hidden units of a linear 2-RNN and the states of a vv-WFA computing the same function. More formally, we have the following theorem. 

\begin{theorem}\label{thm:2RNN-vvWFA}
Any function that can be computed by a vv-WFA with $n$ states can be computed by a linear 2-RNN with $n$ hidden units.
Conversely, any function that can be computed by a linear 2-RNN with $n$ hidden units on sequences of one-hot vectors~(\ie canonical basis 
vectors) can be computed by a WFA with $n$ states.

More precisely, the WFA $A=\vvwa$ with $n$ states and the linear 2-RNN $M=(\szero,\Aten,\vvsinf)$ with
$n$ hidden units, where $\Aten\in\R^{n\times \Sigma \times n}$ is defined by $\Aten_{:,\sigma,:}=\A^\sigma$ for all $\sigma\in\Sigma$, are
such that
$f_A(\sigma_1\sigma_2\cdots\sigma_k) = f_M(\x_1,\x_2,\cdots,\x_k)$ for all sequences of input symbols $\sigma_1,\cdots,\sigma_k\in\Sigma$,
where for each $i\in[k]$ the input vector $\x_i\in\R^\Sigma$ is
the one-hot encoding of the symbol $\sigma_i$.
\end{theorem}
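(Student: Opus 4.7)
The plan is to prove both directions simultaneously by verifying the correspondence between hidden states of the linear 2-RNN and the forward vectors of the vv-WFA by induction on sequence length. The entire proof hinges on one bookkeeping lemma about mode products, after which both implications fall out at once.

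First I would establish the key identity: if $\x = \e_\sigma \in \R^\Sigma$ is the one-hot encoding of $\sigma\in\Sigma$, then $\Aten\ttv{2}\x = \A^\sigma$. This follows directly from the definition of the mode-2 vector product and the convention $\Aten_{:,\sigma,:} = \A^\sigma$, since $(\Aten\ttv{2}\e_\sigma)_{i,k} = \sum_j \Aten_{i,j,k}(\e_\sigma)_j = \Aten_{i,\sigma,k} = (\A^\sigma)_{i,k}$. Substituting this into the recurrence of $M$ yields
\begin{equation*}
\h_t = \Aten\ttv{1}\h_{t-1}\ttv{2}\x_t = (\A^{\sigma_t})\ttv{1}\h_{t-1} = (\A^{\sigma_t})^\top \h_{t-1},
\end{equation*}
where the last equality uses that the mode-1 vector product of a matrix with a vector is exactly left multiplication by the transpose.

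Next, a straightforward induction on $t$ starting from $\h_0 = \szero$ gives $\h_k = (\A^{\sigma_k})^\top \cdots (\A^{\sigma_1})^\top \szero = (\A^{\sigma_1}\A^{\sigma_2}\cdots\A^{\sigma_k})^\top \szero$. Applying the (linear) output map then gives
\begin{equation*}
f_M(\x_1,\ldots,\x_k) = \vvsinf \h_k = \vvsinf\,(\A^{\sigma_1}\A^{\sigma_2}\cdots\A^{\sigma_k})^\top \szero = f_A(\sigma_1\sigma_2\cdots\sigma_k),
\end{equation*}
which establishes the first claim. For the converse, given a linear 2-RNN $M=(\szero,\Aten,\vvsinf)$ with input space $\R^\Sigma$, simply define $\A^\sigma \triangleq \Aten_{:,\sigma,:}$ for each $\sigma\in\Sigma$; the identical computation shows that the resulting vv-WFA $A=\vvwa$ satisfies $f_A(\sigma_1\cdots\sigma_k) = f_M(\e_{\sigma_1},\ldots,\e_{\sigma_k})$. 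Since the construction preserves the number of states/hidden units in both directions, the stated equivalence follows.

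The proof is essentially bookkeeping; the only real pitfall is keeping track of the transpose that appears when converting the mode-1 vector product into standard matrix-vector notation, and matching the ordering convention of the WFA product $\A^{\sigma_1}\A^{\sigma_2}\cdots\A^{\sigma_k}$ (left-to-right over the input) with the RNN's state update (which pre-multiplies by $(\A^{\sigma_t})^\top$ at each step, thereby reversing the order under transposition). Once the mode-product identity $\Aten\ttv{2}\e_\sigma = \A^\sigma$ is in hand, no further work is required.
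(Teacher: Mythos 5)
Your proposal is correct and takes essentially the same route as the paper: establish $\Aten\ttv{2}\e_\sigma = \A^\sigma$, induct on the sequence length to show $\h_k = (\A^{\sigma_1}\cdots\A^{\sigma_k})^\top\szero$, and apply the linear output map; the converse is the same construction read in reverse. No gaps.
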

\begin{proof}
We first show by induction on $k$ that, for any sequence $\sigma_1\cdots\sigma_k\in\Sigma^*$, the hidden state $\h_k$ computed by $M$~(see
Eq.~\eqref{eq:2RNN.definition})
on the corresponding one-hot encoded sequence $\x_1,\cdots,\x_k\in\R^d$
satisfies $\h_k = (\A^{\sigma_1}\cdots\A^{\sigma_k})^\top\szero$. The case $k=0$
is immediate. Suppose the result true for sequences of length up to $k$. One can check easily check that $\Aten\ttv{2}\x_i = \A^{\sigma_i}$
for any index $i$. Using the induction hypothesis it then follows that
\begin{align*}
\h_{k+1} &= \Aten \ttv{1}\h_k \ttv{2} \x_{k+1} = \A^{\sigma_{k+1}}\ttv{1} \h_k = (\A^{\sigma_{k+1}})^\top \h_k\\
&= (\A^{\sigma_{k+1}})^\top (\A^{\sigma_1}\cdots\A^{\sigma_k})^\top\szero = (\A^{\sigma_1}\cdots\A^{\sigma_{k+1}})^\top\szero .
\end{align*} 
To conclude, we thus have
\begin{equation*}
f_M(\x_1,\x_2,\cdots,\x_k) = \vvsinf\h_{k} = \vvsinf(\A^{\sigma_1}\cdots\A^{\sigma_{k}})^\top\szero = f_A(\sigma_1\sigma_2\cdots\sigma_k).\qedhere
\end{equation*}
\end{proof}
%While the connections between second-order RNN and finite state machines~(e.g. finite automata) have been previously noticed and investigated~\TODO{cite some things},
%to the best of our knowledge this is the first time this equivalence between linear 2-RNN and \emph{weighted} automata have been explicitly formalized.
This result first implies that linear 2-RNN defined over sequences of discrete symbols~(using one-hot encoding) \emph{can be provably learned using the spectral
learning algorithm for WFA/vv-WFA}; indeed, these  algorithms have been proved to compute consistent estimators.
Let us stress again that, contrary to the case of feed-forward architectures, learning recurrent networks with linear activation functions is not a trivial task.
Furthermore, Theorem~\ref{thm:2RNN-vvWFA} reveals that linear 2-RNN are a natural generalization of classical weighted automata  to functions
defined over sequences of continuous vectors~(instead of discrete symbols). This spontaneously raises the question of whether the spectral learning algorithms
for WFA and vv-WFA can be extended to the general setting of linear 2-RNN; we show that the answer is in the positive in the next section.

\section{Spectral Learning of Continuous Weighted Automata}
\label{sec:spec.learn.2RNN}

In this section, we extend the learning algorithm for vv-WFA to linear 2-RNN, thus at the same time addressing the limitation of the spectral learning algorithm to discrete inputs and  providing the first consistent learning algorithm for linear second-order RNN.

%\guillaume{Roadmap:
%\begin{itemize}
%\item Goal: given access to an oracle for an unknown 2-RNN, can we recover its parameters?
%\item[$\Rightarrow$] Yes, we just need to compute the Hankel tensors and then do an extended spectral method (Section~\ref{subsec:SL-2RNN}).
%\item This is not realistic, what if we don't have access to an oracle but to a sample where input sequences are drawn from some distribution.
%\item[$\Rightarrow$] Yes, we can actually estimate the Hankel tensors from this sample! This can be done using tools from matrix sensing~or low-rank matrix/tensor recovery
%from linear measurements~(Section~\ref{sec:Hankel.tensor.recovery}).   
%\end{itemize}
%}

\subsection{Recovering 2-RNN from Hankel Tensors}\label{subsec:SL-2RNN}
We first present an identifiability result showing how one can recover a linear 2-RNN computing a function $f:(\R^d)^*\to \R^p$ from observable tensors extracted from some Hankel tensor
associated with $f$. Intuitively, we obtain this result by reducing the problem to the one of learning a vv-WFA. This is done by considering the restriction of
$f$ to canonical basis vectors; loosely speaking, since the domain of this restricted function is isomorphic to $[d]^*$, this allows us to fall back onto the 
setting of sequences of discrete symbols.

It is not straightforward how the notion of Hankel matrix can be extended to a function $f:(\R^d)^*\to\R^p$ taking sequences of \emph{continuous} vectors as input. One natural way to proceed is to consider how $f$ acts on sequences of vectors from the canonical basis. Given a function  $f:(\R^d)^*\to\R^p$, we define its Hankel tensor $\Hten_f\in \R^{[d]^* \times [d]^* \times p}$ by
$$(\Hten_f)_{i_1\cdots i_s,  j_1\cdots j_t,:} = f(\e_{i_1},\cdots,\e_{i_s},\e_{j_1},\cdots,\e_{j_t}),$$ 
for all $i_1,\cdots,i_s,j_1,\cdots,j_t\in [d]$, which is infinite in two
of its modes. It is easy to see that $\Hten_f$ is also the Hankel tensor associated with the function $\tilde{f}:[d]^* \to \R^p$ mapping any
sequence $i_1i_2\cdots i_k\in[d]^*$ to $f(\e_{i_1},\cdots,\e_{i_k})$. Moreover, in the special case where $f$ can be computed by a linear 2-RNN, one
can use the multilinearity of $f$ to show that $$f(\x_1,\cdots,\x_k) = \sum_{i_1,\cdots,i_k = 1}^d (\x_1)_{i_1}\cdots(\x_l)_{i_k} \tilde{f}(i_1\cdots i_k).$$
This gives us some intuition on how one could learn $f$ by learning a vv-WFA computing $\tilde{f}$ using the spectral learning algorithm. 
That is, assuming access to the sub-blocks of the Hankel tensor $\Hten$ for a complete basis of prefixes and suffixes $\Pref,\Suff\subseteq [d]^*$, the spectral learning algorithm can be used to recover a vv-WFA computing $\tilde{f}$ and consequently a linear 2-RNN computing $f$ using Theorem~\ref{thm:2RNN-vvWFA}.

%However, while it is intuitive that this approach is sound, the technical details of its implementation deserve some attention. 

We now state the main result of this section, showing that a (minimal) linear 2-RNN computing a function $f:(\R^d)^*\to\R$ can be exactly recovered from sub-blocks of the Hankel tensor $\Hten_f$. For the sake of clarity, we present the learning algorithm for the particular case where there exists an $L$ such that 
the prefix and suffix sets consisting of all sequences of length $L$, that is $\Pref = \Suff = 
[d]^L$, forms a complete basis for $\tilde{f}$~(\ie the sub-block $\Hten_{\Pref,\Suff}\in\R^{[d]^L\times [d]^L\times p}$ of the Hankel tensor $\Hten_f$ is
such that $\rank(\tenmatpar{\Hten_{\Pref,\Suff}}{1}) = \rank(\tenmatpar{\Hten_f}{1})$). As discussed in Section~\ref{sec:spectral.learning.TT}, such an integer $L$ does not always exist even when the underlying function $f$ can be computed by a linear 2-RNN. However, the workaround described at the end of Section~\ref{sec:spectral.learning.TT} can be used here as well to extend this theorem to the case of any function $f$ that can be computed by a linear 2-RNN.

The following theorem can be seen as a reformulation of the classical spectral learning theorem using the low rank Hankel tensors  $\Ht^{(l)}$ introduced in Section~\ref{sec:TT.struct.hankel}. In the case of a continuous function $f:(\R^d)^*\to\R$, for any integer $l$, the finite tensor 
$\Hten^{(l)}_f\in \R^{ d\times \cdots \times d\times p}$ of order $l+1$ is defined by
$$  (\Hten^{(l)}_f)_{i_1,\cdots,i_l,:} = f(\e_{i_1},\cdots,\e_{i_l}) \ \ \ \text{for all } i_1,\cdots,i_l\in [d].$$
Observe that for any integer $l$, the tensor $\Hten^{(l)}_f$ can be obtained by reshaping a finite sub-block of the Hankel tensor $\Hten_f$. 

\begin{theorem}\label{thm:2RNN-SL}
Let $f:(\R^d)^*\to \R^p$ be a function computed by a minimal linear $2$-RNN  with $n$ hidden units and let
$L$ be an integer such that $$\rank(\tenmatgen{\Hten^{(2L)}_f}{L,L+1}) = n.$$

Then, for any $\P\in\R^{d^L\times n}$ and $\S\in\R^{n\times d^Lp}$ such that $$\tenmatgen{\Hten^{(2L)}_f}{L,L+1} = \P\S,$$ the
linear 2-RNN $R=(\szero,\Aten,\vvsinf)$ defined by

%$$\szero = (\S\pinv)^\top\tenmatgen{\Hten^{(L)}_f}{L+1},\ \ \ \ \Aten = (\tenmatgen{\Hten^{(2L+1)}_f}{L,1,L+1})\ttm{1}\P\pinv\ttm{3}(\S\pinv)^\top,\ \ \ \ 
%\vvsinf^\top = \P\pinv\tenmatgen{\Hten^{(L)}_f}{L,1}$$
$$\szero = (\S\pinv)^\top\tenmatgen{\Hten^{(L)}_f}{L+1}, \ \ \ \ \vvsinf^\top = \P\pinv\tenmatgen{\Hten^{(L)}_f}{L,1}
$$
$$\Aten = (\tenmatgen{\Hten^{(2L+1)}_f}{L,1,L+1})\ttm{1}\P\pinv\ttm{3}(\S\pinv)^\top$$
is a minimal linear $2$-RNN computing $f$.
\end{theorem}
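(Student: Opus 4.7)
The plan is to reduce this theorem to the classical spectral learning theorem for vv-WFA (Theorem~\ref{thm:fliess-vvWFA}) by combining the equivalence between linear 2-RNN and vv-WFA from Theorem~\ref{thm:2RNN-vvWFA} with the tensor-train structure of Hankel tensors developed in Section~\ref{sec:TT.struct.hankel}. Since $f$ is computed by a minimal linear 2-RNN $R=(\szero,\Aten,\vvsinf)$ with $n$ states, the multilinearity of $f$ together with the tensor-network rewriting in Section~\ref{sec:WFA.2RNN.equivalence} yields a rank-$n$ tensor-train decomposition
$$\Ht^{(l)}_f \;=\; \TT{\Aten\ttv{1}\szero,\; \Aten,\;\ldots,\;\Aten,\; \vvsinf^\top}$$
for every $l$, which simultaneously exhibits $\Ht^{(L)}_f$, $\Ht^{(2L)}_f$, and $\Ht^{(2L+1)}_f$ as reshapings of suitable products of the 2-RNN parameters.

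Unfolding this decomposition at $l=2L$ gives the key low-rank factorization
$$\tenmatgen{\Ht^{(2L)}_f}{L,L+1} \;=\; \Fb_L\,\Bb_L,$$
where $\Fb_L\in\R^{d^L\times n}$ is the forward matrix whose row indexed by $(i_1,\ldots,i_L)$ equals $\szero^\top\Aten_{:,i_1,:}\cdots\Aten_{:,i_L,:}$, and $\Bb_L\in\R^{n\times d^L p}$ is the matricization of $\TT{\Aten,\ldots,\Aten,\vvsinf^\top}$ (with $L$ copies of $\Aten$). The rank hypothesis together with minimality of $R$ forces $\Fb_L$ to have full column rank $n$ and $\Bb_L$ to have full row rank $n$. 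Given any other rank-$n$ factorization $\P\S=\tenmatgen{\Ht^{(2L)}_f}{L,L+1}$, there is then a unique invertible $\M\in\R^{n\times n}$ such that $\P=\Fb_L\M\inv$ and $\S=\M\Bb_L$, yielding $\P\pinv=\M\Fb_L\pinv$, $\S\pinv=\Bb_L\pinv\M\inv$, $\Fb_L\pinv\Fb_L=\I_n$, and $\Bb_L\Bb_L\pinv=\I_n$.

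Next, I would substitute $\P$ and $\S$ into the three closed-form expressions of the theorem and expand them using the tensor-train forms of $\Ht^{(L)}_f$ and $\Ht^{(2L+1)}_f$, whose relevant reshapings can be written directly in terms of $\Fb_L$, $\Aten$, $\Bb_L$, $\szero$, and $\vvsinf$. A direct computation then gives
$$\szero' \;=\; \M\invtop\szero,\qquad \vvsinf' \;=\; \vvsinf\M^\top,\qquad \Aten' \;=\; \Aten\ttm{1}\M\ttm{3}\M\invtop,$$
so the recovered triple $(\szero',\Aten',\vvsinf')$ is the image of $(\szero,\Aten,\vvsinf)$ under a change of basis on the hidden space. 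By the invariance of linear 2-RNN under such change of basis noted in Section~\ref{sec:WFA.2RNN.equivalence}, the recovered linear 2-RNN computes exactly $f$, and since it has $n$ hidden units it is minimal.

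The main obstacle I anticipate is the bookkeeping of the four matricizations $\tenmatgen{\cdot}{L,L+1}$, $\tenmatgen{\cdot}{L,1}$, $\tenmatgen{\cdot}{L+1}$, and $\tenmatgen{\cdot}{L,1,L+1}$ together with their interaction with the mode-products defining $\Aten'$: one must verify that $\Bb_L$ obtained by matricizing the right half of the tensor train is precisely the factor appearing on the right of $\tenmatgen{\Ht^{(2L)}_f}{L,L+1}$, and that the same tensor-train structure induces consistent factorizations of $\Ht^{(L)}_f$ and $\Ht^{(2L+1)}_f$ from which the formulas for $\szero'$, $\vvsinf'$, and $\Aten'$ can be read off. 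Once these identifications are in place, the remaining computation reduces to the standard vv-WFA spectral theorem extended to accommodate the $p$-dimensional output encoded by $\vvsinf^\top$, and the cancellations involving $\M$ and $\M\inv$ are essentially mechanical.
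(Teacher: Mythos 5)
Your proposal is correct and follows essentially the same route as the paper's proof: both identify the canonical rank-$n$ factorization of $\tenmatgen{\Hten^{(2L)}_f}{L,L+1}$ induced by the forward/backward matrices of the tensor-train structure, relate the given factorization $\P\S$ to it through an invertible change of basis $\M$ (the paper takes $\M=\P\pinv\P^\star$ and shows $\M\inv=\S^\star\S\pinv$, which is equivalent to your $\P=\Fb_L\M\inv$, $\S=\M\Bb_L$), and then verify that the recovered parameters are exactly the $\M$-conjugates of the true ones, concluding by invariance of linear 2-RNN under change of basis. The "bookkeeping" identities you flag are precisely the four matricization identities the paper states and uses, so nothing essential is missing.
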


\begin{proof}
%We first show that such a smallest integer $l$ exists.  Let $M = (\szero^\star,\Aten^\star,\vvsinf^\star)$  be a linear 2-RNN with $n$ hidden units computing
%$f$~(\ie such that $f_M=f$). It then follows from Theorem~\ref{thm:2RNN-vvWFA} and from the discussion  at the beginning of Section~\ref{subsec:SL-2RNN}
%that there exists a vv-WFA computing the function $\tilde{f}:[d]^*\to \R^p$ mapping any
%sequence $i_1i_2\cdots i_k\in[d]^*$ to $f(\e_{i_1},\cdots,\e_{i_k})$. Moreover, it is easy to check that $\rank{\tilde{f}} = n$ which implies $\rank(\tenmatpar{\Hten_f}{1}) = n$ by
%Theorem~\ref{thm:fliess-vvWFA}. Since $\tenmatgen{\Hten^{(2l)}_f}{l,l+1}$  is a finite sub-block of $\tenmatpar{\Hten_f}{1}$, the claim follows.
Let $\P\in\R^{d^L\times n}$ and $\S\in\R^{n\times d^Lp}$ be such that $\tenmatgen{\Hten^{(2L)}_f}{L,L+1} = \P\S$ and let $R^\star=(\szero,\Aten,\vvsinf)$ be a minimal linear 2-RNN computing $f$.
Define the tensors 
$$\Pten^\star = \TT{\Aten^\star\ttv{1}\szero^\star, \underbrace{\Aten^\star, \cdots, \Aten^\star}_{L-1\text{ times}}, \I_n}\in\R^{d\times\cdots\times d\times n}$$
and
$$\Sten^\star = \TT{\I_n,\underbrace{\Aten^\star, \cdots, \Aten^\star}_{L\text{ times}}, \vvsinf^\star}\in\R^{n\times d\times\cdots\times d\times p}$$ 
of order $L+1$ and $L+2$ respectively, and let $\P^\star = \tenmatgen{\Pten^\star}{l,1} \in\R^{d^L\times n}$ 
and $\S^\star = \tenmatgen{\Sten^\star}{1,L+1}  \in\R^{n\times d^Lp}$. Using the identity $\Hten^{(l)}_f = \TT{\Aten\ttv{1}\szero, \underbrace{\Aten, \cdots, \Aten}_{l-1\text{ times}}, \vvsinf^\top}$ for
any $l$, one can easily check the following identities~(see also Section~\ref{sec:TT.struct.hankel}):
\begin{gather*}
\tenmatgen{\Hten^{(2L)}_f}{L,L+1} = \P^\star\S^\star,\ \  \ \ \tenmatgen{\Hten^{(2L+1)}_f}{L,1,L+1}= \Aten^\star \ttm{1} \P^\star \ttm{3} (\S^\star)^\top,\\
\tenmatgen{\Hten^{(L)}_f}{L,1} = \P^\star(\vvsinf^\star)^\top, \ \ \ \ \ \ \
\tenmatgen{\Hten^{(L)}_f}{L+1} = (\S^\star)^\top\szero.
\end{gather*}

Let $\M = \P\pinv\P^\star$. We will show that $\szero = \M\invtop\szero^\star$, $\Aten = \Aten^\star \ttm{1}\M\ttm{3}\M\invtop$ and
$\vvsinf = \M\vvsinf^\star$, which will entail the results since linear 2-RNN are invariant under change of basis. First observe that $\M\inv = \S^\star\S\pinv$. Indeed,
we have
$\P\pinv\P^\star\S^\star\S\pinv = \P\pinv\tenmatgen{\Hten^{(2L)}_f}{L,L+1}\S\pinv = \P\pinv\P\S\S\pinv = \I$
where we used the fact that $\P$~(resp. $\S$) is of full column rank~(resp. row rank) for the last equality. 

The following derivations then follow from basic tensor algebra:
\begin{align*}
\szero 
&= 
(\S\pinv)^\top\tenmatgen{\Hten^{(L)}_f}{L+1} 
=
(\S\pinv)^\top (\S^\star)^\top\szero
=
(\S^\star\S\pinv)^\top
=
\M\invtop\szero^\star,\\
\ \\
\Aten 
&= 
(\tenmatgen{\Hten^{(2L+1)}_f}{L,1,L+1})\ttm{1}\P\pinv\ttm{3}(\S\pinv)^\top\\
&=
(\Aten^\star \ttm{1} \P^\star \ttm{3} (\S^\star)^\top)\ttm{1}\P\pinv\ttm{3}(\S\pinv)^\top\\
&=
\Aten^\star \ttm{1} \P\pinv\P^\star \ttm{3} (\S^\star\S\pinv)^\top =  \Aten^\star \ttm{1}\M\ttm{3}\M\invtop,\\
\ \\
\vvsinf^\top 
&= 
\P\pinv\tenmatgen{\Hten^{(L)}_f}{L,1}
=
\P\pinv\P^\star(\vvsinf^\star)^\top
=
\M\vvsinf^\star,
\end{align*}
which concludes the proof.
\end{proof}
Observe that such an integer $L$ exists under the assumption that $\Pcal = \Scal = 
[d]^L$ forms a complete basis for $\tilde{f}$.
It is also worth mentioning that a necessary condition for $\rank(\tenmatgen{\Hten^{(2L)}_f}{L,L+1}) = n$ is that
$d^L\geq n$, \ie $L$ must be of the order $\log_d(n)$.
% To conclude this section, we give the tensor network representation of this theorem, showing how one can recover the parameters of a linear 2-RNN from sub-blocks of its Hankel tensor.

\subsection{Hankel Tensors Recovery from Linear Measurements}\label{subsec:Hankel.tensor.recovery}

We showed in the previous section that, given the Hankel tensors $\Hten^{(L)}_f$, $\Hten^{(2L)}_f$ and $\Hten^{(2L+1)}_f$, one can recover 
a linear 2-RNN computing $f$ if it exists. This first implies that the class of functions that can be computed by linear 2-RNN is learnable  in Angluin's
exact learning model~\citep{angluin1988queries} where one has access to an oracle that can answer membership queries~(\eg \textit{what is the value computed by the target $f$ 
on~$(\x_1,\cdots,\x_k)$?}) and  equivalence queries~(\eg \textit{is the current hypothesis $h$ equal to the target $f$?}). While this fundamental result is 
of significant theoretical interest, assuming access to such an oracle is unrealistic. In this section, we show that a stronger learnability result can be obtained in a more realistic setting,
where we  only
assume access to randomly generated input/output examples $((\x^{(i)}_1,\x_2^{(i)},\cdots,\x_l^{(i)}),\y^{(i)})\in(\R^d)^*\times\R^p$ where $\y^{(i)} = f(\x^{(i)}_1,\x_2^{(i)},\cdots,\x_l^{(i)})$.

The key observation is that such an  example $((\x^{(i)}_1,\x_2^{(i)},\cdots,\x_l^{(i)}),\y^{(i)})$ can be seen as a \emph{linear measurement} of the 
Hankel tensor $\Hten^{(l)}$. Indeed, let $f$ be a function computed by a linear 2-RNN. Using the multilinearity of $f$ we have
\begin{align*}
f(\x_1,\x_2,\cdots,\x_l) 
&=
f\left(\sum_{i_1} (\x_1)_{i_1}\eb_{i_1}, \sum_{i_2} (\x_2)_{i_2}\eb_{i_2},\cdots,\sum_{i_l} (\x_l)_{i_l}\eb_{i_l}\right)\\
&=
\sum_{i_1,\cdots,i_l} (\x_1)_{i_1}\cdots (\x_l)_{i_l} f( \eb_{i_1}, \cdots,\eb_{i_l})  \\
&=
\sum_{i_1,\cdots,i_l} (\x_1)_{i_1}\cdots (\x_l)_{i_l} \Ht^{(l)}_{i_1,\cdots,i_l}  \\
&=
\Hten^{(l)}_f \ttv{1} \x_1 \ttv{2} \cdots \ttv{l} \x_l \\
&= 
\tenmatgen{\Hten^{(l)}}{l,1}^\top (\x_1\kron\cdots \kron \x_{l})
\end{align*}
where $(\eb_1,\cdots,\eb_l)$ denotes the canonical basis of $\R^l$. 
It follows that  each input/output example $((\x^{(i)}_1,\x_2^{(i)},\cdots,\x_l^{(i)}),\y^{(i)})$ constitutes a linear measurement of $\Ht^{(l)}$:
\begin{align*}
\y^{(i)} 
= 
\tenmatgen{\Hten^{(l)}}{l,1}^\top (\x^{(i)}_1\kron\cdots \kron \x_{l}^{(i)})
=
\tenmatgen{\Hten^{(l)}}{l,1}^\top \x^{(i)}
\end{align*}
where $\x^{(i)} := \x^{(i)}_1\kron\cdots \kron \x_{l}^{(i)}\in\R^{d^l}$. 
Hence,  by regrouping $N$ output examples $\y^{(i)}$ into
the matrix $\Ymat\in\R^{N\times p}$ and the corresponding input vectors $\x^{(i)}$ into the matrix $\X\in\R^{N\times d^l}$,
one can recover $\Hten^{(l)}$ by solving the linear system $\Ymat = \X\tenmatgen{\Hten^{(l)}}{l,1}$, which has a unique
solution whenever $\X$ is of full column rank. This simple estimation technique for the Hankel tensors allows us to design the first consistent learning algorithm for linear 2-RNN, which is summarized in Algorithm~\ref{alg:2RNN-SL}~(with the "Least-Squares" recovery method). More efficient recovery methods for the Hankel tensors will be discussed in the next section.
The following theorem shows that this learning algorithm is consistent. Its proof relies on the fact that
$\X$ will be of full column rank whenever $N\geq d^l$ and  the components of each $\x^{(i)}_j$ for $j\in[l],i\in[N]$
are drawn independently from a continuous distribution over $\R^{d}$~(w.r.t. the Lebesgue measure).

\begin{theorem}\label{thm:learning-2RNN}
Let $(\h_0,\Aten,\vvsinf)$ be a minimal linear 2-RNN with $n$ hidden units computing a function $f:(\R^d)^*\to \R^p$, and let $L$ be an integer%\footnote{Similarly to Theorem~\ref{thm:2RNN-SL}, this assumption can be lifted using the workaround described in Section~\ref{sec:spectral.learning.TT}.}
such that $\rank(\tenmatgen{\Hten^{(2L)}_f}{L,L+1}) = n$.

Suppose we have access to $3$ datasets
$$D_l = \{((\x^{(i)}_1,\x_2^{(i)},\cdots,\x_l^{(i)}),\y^{(i)}) \}_{i=1}^{N_l}\subset(\R^d)^l\times \R^p \text{ for } l\in\{L,2L,2L+1\}$$ 
where the entries of each $\x^{(i)}_j$ are drawn independently from the standard normal distribution and where each
$\y^{(i)} = f(\x^{(i)}_1,\x_2^{(i)},\cdots,\x_l^{(i)})$.

Then, if $N_l \geq d^l$ for $l =L,\ 2L,\ 2L+1$,
the linear 2-RNN $M$ returned by Algorithm~\ref{alg:2RNN-SL} with the least-squares method satisfies $f_M = f$ with probability one.
\end{theorem}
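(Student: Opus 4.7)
The plan is to reduce the learnability claim to two independent facts: (i) with probability one, each of the three least-squares systems for $l\in\{L,2L,2L+1\}$ admits a unique solution and thus recovers the Hankel tensor $\Hten^{(l)}_f$ exactly, and (ii) once these Hankel tensors are known exactly, Theorem~\ref{thm:2RNN-SL} guarantees that the algorithm returns a linear 2-RNN computing $f$.

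First I would formalize the linear measurement model developed just before the theorem. For a fixed $l$, stacking the $N_l$ examples gives $\Ymat = \X \tenmatgen{\Hten^{(l)}_f}{l,1}$ where the $i$-th row of $\X\in\R^{N_l\times d^l}$ is $(\x_1^{(i)}\kron\cdots\kron\x_l^{(i)})^\top$. The unique minimum-norm least-squares solution coincides with the true $\tenmatgen{\Hten^{(l)}_f}{l,1}$ as soon as $\X$ has full column rank $d^l$. Thus the entire argument reduces to showing that, when $N_l\ge d^l$ and the $\x_j^{(i)}$ have i.i.d.\ standard normal entries, $\X$ is of full column rank with probability one.

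The main obstacle is the latter rank statement, because the rows of $\X$ are not independent Gaussian vectors but Kronecker products of independent Gaussian vectors, so standard "generic full rank" arguments for Gaussian matrices do not apply verbatim. I would handle it by a Zariski-closure / polynomial-vanishing argument: take the $d^l\times d^l$ submatrix $\X_0$ formed by the first $d^l$ rows of $\X$; then $\det(\X_0)$ is a polynomial in the $d\cdot l \cdot d^l$ entries $\{(\x_j^{(i)})_k\}$. A polynomial on $\R^N$ is either identically zero or nonzero Lebesgue-almost everywhere~\citep{caron2005zero}, so it suffices to exhibit \emph{one} realization of the inputs for which $\det(\X_0)\neq 0$. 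Such a realization is obtained by choosing, for each multi-index $(i_1,\ldots,i_l)\in[d]^l$, an example with $\x_j^{(i)}=\eb_{i_j}$: the corresponding Kronecker product is the canonical basis vector $\eb_{i_1}\kron\cdots\kron\eb_{i_l}$ of $\R^{d^l}$, and enumerating all $d^l$ such multi-indices yields $\X_0$ equal~(up to a permutation) to the identity. Since standard normal vectors have a density with respect to Lebesgue measure, this polynomial is almost surely nonzero, so $\X_0$, hence $\X$, has full column rank with probability one.

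Finally I would combine the three steps. Applying the argument above to $l=L$, $l=2L$ and $l=2L+1$ and taking the intersection of the three probability-one events, the algorithm recovers $\Hten^{(L)}_f$, $\Hten^{(2L)}_f$ and $\Hten^{(2L+1)}_f$ exactly. Because $\rank(\tenmatgen{\Hten^{(2L)}_f}{L,L+1})=n$ by assumption, any rank-$n$ factorization $\tenmatgen{\Hten^{(2L)}_f}{L,L+1}=\P\S$ computed by the algorithm satisfies the hypotheses of Theorem~\ref{thm:2RNN-SL}, which then yields that the returned linear 2-RNN $M=(\szero,\Aten,\vvsinf)$ computes $f$, i.e.\ $f_M=f$ with probability one. \qed
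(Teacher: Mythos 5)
Your proposal is correct and follows essentially the same route as the paper: reduce to exact recovery of the three Hankel tensors via full column rank of $\X$, establish that rank property by a polynomial-nonvanishing (measure-zero) argument with an explicit witness built from canonical basis vectors, and then invoke Theorem~\ref{thm:2RNN-SL}. The only cosmetic difference is that you test $\det(\X_0)$ of a square submatrix where the paper tests $\det(\X_S^\top\X_S)$; both are the same generic-position argument.
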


\begin{proof}
We just need to show for each $l\in \{L,2L,2L+1\}$ that, under the hypothesis of the Theorem, the Hankel tensors $\hat{\Hten}^{(l)}$ computed in line~\ref{alg.line.lst-sq} of
Algorithm~\ref{alg:2RNN-SL} are equal to the true Hankel tensors $\Hten^{(l)}$ with probability one. Recall that these tensors are computed by solving the least-squares
problem
$$\hat{\Hten}^{(l)} = \argmin_{T\in \R^{d\times\cdots\times d\times p}} \norm{\X\tenmatgen{\T}{l,1} - \Ymat}_F^2$$
where $\X\in\R^{N_l\times d_l}$ is the matrix  with rows $\x^{(i)}_1\kron\x_2^{(i)}\kron\cdots\kron\x_l^{(i)}$ for each $i\in[N_l]$. Since  $\X\tenmatgen{\Hten^{(l)}}{l,1} = \Ymat$ and  the solution
of the least-squares problem is unique as soon as $\X$ is of full column rank, we just need to show that this is the case with probability one
when the entries of the vectors $\x^{(i)}_j$ are drawn at random from a standard normal distribution. The result will  then directly follow
by applying Theorem~\ref{thm:2RNN-SL}.

We will show that the set 
$$\Scal = \{ (\x_1^{(i)},\cdots, \x_l^{(i)}) \mid \ i\in[N_l],\ dim(span(\{ \x^{(i)}_1\kron\x_2^{(i)}\kron\cdots\kron\x_l^{(i)} \})) < d^l\} $$
has Lebesgue measure $0$ in $((\R^d)^{l})^{N_l}\simeq \R^{dlN_l}$ as soon as $N_l \geq d^l$, which will imply that it has probability $0$ under any continuous probability, hence
the result. For any  $S=\{(\x_1^{(i)},\cdots, \x_l^{(i)})\}_{i=1}^{N_l}$, we denote  by $\X_S\in\R^{N_l\times d^l}$  the matrix  with rows $\x^{(i)}_1\kron\x_2^{(i)}\kron\cdots\kron\x_l^{(i)}$.
One can easily check that $S\in\Scal$ if and only if $\X_S$ is of rank strictly less than $d^l$, which is equivalent to the determinant of 
$\X_S^\top\X_S$ being equal to $0$. Since this determinant is a polynomial in the entries of the vectors $\x_j^{(i)}$, $\Scal$ is an algebraic
subvariety of $\R^{dlN_l}$.
It is then easy to check that the polynomial $det(\X_S^\top\X_S)$ is not uniformly 0 when $N_l \geq d^l$. Indeed, 
it suffices to choose the vectors $\x_j^{(i)}$ such that the family  $(\x^{(i)}_1\kron\x_2^{(i)}\kron\cdots\kron\x_l^{(i)})_{n=1}^{N_l}$ spans the whole space 
$\R^{d^l}$~(which is possible since we can choose arbitrarily any of the $N_l\geq d^l$ elements of this family),  hence the result. 
In conclusion, $\Scal$ is a proper algebraic subvariety of $\R^{dlN_l}$ and hence has Lebesgue
measure zero~\cite[Section 2.6.5]{federer2014geometric}.

%Indeed, suppose that
%$N_l = d^l$~(the case $N_l > d^l$ can be handled similarly), it suffices to choose $\x_i^{(i)} = \e_n$ for each $n\in[N_l]$, in which case
%$(\x^{(i)}_1\kron\x_2^{(i)}\kron\cdots\kron\x_l^{(i)})_{n=1}^{N_l} = (\e_{i_1}\kron\e_{i_2}\kron\cdots\kron\e_{i_l})_{i_1,\cdots,i_l = 1}^{d}$ forms
%a basis of the tensor product space $\R^d\kron\cdots\kron^d$ hence the result. 
%In conclusion, $\Scal$ is a proper algebraic subvariety of $\R^{dlN_l}$ and hence has Lebesgue
%measure zero~\citep[Section 2.6.5]{federer2014geometric}.
\end{proof}

\begin{algorithm}[tb]
   \caption{\texttt{2RNN-SL}: Spectral Learning of linear 2-RNN }
   \label{alg:2RNN-SL}
\begin{algorithmic}[1]
   \REQUIRE Three training datasets $D_L,D_{2L},D_{2L+1}$ with input sequences of length $L$, $2L$ and $2L+1$ respectively, a \texttt{recovery\_method}, rank $R$, noise tolerance parameter $\varepsilon$~(for Nuclear Norm) and learning rate $\gamma$~(for IHT/TIHT/SGD).
   %\ENSURE ...
   \FOR{$l\in\{L,2L,2L+1\}$}
       \STATE\label{alg.firstline.forloop} Use $D_l = \{((\x^{(i)}_1,\x_2^{(i)},\cdots,\x_l^{(i)}),\y^{(i)}) \}_{i=1}^{N_l}\subset(\R^d)^l\times \R^p$ to build $\X\in\R^{N_l\times d^l}$ with rows $\x^{(i)}_1\kron\x_2^{(i)}\kron\cdots\kron\x_l^{(i)}$ for $i\in[N_l]$ and  $\Ymat\in\R^{N_l\times p}$ with rows $\y^{(i)}$ for $i\in[N_l]$.
       \IF{\texttt{recovery\_method} = "Least-Squares"}
           \STATE\label{alg.line.lst-sq} $$\Hten^{(l)} = \displaystyle\argmin_{\T\in \R^{d\times\cdots\times d\times p}} \norm{\X\tenmatgen{\T}{l,1} - \Ymat}_F^2.$$
       \ELSIF{\texttt{recovery\_method} = "Nuclear Norm"}
           \STATE $$\Hten^{(l)} = \displaystyle\argmin_{\T\in \R^{d\times\cdots\times d\times p}} \norm{\tenmatgen{\T}{\ceil{l/2},l-\ceil{l/2} + 1}}_*\ \ \ \text{subject to } \|\X \tenmatgen{\T}{l,1}- \Ymat\|\leq \varepsilon.$$
           \label{alg.line.nucnorm}
       \ELSIF{\texttt{recovery\_method} = "IHT" \OR \texttt{recovery\_method} =  "TIHT" }\label{alg.iht.start}
           \STATE Initialize $\Hten^{(l)} \in \R^{d\times\cdots\times d\times p}$.
           \REPEAT\label{alg.line.iht.start}
               \STATE\label{alg.line.iht.gradient} 
               $\tenmatgen{\Hten^{(l)}}{l,1} \leftarrow \tenmatgen{\Hten^{(l)} }{l,1} + \gamma\X^\top(\Ymat - \X\tenmatgen{\Hten^{(l)} }{l,1})$
               %\STATE // Projection onto the manifold  of rank $R$ matrices~(IHT) or tensors of TT rank $R$~(TIHT):
               \STATE $\Hten^{(l)} \leftarrow \texttt{project}(\Hten^{(l)},R)$~(using either SVD for IHT or TT-SVD for TIHT)
               %\STATE Projection: either $\tenmatgen{\Hten^{(l)}}{\ceil{l/2},l-\ceil{l/2} + 1} = \texttt{project-SVD}(\tenmatgen{\Hten^{(l)}}{\ceil{l/2},l-\ceil{l/2} + 1}, R)$~(IHT) or $\Hten^{(l)} = \texttt{project-TT-SVD}(\Hten^{(l)}, R)$~(TIHT)
               \UNTIL{convergence}\label{alg.line.iht.end}\label{alg.iht.stop}
       %\STATE Let $\Hten^{(L)}\in R^{d\times \cdots \times d\times p}$ be the unique tensor such that $\tenmatgen{\Hten^{(L)}}{L,1}=\Tmat$.
       
       \ELSIF{\texttt{recovery\_method} = "SGD" \OR \texttt{recovery\_method} =  "ALS"}
            \STATE Initialize all cores of the rank $R$ TT-decomposition $\Hten^{(l)} = \TT{\Gten^{(l)}_1,\cdots,\Gten^{(l)}_{l+1}}$.\\
           // \emph{Note that $\Hten^{(l)}$ is never explicitly constructed.}
           \REPEAT\label{alg.line.sgdals.start}
                \FOR{$i=1,\cdots,l+1$}
                    \STATE $\Gten^{(l)}_i\leftarrow\begin{cases}
                    \Gten^{(l)}_i - \gamma \nabla_{\Gten^{(l)}_i} \norm{\X\tenmatgen{\TT{\Gten^{(l)}_1,\cdots,\Gten^{(l)}_{l+1}}}{l,1} - \Ymat}_F^2&\text{for SGD} \\
                     \displaystyle\argmin_{\Gten^{(l)}_i} \norm{\X\tenmatgen{\TT{\Gten^{(l)}_1,\cdots,\Gten^{(l)}_{l+1}}}{l,1} - \Ymat}_F^2&\text{for ALS}
                    \end{cases}$
                \ENDFOR
           \UNTIL{convergence}\label{alg.line.sgdals.end}
       
       \ENDIF\label{alg.lastline.forloop}

   \ENDFOR
   
   \STATE\label{alg.line.svd} Let $\tenmatgen{\Hten^{(2L)}}{L,L+1} = \P\S$ be a rank $R$ factorization.
   \STATE Return the linear 2-RNN $(\h_0,\Aten,\vvsinf)$ where 
   \begin{align*}
    \hb_0\ &= (\S\pinv)^\top\tenmatgen{\Hten^{(L)}_f}{L+1},\ \ \ \ \vvsinf^\top = \P\pinv\tenmatgen{\Hten^{(L)}_f}{L,1}\\
    \Aten\ &= (\tenmatgen{\Hten^{(2L+1)}_f}{L,1,L+1})\ttm{1}\P\pinv\ttm{3}(\S\pinv)^\top
\end{align*}  
\end{algorithmic}
\end{algorithm}

A few remarks on this theorem are in order.   The first observation is that the $3$
datasets $D_L$, $D_{2L}$ and $D_{2L+1}$ do not need to be drawn independently from one another~(\eg the sequences in $D_{L}$ can
be prefixes of the sequences in $D_{2L}$ but it is not necessary). In particular, the result still holds when  the datasets $D_L$, $D_{2L}$ and $D_{2L+1}$ are constructed from a unique dataset 
$$S =\{((\x^{(i)}_1,\x_2^{(i)},\cdots,\x_T^{(i)}),(\y^{(i)}_1,\y^{(i)}_2,\cdots,\y^{(i)}_T)) \}_{i=1}^{N}$$
of input/output sequences with $T\geq 2L+1$, where $\y^{(i)}_t = f(\x^{(i)}_1,\x_2^{(i)},\cdots,\x_t^{(i)})$ for any $t\in[T]$.
Observe that having access to such input/output training sequences is not an unrealistic assumption: for example when training RNN for
language modeling the output $\y_t$ is the  conditional probability vector of the next symbol. Lastly, when the outputs $\y^{(i)}$ are noisy, one can solve the least-squares problem
$\norm{\Ymat - \X\tenmatgen{\Hten^{(l)}}{l,1}}^2_F$ to approximate the Hankel tensors; we will empirically evaluate this approach 
in Section~\ref{sec:xp} and we defer its theoretical analysis  in the noisy setting to future work.

\subsection{Leveraging the low rank structure of the Hankel tensors}
\label{sec:leveraging.TT.Hankel}
While the least-squares method is sufficient to obtain the theoretical guarantees of Theorem~\ref{thm:learning-2RNN}, it does not leverage
the low rank structure of the Hankel tensors $\Hten^{(L)}$, $\Hten^{(2L)}$ and $\Hten^{(2L+1)}$. We now  propose several alternative recovery
methods to leverage this structure, in order to improve both sample complexity and time complexity. The sample efficiency and running time of these methods will be assessed in a simulation study in Section~\ref{sec:xp}~(deriving improved sample
complexity guarantees using these methods is left for future work).

We first propose two alternatives to solving the least-squares problem  $\Ymat = \X\tenmatgen{\Hten^{(l)}}{l,1}$ that leverage the low matrix rank structure of the Hankel tensor. Indeed, knowing that $\tenmatgen{\Hten^{(l)}}{\ceil{l/2},l-\ceil{l/2} + 1}$ can be approximately low rank~(if the target function is computed by a WFA with a small number of states), one can achieve better sample complexity by taking into account the fact that the effective number of parameters needed to describe this matrix can be significantly lower than its number of entries. The first approach is to reformulate the least-squares problem as a nuclear norm minimization problem~(see line~\ref{alg.line.nucnorm} of Algorithm~\ref{alg:2RNN-SL}). The nuclear norm is the tightest convex relaxation of the matrix rank and the resulting optimization problem can be solved using standard convex optimization toolbox~\citep{candes2011tight,recht2010guaranteed}. A second approach is a non-convex optimization algorithm: iterative hard thresholding~(IHT)~\citep{jain2010guaranteed}~(see lines~\ref{alg.iht.start}-\ref{alg.iht.stop} of Algorithm~\ref{alg:2RNN-SL}). This optimization method is iterative and  boils down to a projected gradient descent algorithm: at each iteration, the Hankel tensor is updated by taking a step in the direction opposite to the gradient of the least-squares objective, before being projected onto the manifold of low rank matrices using truncated SVD. More precisely, first the following gradient update is performed:
\begin{align*}
\tenmatgen{\Hten^{(l)}}{l,1} 
&\leftarrow
\tenmatgen{\Hten^{(l)} }{l,1} - \gamma \nabla_{\tenmatgen{\Hten^{(l)} }{l,1}} \norm{\X\tenmatgen{\Ht^{(l)}}{l,1} - \Ymat}_F^2\\
&= 
\tenmatgen{\Hten^{(l)} }{l,1} + \gamma\X^\top(\Ymat - \X\tenmatgen{\Hten^{(l)} }{l,1})
\end{align*}
where $\gamma$ is the learning rate.
Then, a truncated SVD of the matricization $\tenmatgen{\Hten^{(l)}}{\ceil{l/2},l-\ceil{l/2} + 1}$ is performed to obtain a low rank approximation of the Hankel tensor.

Both the nuclear norm minimization and the iterative hard thresholding algorithm only leverages the fact that the matrix rank of $\tenmatgen{\Hten^{(l)}}{\ceil{l/2},l-\ceil{l/2} + 1}$ is small. However, as we have shown in Section~\ref{sec:TT.struct.hankel}, the Hankel tensor $\Hten^{(l)}$ exhibits a stronger structure: it is of low tensor train rank~(which implies that \emph{any} of its matricization is a low rank matrix). We now present three methods leveraging this structure for the recovery of the Hankel tensors from linear measurements. The first optimization algorithm is tensor iterative hard thresholding~(TIHT)~\citep{rauhut2017low} which is the tensor generalization of IHT. Similarly to IHT, TIHT is a projected gradient descent algorithm where the projection step consists in projecting the Hankel tensor onto the manifold of tensors with low tensor train rank~(instead of projecting onto the set of low rank matrices): after the gradient update described above, a low rank tensor train approximation of the Hankel tensor $\Hten^{(l)}$ is computed using the TT-SVD algorithm~\citep{oseledets2011tensor}. 

Even though TIHT leverages the tensor train structure of the Hankel tensors to obtain better sample complexity, its computational complexity remains high since the Hankel tensor $\Hten^{(l)}$ needs to alternatively be converted between its dense form~(for the gradient descent step) and its tensor train decomposition~(for the projection steps). Observe here that the size of these two objects significantly differs: the full Hankel tensor $\Hten^{(l)}$ has size $d^lp$ whereas the number of parameters of its tensor train decomposition is only in $\bigo{ldR^2+pR}$, where $R$ is the rank of the tensor train decomposition. Similarly to the efficient learning algorithm in the tensor train format presented in Section~\ref{sec:spectral.learning.TT}, the recovery of the Hankel tensors can be carried out in the tensor train format without never having to explicitly construct the tensor $\Hten^{(l)}$. We conclude by presenting two optimization methods to recover the Hankel tensors from data directly in the tensor train format. For both methods, the Hankel tensor $\Hten^{(l)}$ is never explicitly constructed but parameterized by the core tensors $\Gt_1,\cdots,\Gt_{l+1}$ of its TT decomposition:
$$\Hten^{(l)} = \TT{\Gten_1,\cdots,\Gten_{l+1}}.$$
Both methods are iterative and will optimize the least-squares objective with respect to each of the core tensors in turn until convergence. The first one is the alternating least-squares algorithm~(ALS), which is one of the workhorse of tensor decomposition algorithms~\citep{Kolda09}. In ALS, at each iteration a least-squares problem is solved in turn for each one of the cores of the TT decomposition:
$$\Gten_i\leftarrow \argmin_{\Gten_i} \norm{\X\tenmatgen{\TT{\Gten_1,\cdots,\Gten_{l+1}}}{l,1} - \Ymat}_F^2\ \ \text{for }i=1,\cdots,l+1.$$
The second one consists in simply using gradient descent to perform a gradient step with respect to each one of the core tensors at each iteration:
$$\Gten_i\leftarrow {\Gten_i} -\gamma\nabla_{\Gten_i}\norm{\X\tenmatgen{\TT{\Gten_1,\cdots,\Gten_{l+1}}}{l,1} - \Ymat}_F^2\ \ \text{for }i=1,\cdots,l+1$$
where $\gamma$ is the learning rate.
Both methods are described in lines~\ref{alg.line.sgdals.start}-\ref{alg.line.sgdals.end} of Algorithm~\ref{alg:2RNN-SL}. Combining these two optimization methods with the spectral learning algorithm in the tensor train format described in Section~\ref{sec:spectral.learning.TT} results in an efficient learning algorithm to estimate a linear 2-RNN from training data, where the Hankel tensors are never explicitly constructed but always manipulated in the tensor train format. 

To conclude this section, we briefly mention that the ALS and gradient descent algorithms can straightforwardly be adapted to perform optimization with respect to mini-batches instead of the whole training dataset. This allows us to further scale the algorithm to large training sets.

\section{Experiments}\label{sec:xp}

In this section, we perform experiments on two toy examples to compare how the choice of the recovery method~(\texttt{LeastSquares}, \texttt{NuclearNorm}, \texttt{IHT}, \texttt{TIHT}, \texttt{ALS} and \texttt{Gradient Descent}) affects the sample efficiency of Algorithm~\ref{alg:2RNN-SL}, and the corresponding computation time. We  also report the performance obtained by refining the solutions returned by our algorithm~(with both TIHT and ALS recovery methods) using stochastic gradient descent~(\texttt{TIHT+SGD}, \texttt{ALS+SGD}). In addition, we perform experiments on a real world dataset of wind speed data from TUDelft, which is used in~\citep{lin2016short}. For the real world data, we include the original results for competitive approaches from~\citep{lin2016short}. 

\subsection{Synthetic data}

We perform experiments on two toy problems: recovering a random 2-RNN from data and a simple addition task. For the random 2-RNN problem, we randomly generate a linear 2-RNN with $5$ units computing a function $f:\R^3\to\R^2$ by 
drawing the entries of all parameters  $(\h_0,\Aten,\vvsinf)$  independently  from a normal distribution
$\Ncal(0,0.2)$.
The training data consists of $3$ independently drawn sets
$D_l = \{((\x^{(i)}_1,\x_2^{(i)},\cdots,\x_l^{(i)}),\y^{(i)}) \}_{i=1}^{N_l}\subset(\R^d)^l\times \R^p$ for $l\in\{L,2L,2L+1\}$ with $L=2$,
where  each $\x^{(i)}_j\sim\Ncal(\vec{0},\I)$ and where the outputs can be noisy, \ie 
$\y^{(i)} = f(\x^{(i)}_1,\x_2^{(i)},\cdots,\x_l^{(i)}) + \vecs{\xi}^{(i)}$ where $\vecs{\xi}^{(i)}\sim\Ncal(0,\sigma^2)$ for some noise variance parameter $\sigma^2$. 
For the addition problem, the goal is to learn a simple arithmetic function computing the sum of the running differences between the two components of a sequence
of $2$-dimensional vectors, \ie $f(\x_1,\cdots,\x_k) = \sum_{i=1}^k \v^\top\x_i$ where $\v^\top = (-1\ \ 1)$. The $3$ training datasets are generated 
using the same process as above and a constant entry equal to one is added to all the input vectors to encode a bias 
term~(one can check that the resulting function can be computed by a linear 2-RNN with $2$ hidden units). 

\begin{figure*}[t]
%\vspace*{-0.4cm}
\begin{center}
\hspace*{-1cm}\includegraphics[width=1.\textwidth]{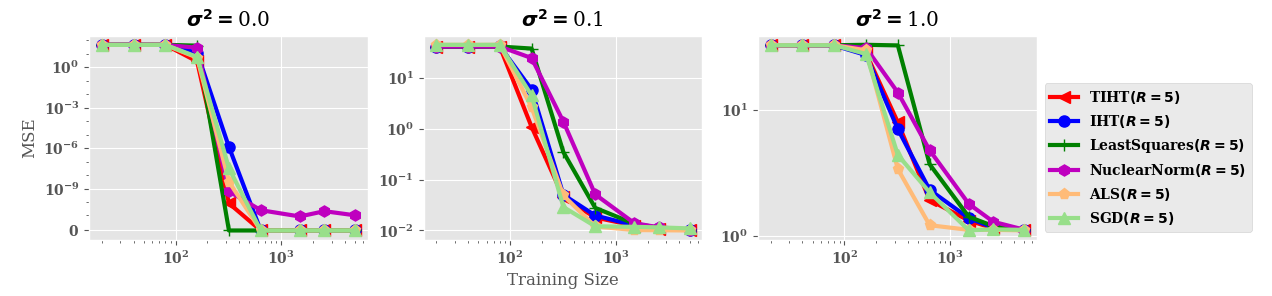}
\end{center}
%\vspace*{-0.5cm}
\caption{~Average MSE as a function of the training set size for learning a random linear 2-RNN with different values of output noise.}
\label{fig:RandomRNN}
\end{figure*}
\begin{figure*}[h]
\begin{center}
%\vspace*{-0.25cm}
\hspace*{-1cm}\includegraphics[width=1.\textwidth]{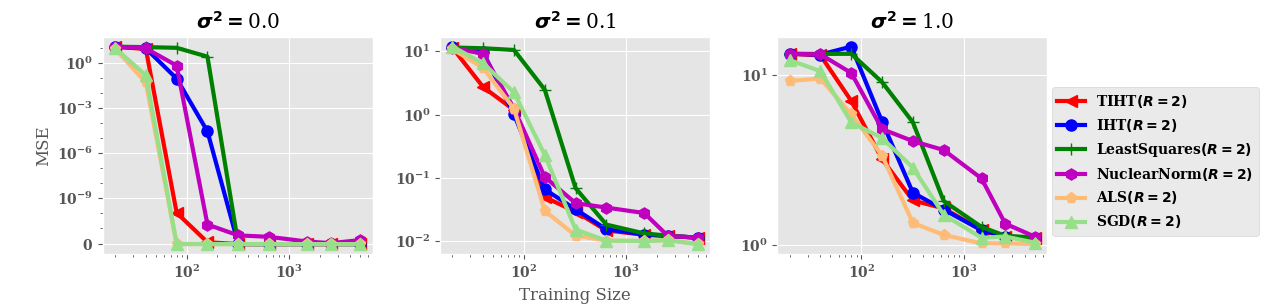}
\end{center}
%\vspace*{-0.5cm}
\caption{~Average MSE as a function of the training set size for  learning a simple arithmetic function with different values of output noise.}
\label{fig:Addition}
\end{figure*}

We run the experiments for different sizes of training data ranging from $N=20$ to $N=5,000$~(we set $N_L=N_{2L}=N_{2L+1}=N$) 
and we compare the different methods in terms of mean squared error~(MSE)
 on a test set of $1,000$ sequences of length $6$ generated in the same
way as the training data~(note that the training data only contains sequences of length up to $5$). The IHT/TIHT methods sometimes returned aberrant models~(due to numerical instabilities), we used the following scheme to circumvent this issue: 
when the training MSE of the hypothesis was greater than the one of the zero function,
the zero function was returned instead (we applied this scheme to all other methods in the experiments). For the gradient descent approach, we use the autograd method from Pytorch with the Adam~\citep{kingma2014adam} optimizer with learning rate $0.001$. 

\subsubsection{Results}
The results are reported in Figure~\ref{fig:RandomRNN} and~\ref{fig:Addition} where we see that all recovery methods lead to consistent estimates of the
target function given enough training data. This is the case even in the presence of noise~(in which case more samples are needed to achieve the same accuracy, as expected).
We can also see that \texttt{TIHT} and \texttt{ALS} tend to be overall more sample efficient than the other methods~(especially with noisy data), showing
that taking the low rank structure of the Hankel tensors into account is profitable. Moreover, \texttt{TIHT} tends to perform better than its matrix counterpart, 
confirming our intuition that leveraging the tensor train 
structure is beneficial.
% \revv{\sout{The results are reported in Figure~\ref{fig:RandomRNN} and~\ref{fig:Addition} where we see that all recovery methods lead to consistent estimates of the
% target function given enough training data. This is the case even in the presence of noise~(in which case more samples are needed to achieve the same accuracy, as expected).
% We can also see that \texttt{TIHT} and \texttt{ALS} tend to be overall more sample efficient than the other methods~(especially with noisy data), showing
% that taking the low rank structure of the Hankel tensors into account is profitable. Moreover, \texttt{TIHT} tends to perform better than its matrix counterpart, 
% confirming our intuition that leveraging the tensor train 
% structure is beneficial. 
% }}

We also found that using gradient descent to refine the learned 2-RNN model often leads to a performance boost. In Figure~\ref{fig:fine_tune_RandomRNN} and Figure~\ref{fig:fine_tune_Addition} we show the advantage MSE obtained by fine tuning the learned 2-RNN using gradient descent. We use Pytorch to implement the fine-tuning process with the Adam optimizer with a learning rate of 0.0001.  Fine tuning helps the model to converge to the optimal solution with less  data, resulting in a more sample efficient approach.
Lastly, we briefly mention than on these two tasks, previous experiments showed that both non-linear and linear recurrent neural network architectures trained with the back-propagation algorithm performed significantly worse than the spectral learning based learning algorithm we propose~(see~\cite{rabusseau2019connecting}).

\begin{figure*}
\begin{center}
%\vspace*{-0.25cm}
\hspace*{-1cm}\includegraphics[width=1.\textwidth]{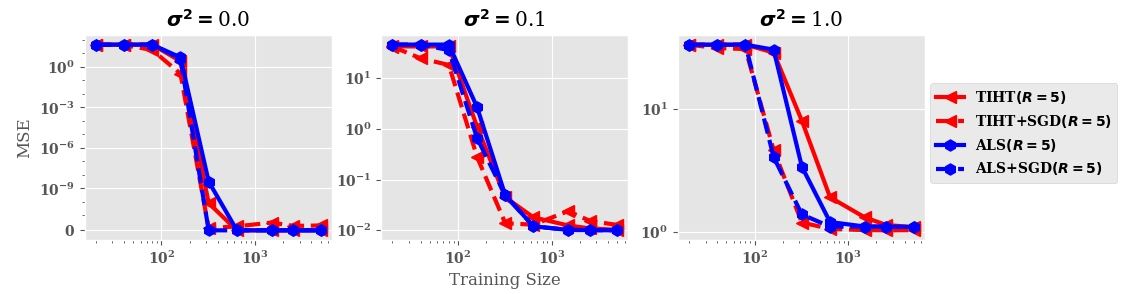}
\end{center}
%\vspace*{-0.5cm}
\caption{~Performance comparison between vanilla methods and fine-tuned methods on Random 2-RNN problem.}
\label{fig:fine_tune_RandomRNN}
\end{figure*}

\begin{figure*}
\begin{center}
%\vspace*{-0.25cm}
\hspace*{-1cm}\includegraphics[width=1.\textwidth]{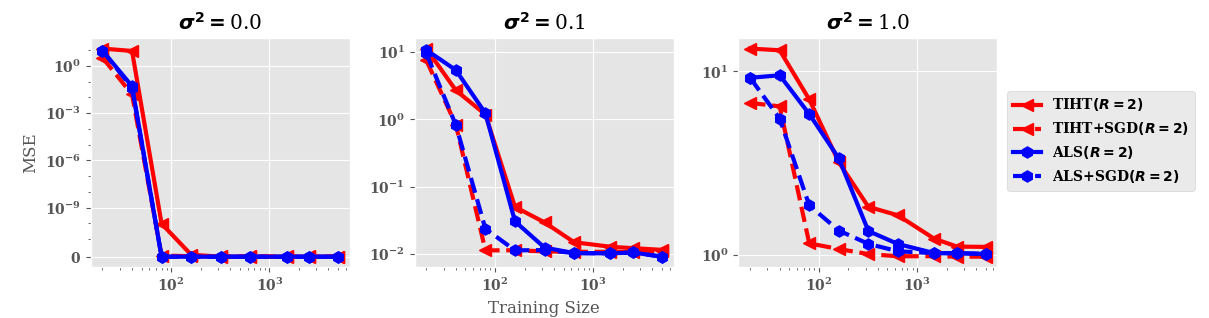}
\end{center}
%\vspace*{-0.5cm}
\caption{~Performance comparison between vanilla methods and fine-tuned methods on Addition problem.}
\label{fig:fine_tune_Addition}
\end{figure*}

\subsubsection{Running time analysis}

By directly recovering the Hankel tensor in its tensor train form, \texttt{ALS} and \texttt{SGD} significantly reduces the computation time needed to recover the Hankel tensor. In Figure~\ref{fig:runtime}, we report the computation time of the different Hankel recovery methods for Hankel tensors with various length~($L$). The experiment is performed with 1,000 examples for the addition problem and all  iterative methods (excluding \texttt{OLS}) are stopped when reaching the same fixed training accuracy. In the figure, there is a clear reduction in computation time for both \texttt{ALS} and \texttt{SGD} compared to other methods, which is expected. More specifically, these methods have much smaller computation time growth rate with respect to the length $L$ compared to the matrix-based methods. This is especially beneficial when dealing with data that exhibits long term dependencies of the input variables. In comparison to the Hankel tensor recovery time, the spectral learning step takes significantly less time, typically within a second. However, one important note is that if the length $L$ gets larger, directly performing spectral learning on the matrix form of the Hankel tensor may not be possible due to the curse of dimensionality. Therefore, under this circumstance, one should directly perform the spectral learning algorithm in its tensor train form as described in Section~\ref{sec:spectral.learning.TT}.

 To demonstrate the benefits of performing the spectral learning algorithm in the TT format~(as described in Section~\ref{sec:spectral.learning.TT}), we perform an additional experiment showing that 
%Moreover, if we directly recover the Hankel tensor in its TT format, we can also leverage the TT structure of the Hankel tensor in the spectral learning routine: instead of converting the TT format of the Hankel tensor to its matrix form and then perform the classic spectral learning, we can directly perform the spectral learning in the TT format. This workaround, as mentioned in Section~\ref{sec:spectral.learning.TT}, 
leveraging the TT format allows one to save significant amount of computation time and memory resources in the spectral learning phase, especially when the corresponding Hankel tensor is large (i.e. large length and input dimension). 
In Figure~\ref{fig:spectral time} we compare the running time of the spectral learning phase (after recovering the Hankel tensors) in the  matrix and TT formats, where the latter leverages the TT structure in the spectral learning routine. We randomly generate 100,000 input-output examples using a Random 2-RNN with 3 states, input dimension 5 and output dimension 1. We use ALS to  recover the Hankel tensors in the TT format and compare the running time of the spectral learning in the TT format with the time needed to perform the classical spectral learning algorithm after reshaping the Hankel tensors in matrices~(note that the time needed to convert the TT Hankel tensors into the corresponding Hankel matrices is not counted towards the matrix spectral learning time). %then perform TT spectral learning \tianyu{(or should I just use "and then we recover the WFA leveraging this TT format"?)} and obtain its computation time. To obtain the classic spectral learning computation time, we reshape the ALS-learned  Hankel tensor into its matrix form and perform the standard spectral learning routine \tianyu{(or "and recover the WFA based on the SVD of the obtained Hankel matrix")?}. 
In Figure~\ref{fig:spectral time}, we report the time needed to recover the Hankel tensors from data~(\texttt{Hankel\_ALS}) and the time to recover the WFA in both the matrix and TT formats. 
One can observe that although classic matrix-based spectral learning is significantly faster than the TT-based one when the length  is relatively small, the running time of the matrix method grows exponentially with the length while the one of the TT method is linear. For example, when the length equals to 12, TT spectral learning is more than 1,000 times faster than the classic spectral learning. %Moreover, even when combining the Hankel recovery time, as shown in Figure~\ref{fig:spectral time}, TT-based spectral learning is still significantly faster with long trajectories. 
This computation time gap significantly shows the benefit of leveraging TT format in the spectral learning phase. One remark is that other types of Hankel tensor recovery methods that we mentioned (i.e. TIHT, IHT, LeastSquares and NuclearNorm) fail to scale in this setup, due to excessive memory required by these algorithms in preparing the training data. 

\begin{figure*}[t]

     \centering
     \begin{subfigure}[t]{0.48\textwidth}
         \centering
         \includegraphics[width=\textwidth]{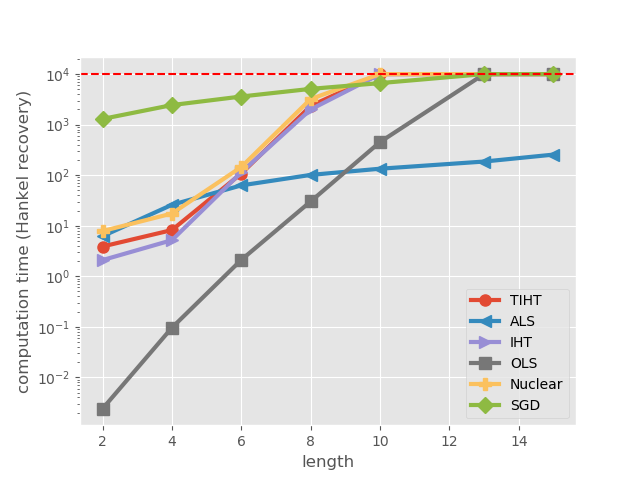}
        %  \vspace{0.6cm}
         \caption{Computation time comparison between different Hankel recovery methods on addition problem with 1,000 data. Computation time is capped at 10,000 seconds for all methods (the red dashed line).}
         
         \label{fig:runtime}
     \end{subfigure}\hspace{0.02\textwidth}
     \begin{subfigure}[t]{0.48\textwidth}
         \centering
         
         \includegraphics[width=\textwidth]{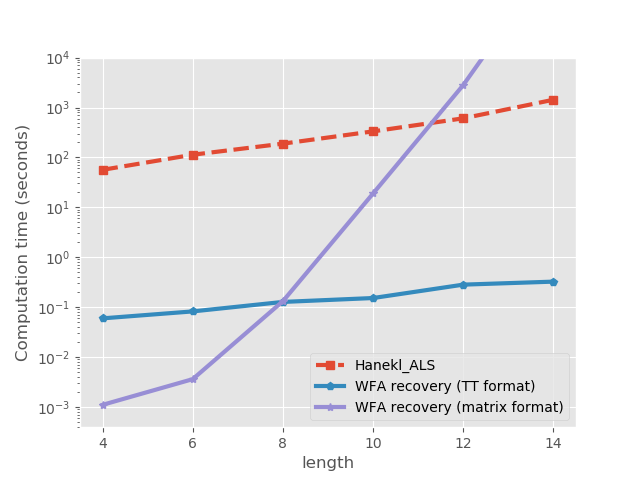}
         \caption{Computation time comparison between TT-spectral learning and classic matrix based spectral learning on Random 2-RNN problem with 100,000 examples. The ground truth 2-RNN has 3 states, with input dimension 5 and output dimension 1.}
         \label{fig:spectral time}
     \end{subfigure}

\caption{Running time comparison}
\label{fig:running time comparison}
\end{figure*}

In addition, directly recovering the Hankel tensors and performing spectral learning in the TT format  also helps drastically reduce the memory resources. As an illustration, we compare the size of the Hankel matrix in  the TT format and the matrix format in Table~\ref{tab:Hankel size}. As one can see the size of the matrix version of the Hankel grows exponentially w.r.t the length while the TT Hankel size grows linearly. This also echoes with the computation time for these two methods.

\begin{table}[h]
\centering
\begin{tabular}{lllllll}
\hline
\multicolumn{1}{l|}{Length}                & 4        & 6        & 8        & 10       & 12       & 14       \\ \hline
\multicolumn{1}{l|}{TT Hankel Size (GB)}   & 2.68e-06 & 4.69e-06 & 6.70e-06 & 8.71e-06 & 1.07e-05 & 1.27e-05 \\
\multicolumn{1}{l|}{Matrix Hankel Size (GB)} & 1.40e-05 & 3.49e-04 & 8.73e-03 & 2.20e-01 & 5.40e-01 & 136 \\\hline
\end{tabular}
\caption{Memory size of the Hankel tensor $\Ht^{(\ell)}$ for the random 2-RNN problem~(see Figure~\ref{fig:spectral time}) in both TT and matrix formats. }
\label{tab:Hankel size}
\end{table}
% \begin{figure*}[ht]
% \begin{center}
% %\vspace*{-0.25cm}
% \hspace*{-1cm}\includegraphics[width=0.5\textwidth]{Runtime.png}
% \end{center}
% %\vspace*{-0.5cm}
% \caption{~Computation time comparison between different Hankel recovery methods on addition problem with 1,000 data. Computation time is capped at 10,000 seconds for all methods (the red dashed line).}
% \label{fig:runtime}
% \end{figure*}

% \revv{\sout{We also found that using gradient descent to refine the learned 2-RNN model often leads to a performance boost. In Figure~\ref{fig:fine_tune_RandomRNN} and Figure~\ref{fig:fine_tune_Addition} we show the advantage MSE obtained by fine tuning the learned 2-RNN using gradient descent. We use Pytorch to implement the fine-tuning process with the Adam optimizer with a learning rate of 0.0001.  Fine tuning helps the model to converge to the optimal solution with less  data, resulting in a more sample efficient approach.} }

\subsection{Real world data}
In addition to the synthetic data experiments presented above, we conduct experiments on the wind speed data from TUDelft \footnote{http://weather.tudelft.nl/csv/}. For this experiment, to compare with existing results, we specifically use the data from Rijnhaven station as described in~\citet{lin2016short}, which proposed a regression automata model and performed various experiments on the wind speed dataset. The data contains wind speed and related information at the Rijnhaven station from  2013-04-22 at 14:55:00 to 2018-10-20 at 11:40:00 and was collected every five minutes. To compare with the results in~\citep{lin2016short}, we strictly followed the data preprocessing procedure described in the paper. We use the data from 2013-04-23 to 2015-10-12 as training data and the rest as our testing data.  The paper uses SAX as a preprocessing method to discretize the data. However, as there is no need to discretize data for our algorithm, we did not perform this procedure. For our method, we set the length $L = 3$ and we use a window size of 6 to predict the future values at test time. We calculate hourly averages of the wind speed, and predict one/three/six hour(s) ahead, as in~\citep{lin2016short}. In this experiment, our model only predicts the next hour from the past 6 observations. To make $k$-hour-ahead prediction, we use the forecast of the model itself  as input and bootstrap from it. For our methods we use a linear 2-RNN with 10 states. Averages over 5 runs of this experiment for one-hour-ahead, three-hour-ahead, six-hour-ahead prediction error can be found in Table~\ref{one_hour}, \ref{three_hours} and \ref{six_hours}. The results for RA, RNN and persistence are taken directly from~\citep{lin2016short}. 
 
The results of this experiment are presented in Table~\ref{one_hour}-\ref{six_hours} where we can see that while TIHT+SGD performs slightly worse than ARIMA and RA for one-hour-ahead prediction, it outperforms all other methods for three-hours and six-hours ahead predictions~(and the superiority w.r.t. other methods increases as the prediction horizon gets longer). One important note is that although ALS and ALS+SGD is slightly under-performing compared to TIHT and TIHT+SGD, the computation time has been significantly reduced for ALS  by a factor of 5~(TIHT takes 3,542 seconds while ALS takes 804 seconds).

\begin{table}[h]
\caption{~One-hour-ahead Speed Prediction Performance Comparisons}
\centering
%\resizebox{\textwidth}{!}{%
\begin{tabular}{c|cccccccc}
Method & TIHT     & \begin{tabular}[c]{@{}c@{}}TIHT \\ +SGD\end{tabular} &ALS&\begin{tabular}[c]{@{}c@{}}ALS \\ +SGD\end{tabular}& \begin{tabular}[c]{@{}c@{}}Regression \\ Automata\end{tabular} & ARIMA & RNN & Persistence\\ \hline
RMSE   & 0.573  & 0.519&0.586& 0.522   & 0.500    & \textbf{0.496 }& 0.606 & 0.508                                                    \\
MAPE   & 21.35  & 18.79&22.12& 19.01    & \textbf{18.58}   & 18.74 & 24.48 & 18.61                                                    \\
MAE    & 0.412  & 0.376&0.423&0.388    & 0.363    & \textbf{0.361} & 0.471 & 0.367                                                    
\end{tabular}%
%}
\label{one_hour}
\end{table}
\begin{table}[h]
\caption{~Three-hour-ahead Speed Prediction Performance Comparisons}
\centering
\begin{tabular}{c|cccccccc}
Method & TIHT     & \begin{tabular}[c]{@{}c@{}}TIHT \\ +SGD\end{tabular} &ALS&\begin{tabular}[c]{@{}c@{}}ALS \\ +SGD\end{tabular}& \begin{tabular}[c]{@{}c@{}}Regression \\ Automata\end{tabular} & ARIMA & RNN & Persistence\\ \hline
RMSE   & 0.868  & \textbf{0.854}&0.875&0.864    & 0.872  & 0.882 & 1.002 & 0.893                                                       \\
MAPE   & 33.98  & \textbf{31.70}&34.67&32.13    & 32.52 & 33.165 & 37.24 & 33.29                                                      \\
MAE    & 0.632  & \textbf{0.624}&0.648&0.628    & 0.632 &0.642 & 0.764 & 0.649                                                       
\end{tabular}
\label{three_hours}
\end{table}
\begin{table}[h]
\caption{~Six-hour-ahead Speed Prediction Performance Comparisons}
\centering
\begin{tabular}{c|cccccccc}
Method & TIHT     & \begin{tabular}[c]{@{}c@{}}TIHT \\ +SGD\end{tabular} &ALS&\begin{tabular}[c]{@{}c@{}}ALS \\ +SGD\end{tabular}& \begin{tabular}[c]{@{}c@{}}Regression \\ Automata\end{tabular} & ARIMA & RNN & Persistence\\ \hline
RMSE   & 1.234  & \textbf{1.145} &1.283&1.128   & 1.205  & 1.227 & 1.261 & 1.234                                                      \\
MAPE   & 49.08  & \textbf{44.88}&47.65&45.03    & 46.809  &48.02 & 47.03 & 48.11                                                      \\
MAE    & 0.940  & \textbf{0.865}&0.932&0.869    & 0.898  & 0.919 & 0.944 & 0.923            
\end{tabular}
\label{six_hours}
\end{table}
% \paragraph{Beijing air pollution data}
% \begin{table}[h]
% \caption{~Three-hour-ahead PM2.5 prediction}
% \centering
% \begin{tabular}{c|ccccc}
% Method & TIHT+SGD   & ALS+SGD & LR & LSTM & MLP \\ \hline
% RMSE   & \textbf{38.89}  & 40.15    & 39.03  & 39.83 & 39.79                                                       \\
% MAPE   & 45.21  & 47.12    & \textbf{44.42}  &50.01 & 48.12                                                    \\
% MAE    & \textbf{22.16}  & 22.94    & 22.68  & 22.85 & 22.88             
% \end{tabular}
% \label{six_hours}
% \end{table}

% \begin{itemize}
%     \item Describe the data (wind for now)
%     \item Show the comparison between TIHT+SGD and the rests
% \end{itemize}

% Methods for Hankel tensor recovery:
% \begin{itemize}
% \item Algorithms in the dense/full format
% \begin{itemize}
%     \item Least Squares
%     \item Nuclear Norm
%     \item IHT
%     \item TIHT
% \end{itemize}
%     \item Algorithms in the TT-format
% \begin{itemize}
%     \item TIHT in TT format (should we include this one?)
%     \item Gradient descent
%     \item ALS
% \end{itemize}
% \end{itemize}

% The first experiment 

% Experiments:
% \begin{itemize}
%     \item Synthetic - only "dense" algorithms - 2 plots: hankel recovery quality + spectral learning [random 2RNN (+ addition)]
%     \item copy-paste: showcase padding + effect of length
%      \item Running times: all methods, cumulative bar plots (hankel recovery / spectral) [random 2RNN + addition + copy-paste] 
% \end{itemize}

\section{Conclusion and Future Directions}

We proposed the first provable learning algorithm for second-order RNN with linear activation functions:
we showed that linear 2-RNN are a natural extension of vv-WFA to the setting of input sequences of \emph{continuous vectors}~(rather than
discrete symbol) and we extended the vv-WFA spectral learning
algorithm to this setting. We also presented novel connections between WFA and tensor networks, showing that the computation of a WFA is intrinsically linked with the tensor train decomposition. We leveraged this connection to  adapt the standard spectral learning algorithm to the tensor train format, allowing one to scale up the spectral algorithm to exponentially large sub-blocks of the Hankel matrix.

We believe that the results presented in this paper open a number of exciting and promising research directions on both the
theoretical and practical perspectives. We first plan to use the spectral learning estimate as a starting point for
gradient based methods to train non-linear 2-RNN. More precisely, linear 2-RNN can be thought of as 2-RNN using LeakyRelu activation functions with negative slope $1$, therefore one could use 
a linear 2-RNN as initialization before gradually reducing the negative slope parameter during training. The extension of the spectral method to linear 2-RNN also
opens the door to scaling up the classical spectral algorithm to problems with large discrete alphabets~(which is a known caveat of the spectral algorithm for WFA) since
it allows one to use low dimensional embeddings of large vocabularies~(using \eg word2vec or latent semantic analysis). From the theoretical perspective, we plan on 
deriving learning guarantees for  linear 2-RNN in the noisy setting~(\eg using the PAC learnability framework). Even though it is intuitive that such guarantees should hold~(given
the continuity of all operations used in our algorithm), we believe that such an analysis may entail results of independent interest. In particular, analogously to the
matrix case  studied in~\citep{cai2015rop}, obtaining optimal convergence rates for the recovery of the low TT-rank Hankel tensors from rank one measurements is an interesting
direction; such a result could for example allow one to improve the generalization bounds provided in~\citep{balle2012spectral} for spectral learning of general WFA. Lastly, establishing other equivalence results between classical classes of formal languages and functions computed by recurrent architectures is a worthwhile endeavor; such equivalence results give a novel light on classical models from theoretical computer science and linguistics while at the same time sparkling original perspectives on modern machine learning architectures. A first direction could be to establish connections between weighted tree automata and  tree-structured neural models such as recursive tensor neural networks~\citep{socher2013recursive,socher2013reasoning}.

\subsection*{\textbf{Acknowledgements}}
This research is supported by the Canadian Institute for Advanced Research (CIFAR AI chair program) and Fonds de Recherche du Québec – Nature et technologies (no. 271273). We also thank Compute Canada and Calcul Québec for the computing resources.

% BibTeX users please use one of
\bibliographystyle{spbasic}      % basic style, author-year citations
\bibliography{main}   % name your BibTeX data base

\end{document}